\theoremstyle{plain}
\newtheorem{lemma}{\textbf{Lemma}}
\newtheorem{theorem}{\textbf{Theorem}}\setcounter{theorem}{0}
\newtheorem{proposition}{\textbf{Proposition}}\setcounter{theorem}{0}
\theoremstyle{definition}
\newcommand\blfootnote[1]{%
  \begingroup
  \renewcommand\thefootnote{}\footnote{#1}%
  \addtocounter{footnote}{-1}%
  \endgroup
}
 \definecolor{citecolor}{HTML}{0071BC}
\definecolor{cm}{RGB}{0,0,200}
\definecolor{purple}{RGB}{200,0,200}
\DeclareMathOperator*{\argmin}{arg\,min}
\title{On Optimal Caching and Model Multiplexing for Large Model Inference}
\author{
Banghua Zhu$^{\dagger}$ \quad
Ying Sheng$^{\diamond}$ \quad
Lianmin Zheng$^{\dagger}$ \quad \\ 
Clark Barrett$^{\diamond}$ \quad 
Michael I. Jordan$^{\dagger, \ddagger}$ 
\quad
Jiantao Jiao$^{\dagger, \ddagger}$ \blfootnote{$^\dagger$ Department of Electrical Engineering and Computer Sciences, UC Berkeley \quad 
$^\ddagger$ Department of Statistics, UC Berkeley \quad 
$\diamond$ Computer Science Department, Stanford University}}
\begin{document}

\maketitle

\begin{abstract}
Large Language Models (LLMs) and other large foundation models have achieved noteworthy success, but their size exacerbates existing resource consumption and latency challenges. In particular, the large-scale deployment of these models is hindered by the significant resource requirements during inference. In this paper, we study  two  approaches for mitigating these challenges: employing a cache to store previous queries and learning a model multiplexer to choose from an ensemble of models for query processing.

Theoretically, we provide an optimal algorithm for jointly optimizing both approaches  to reduce the inference cost in both offline and online tabular settings. 
By combining a caching algorithm, namely Greedy Dual Size with Frequency (GDSF) or Least Expected Cost (LEC), with a model multiplexer, we achieve optimal rates in both offline and online settings. Empirically, simulations show that the combination of our caching and model multiplexing algorithms greatly improves over the baselines, with up to $50\times$ improvement over the baseline when the ratio between the maximum cost and minimum cost is $100$.  Experiments on real datasets show a $4.3\times$ improvement in FLOPs over the baseline when the ratio for FLOPs is $10$, and a $1.8\times$ improvement in latency when the ratio for average latency is $1.85$.
\end{abstract}

\section{Introduction}

The recent emergence of Large Language Models (LLMs) and foundation models has significantly increased the capabilities of AI systems~\citep{bubeck2023sparks, nori2023capabilities, ziegler2019fine, ouyang2022training, openai2023gpt4, beeching2023stackllama, chowdhery2022palm, wei2022emergent, google2023palm2}. This progress comes at a cost, however, of increased resource consumption and latency during both training and inference, presenting challenges not only in real-world deployment but also in terms of environmental impact and energy usage~\citep{sharir2020cost,patterson2021carbon, bommasani2022opportunities}. For instance, LLM-based chatbots typically consist of large transformer-based networks with parameter counts ranging from one to several hundred billion~\citep{zhou2023comprehensive}. Moreover, the auto-regressive nature of LLMs exacerbates the issue of latency and resource consumption because the model can only generate one token at a time. Thus, compared to traditional AI-powered services, language model inference costs are much higher and the latency is significantly longer, making it nearly impossible to process each query using LLMs in high-throughput query systems such as search engines.

In this paper, we explore two simple yet effective strategies to mitigate this problem: (1) employing a caching system to store previous queries, and (2) developing a model multiplexer to choose the most appropriate model from a set of models for processing the queries. The general workflow of our proposed LLM-based inference system is shown in Figure~\ref{fig:flow}: upon receiving a query or prompt, we initially check if it can be retrieved from the cache. If the query is not found in the cache, we employ the model multiplexer to determine which model should be used for processing it first, based on the estimated cost for both models.

The choice of cost function and models can vary based on the goal. One measure of cost, for example, could be floating point operations (FLOPs).  Other alternatives could include the number of API calls as a  measure of resource consumption, latency as a measure of time consumption, or a score provided by a user as a measure of user satisfaction. The cost could also be a weighted sum of multiple factors.
For the models, a natural choice would be to have a small and a large model, where the small model costs less and is also less accurate, and the large model has a higher cost and also provides higher accuracy. Another alternative would be to have models with expertise in different areas, i.e., each model has high accuracy in its own area of  expertise. We provide more discussion in Appendix~\ref{sec:discussion}.


\begin{figure}[!htbp]
    \centering
    \includegraphics[width=0.7\linewidth]{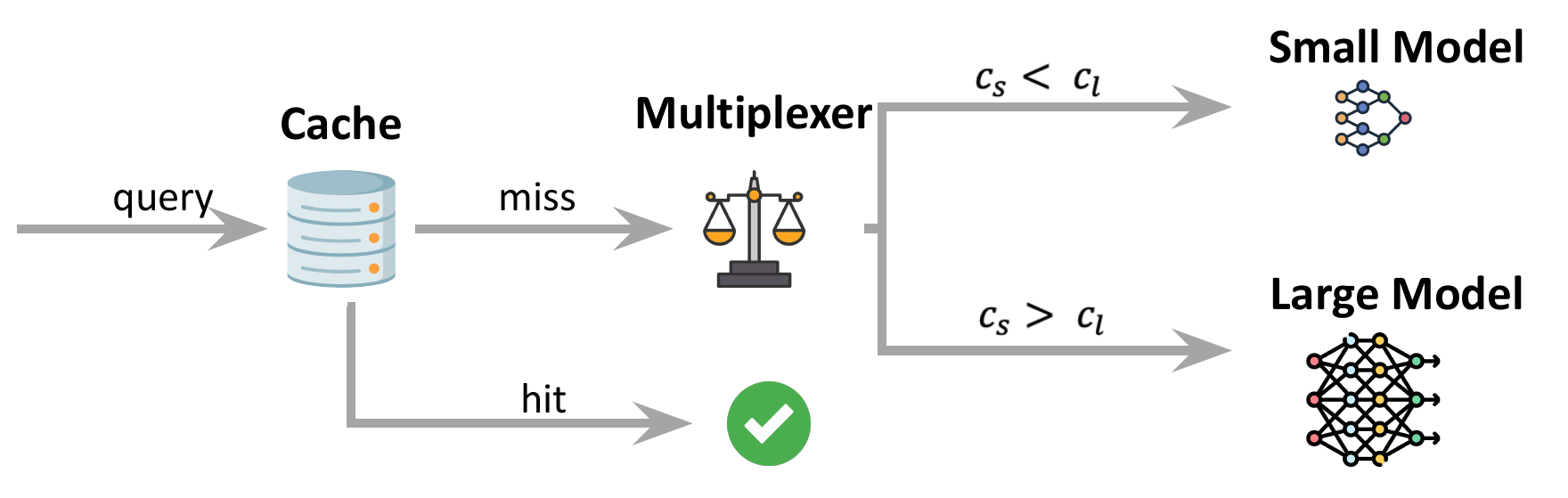}
    \caption{A workflow for LLM-based inference with caching and model multiplexing.} 
    \label{fig:flow}
\end{figure}

There is a long history of existing literature on caching algorithms, with prominent applications including computer architecture and  web retrieval~\citep{smith1982cache, wang1999survey, kumar2016overview}.  Existing caching algorithms deal with queries with different frequencies and cost, and must also provide guidelines for choosing the cache size. In addition to these well-known difficulties, the use of caching for LLMs raises new  challenges, including:
\begin{itemize}
    \item \textbf{The need for fuzzy search.}  Since the prompt lies in a discrete space that is exponentially large with respect to the token size, it is impossible to match and save all distinct queries. Thus, to be at all useful, approximate matching and grouping is required when retrieving queries saved in the cache.
    \item \textbf{The randomness of the cost.}
    The cost for processing each query is a random variable that depends on the query and has a large variance due to  the auto-regressive generation procedure and the difference in the length and quality of generated responses. When combined with the long-tailed distribution of the query frequency, the estimation of the cost requires a non-trivial algorithm design.
    \item \textbf{The effect of model multiplexing.}  When the cache system is combined with the model multiplexer, the estimation of cost must change accordingly to take into consideration the different costs induced by various models.
\end{itemize}
For the fuzzy search problem, semantic search or vector-embedding-based ideas provide a systematic solution that includes embedding extraction and matching algorithms~\citep{bast2016semantic, chang2020pretraining, kamalloo2023evaluating}. To simplify the problem, we assume that there exists some semantic search oracle that can group the  prompts with the same semantic meaning and that the total cache size is limited by the number of queries, ignoring the difference in cache size between each individual query and response.

The remainder of this paper is organized as follows.
In Section \ref{sec:formulation}, we formally define the pipeline of caching and model multiplexing. 
In Section \ref{sec:largeonly}, we study the optimality of the Least Expected Cost (LEC) caching strategy, which estimates the frequency and cost of processing each query, and evicts the one with the least estimated expected cost when there is only one model to call. In section \ref{sec:both}, we consider the case when we have access to two models, and jointly design optimal caching and model multiplexer. In both sections, we start by assuming there are infinite samples and then analyze the offline and online learning cases where the cost and frequency need to be learned from data.  
The experimental results are presented in Section \ref{sec:results}. We discuss the potential choices of cost, model, and output in the real world in Appendix~\ref{sec:discussion}.   We provide  a brief discussion of the generalization to variable cache sizes in Appendix~\ref{app:variable_length} and of the generalization to multi-model multiplexing in Appendix~\ref{app:multiple}. 

\subsection{Related work}
\paragraph{Cache replacement algorithms}
Traditional cache replacement algorithms investigate optimal ways to cache queries with  different frequencies, costs, and cache sizes. 
To address varying frequencies, a standard approach is to use a Least Frequently Used (LFU) or Least Recently Used (LRU) cache eviction strategy~\citep{lee2001lrfu}.  These have been proven to be optimal for both adversarial and stochastic queries~\citep{stallings2012operating, bura2021learning}.  Caching has also been combined with machine learning advice and online learning analysis in the literature~\citep{chang2018learn, shuja2021applying, jiang2019multi, he2017deep, mukhopadhyay2021online, faizal2023regret}. 
When varying costs and varying frequencies exist simultaneously,  \citet{jin2000popularity, arlitt2000evaluating} propose and study the Greedy Dual-Size with Frequency (GDSF) replacement algorithm, which takes both frequency and cost into consideration. \citet{bahn2005web} proposes the Least Expected Cost (LEC) algorithm, which is similar to GDSF, except that it estimates frequency from data. Our work extends this idea by attempting to learn a model for both frequency and cost from data.  Moreover we explore the statistical optimality of these algorithms in both offline and online settings. We also investigate  combining caching algorithms with model multiplexing in order to boost performance. 

\paragraph{Acceleration of LLM inference}
Much effort has been devoted to reducing the cost and latency of LLMs during inference. For example, post-training quantization-based approaches aim to compress the model size by using lower-precision arithmetic without losing too much accuracy~\citep{gholami2021survey, frantar2023gptq}. Early-exit frameworks aim to utilize the output in the middle decoder blocks so that only a small fraction of decoder blocks are called when processing a query~\citep{bakhtiarnia2022singlelayer, schuster2022confident}. The Mixture of Experts approach designs a gating function that only assigns a small fraction of the network for each query~\citep{fedus2022review}. Embedding recycling caches activations from an intermediate
layer of a pre-trained model to accelerate the training and inference procedure~\citep{du2020general, wei2022flexible, saadfalcon2023embedding}. LLM cascade starts with the smallest model and continues to call larger models if the output is not acceptable~\citep{chen2023frugalgpt}. The big little transformer decoder framework uses a smaller model to generate a draft response and calls the large model to identify the unreliable tokens and perform correction~\citep{kim2023big}. Similar ideas have been combined with speculative sampling to guarantee that the output remains the same in distribution as that of the large models~\citep{chen2023accelerating, leviathan2022fast}. 

\section{Formulation}\label{sec:formulation}
We formalize the workflow in Figure~\ref{fig:flow}. 
Consider the set of (finite) prompts / queries $\mathcal{Q}\subset \mathbb{R}^d$. In the $t$-th  round, a query $q_t\in\mathcal{Q}$ is   sampled from a fixed  population distribution $P\in \Delta(\mathcal{Q})$. We maintain a small set of cache $\mathcal{L}_t\subset \mathcal{Q}$ with $|\mathcal{L}_t|\leq L$. We say the query hits the cache if  the query satisfies $q_t\in\mathcal{L}_t$. When the query hits the cache, the incurred cost is zero. When the query does not hit the cache, we choose among the existing models to process the query. 

In the processing stage, we first describe the setting of  caching without model multiplexing, and extend it to the case of  caching with model multiplexing. 

\subsection{Caching without model multiplexing}

In the case when we only have one model, let  $C_l(q)$ denote the random variable of the cost when processing the query with the   model. Assume that $C_l(q)$ is supported on $[B_1, B_2]$ with $B_2>B_1>0$ being the upper and lower bounds for the cost. Let $c_l^\star(q) = \mathbb{E}[C_l(q)]$ be the expected true cost of processing the query $q$. 
The cost for a given query $q$ and cache $\mathcal{L}$ can be written as:
\begin{align*}
    \mathsf{cost}(q,\mathcal{L}) & = \mathbbm{1}(q\not\in \mathcal{L}) \mathbb{E}[C_l(q)]   =  \mathbbm{1}(q\not\in \mathcal{L})c_l^\star(q).
\end{align*}
By taking the expectation over the distribution $q$, we have the expected cost as 
\begin{align*}
    \mathsf{cost}(\mathcal{L}) & = \sum_q P(q) \mathbbm{1}(q\not\in \mathcal{L}) c_l^\star(q).
\end{align*}
In the offline learning setting, we collect an offline dataset and hope to learn a caching policy $\widehat{\mathcal{L}}$ such that $ \mathsf{cost}(\widehat{\mathcal{L}}) $ is minimized. 

In the online setting, the query comes in a streaming fashion.  At the beginning of each round, we receive a query $q_t$. If the query misses the current cache $\mathcal{L}_t$, we let the model process the query and receive a cost $c_t\sim \mathbb{P}_{C_l}$. Then we can choose to  update the cache $\mathcal{L}_t$ by adding the current query and response to the cache, and replacing one of the existing cached items if the cache $\mathcal{L}_t$ is full. If the query hits the cache $q_t\in\mathcal{L}_t$, then the cost for this round is set to zero with no more observations.    In this case, we are interested in characterizing the average difference in the cost throughout the execution of the online learning process. This can be characterized by the regret:
\begin{align*}
    \mathsf{Regret}_{\mathsf{cache}}(T) = \sum_{t=1}^T \mathbb{E}[\mathsf{cost}(q_t, \mathcal{L}_t) - \mathsf{cost}(q_t, \mathcal{L}^\star)].  
\end{align*}

\subsection{Caching with model multiplexing}
\label{sec:model_selection_intro}
For the simplicity of the notation, we focus on the case of selecting from a small model and a large model,\footnote{Note that although we name the models as small and large models, we do not impose any assumption on the relationship between their costs. Moreover, the model size and cost function can be arbitrary for both models.} and discuss how it can be generalized to the case of selecting from multiple models in Appendix~\ref{app:multiple}. 
Let $C_s(q)$ denote the random variable of the cost when processing the query with the small model, and $C_l(q)$ denote the random variable of the cost when processing the query with the large model. We assume that both random variables are supported on $[B_1,B_2]$. We observe $i.i.d.$ draws of the random variables $C_s(q)$ when executing the small model, and $C_l(q)$ when executing the large model. Denote the expected cost as $c_s^\star(q) = \mathbb{E}[C_s(q)]$ and   $c_l^\star(q) = \mathbb{E}[C_l(q)]$. 

Let $\pi: \mathcal{Q}\mapsto [0,1]$ be the (possibly random) model multiplexing policy that maps the query $q$ to  values in $[0, 1]$, where $\pi(q) = 1$ represents that the query is always sent to the small model, and $\pi(q) = 0$ represents the query is always sent to the large model. The randomness in the policy $\pi$ is independent of the cost $C_s(q), C_l(q)$. The total cost can be written as the following function of the query $q$, cache $\mathcal{L}$ and policy $\pi$:
\begin{align*}
    \mathsf{cost}(q, \mathcal{L},\pi) & = \mathbbm{1}(q\not\in \mathcal{L}) \mathbb{E}[C_s(q) \pi(q) +C_l(q)(1-\pi(q))] \\ 
    & =  \mathbbm{1}(q\not\in \mathcal{L})(c_s^\star(q) \pi(q) +c_l^\star(q)(1-\pi(q))).
\end{align*}
By taking the expectation over $q$, we have the expected cost as 
\begin{align*}
    \mathsf{cost}(\mathcal{L},\pi) & = \sum_q P(q) \mathbbm{1}(q\not\in \mathcal{L})(c_s^\star(q) \pi(q) +c_l^\star(q)(1-\pi(q))).
\end{align*}
In the offline learning setting, we collect an offline dataset and hope to learn a caching policy  $\widehat{\mathcal{L}}$  and a multiplexer $\hat \pi$ such that $ \mathsf{cost}(\widehat{\mathcal{L}}, \hat \pi) $ is minimized. In the online setting, we get to update the cache in each round by adding the current query into the cache and evicting the ones in the cache if full. When the query $q_t$ misses the cache in round $t$, we will observe a sample from $C_s(q_t)$ if it is processed by the small model, or a sample from $C_l(q_t)$ if it is processed by the large model. There will be no observations of cost if $q_t$ hits the cache.  We aim at minimizing the regret:
\begin{align*}
    \mathsf{Regret}_{\mathsf{sel}}(T) = \sum_{t=1}^T \mathbb{E}[\mathsf{cost}(q_t, \mathcal{L}_t, \pi_t) - \mathsf{cost}(q_t, \mathcal{L}^\star, \pi^\star)].  
\end{align*}

\section{Optimal Caching without Model multiplexing}\label{sec:largeonly}

\subsection{Population setting}
We start with the population setting where the probability distribution $P$ and the cost $c_l^\star$ are both known. 
In the case with only one model, the optimal caching strategy is the Least Expected Cost (LEC) or Greedy Dual Size with Frequency (GDSF) algorithm:
\begin{align*}
    \mathcal{L}^\star = \mathcal{L}_{\mathsf{LEC}}=   \argmin_{\mathcal{L}: |\mathcal{L}|\leq L} \mathsf{cost}(\mathcal{L})=\argmin_{\mathcal{L}: |\mathcal{L}|\leq L}   \sum_{q\in\mathcal{Q}} P(q)\mathbbm{1}(q\not\in \mathcal{L})  c_l^\star(q).
\end{align*}
The traditional frequency-based caching strategy, including Least Recent Used (LRU) and Least Frequently Used (LFU), aims at  caching  the most frequent queries:
\begin{align*}
     \mathcal{L}_{\mathsf{LFU}} = \argmin_{\mathcal{L}: |\mathcal{L}|\leq L}   \sum_{q\in\mathcal{Q}} P(q)\mathbbm{1}(q\not\in \mathcal{L}).
\end{align*}
We show in Appendix~\ref{app:population_diff} that the ratio  between the cost of LFU and LEC can be as high as $ \frac{\max_{q\in\mathcal{Q}} c_l^\star(q)}{\min_{q\in\mathcal{Q}} c_l^\star(q)}$ in the worst case, which shows that LFU can be highly suboptimal when the cost varies significantly.
 
\subsection{Finite sample setting: Offline learning}

The previous section characterizes the optimal caching strategy in the population setting. 
We now consider the finite-sample offline learning setting, where we hope to produce a cache $\mathcal{L}$ based on prior data such that the introduced cost is minimized. Denote $\mathcal{D}_N = \{(q_1, c_1), \cdots, (q_N,  c_N)\}$,  where $q_i$ is sampled from the distribution $P(\cdot)$, and $c_i$ is a sample from random variable $C_l(q_i)$.
We consider estimating $P, c_l^\star$ from oracles
$\hat P = \mathsf{DenEstOracle}(q_1,\cdots, q_N)$,  $\hat c_l(q) = \mathsf{RegressionOracle}(\mathcal{D}_N)$. In practice, one may remove  the last layer of the pre-trained language model and concatenate it with a linear head and fine-tune the model as the estimator. For theoretical analysis, we focus on the tabular case, where we set both $\hat P$ and $\hat c_l(q)$ to be the plug-in estimator:
\begin{align}
    \hat P(q) & = \frac{\sum_{i=1}^N \mathbbm{1}(q_i = q)}{N},  \label{eq.p_oracle_lonly}\\
    \hat c_l(q) & = \begin{cases} \frac{\sum_{i=1}^N \mathbbm{1}(q_i = q) c_i}{\sum_{i=1}^N \mathbbm{1}(q_i = q)},  & \text{if} \sum_{i=1}^N \mathbbm{1}(q_i = q)>0 \\
   B_1,  & \text{if} \sum_{i=1}^N \mathbbm{1}(q_i = q) = 0. \end{cases} \label{eq.cl_oracle_lonly}
\end{align}
In practice, the distribution of $q$ may have 
 a long tail. Although the estimation of $P(q)$ is uniformly good for all $q$, the estimation of $c^\star(q)$ can be bad for the queries that are visited less. 
To select the maximum $L$ elements from the imbalanced samples, we  compensate the plug-in estimator by introducing pessimism~\citep{rashidinejad2021bridging, jin2021pessimism}\footnote{If we impose a uniform lower bound on the probability $P(q)$, then the pessimism   can be replaced with the plug-in estimator. However, it is usually not the case in practice since $P(q)$ usually comes with  a long tail.  }. As we show in Lemma~\ref{lem:lower_bound_p},  the true frequency for any query $q\in\mathcal{L}^\star$ is lower bounded by some constant that depends on $B_1, B_2, |\mathcal{Q}|$. Thus the pessimism helps eliminate those less visited queries in the long tail of the distribution and encourages caching the queries in $\mathcal{L}^\star$.  The lower-confidence-bound based estimator is: 
\begin{align*} 
\hat{\mathcal{L}}  & = 
\argmin_{\mathcal{L}: |\mathcal{L}|\leq L}   \sum_{q\in\mathcal{Q}} \mathbbm{1}(q\not\in \mathcal{L}) \hat P(q)\cdot \max\left(B_1, \left(\hat c_{l}(q) - (B_2-B_1)\sqrt{\frac{\log(6N|\mathcal{Q}|/\delta)}{2\sum_{n=1}^{N} \mathbbm{1}(q_n = q)}}\right)\right).
\end{align*}

We show how the cost for the caching from the empirical estimate  differs from the optimal cost.
\begin{theorem}\label{thm:offline_largeonly}
Assume that $N\geq \frac{8B_2|\mathcal{Q}|\log(3L/\delta)}{B_1}$ and taking $\delta = 1/N$.  We have
\begin{align*}
    \mathbb{E}[\mathsf{cost}(\hat{\mathcal{L}}) - \mathsf{cost}(\mathcal{L}^\star)]\leq C (B_2-B_1)L\cdot \sqrt{\frac{B_2 |\mathcal{Q}|\log(N|\mathcal{Q}|)}{N B_1}}.
\end{align*}
\end{theorem}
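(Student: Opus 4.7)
The plan is the standard offline-pessimism recipe, built on a good event $\mathcal{E}$. I will show $\mathbb{P}(\mathcal{E}) \ge 1-\delta$ where $\mathcal{E}$ enforces (i) Hoeffding concentration of $\hat P(q)$ around $P(q)$, (ii) Hoeffding concentration of $\hat c_l(q)$ around $c_l^\star(q)$ at rate $(B_2-B_1)\sqrt{\log(6N|\mathcal{Q}|/\delta)/(2n_q)}$ (conditioning on the count $n_q := \sum_n \mathbbm{1}(q_n=q)$ and union-bounding over $q$ as well as over the $N$ possible values of $n_q$, which is where the factor $N$ inside the log of the bonus enters), and (iii) a multiplicative Chernoff lower bound $n_q \ge NP(q)/2$ for every $q \in \mathcal{L}^\star$. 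For (iii) I invoke Lemma~\ref{lem:lower_bound_p}, whose content is essentially the averaging fact that the $L$-th largest value of $P(\cdot) c_l^\star(\cdot)$ exceeds $B_1/|\mathcal{Q}|$ (because the total sum $\mathbb{E}[c_l^\star]$ is at least $B_1$), yielding the uniform lower bound $P(q) \ge B_1/(B_2|\mathcal{Q}|)$ on $\mathcal{L}^\star$; the sample-size hypothesis $N \ge 8B_2|\mathcal{Q}|\log(3L/\delta)/B_1$ is calibrated precisely so that the Chernoff bound goes through after a union bound over $\mathcal{L}^\star$.

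Having set up $\mathcal{E}$, pessimism yields the two-sided sandwich $c_l^\star(q) - 2\mathsf{b}(q) \le \tilde c_l(q) \le c_l^\star(q)$ for observed $q$, where $\mathsf{b}(q)$ denotes the bonus and $\tilde c_l(q) := \max(B_1, \hat c_l(q) - \mathsf{b}(q))$; for unobserved $q$ one has $\tilde c_l(q) = B_1 \le c_l^\star(q)$, so the upper sandwich inequality holds globally --- this is precisely why the $B_1$ floor is needed. I then apply the three-term decomposition: writing $g(\mathcal{L}) := \sum_{q\in\mathcal{L}} P(q) c_l^\star(q)$ and $\hat g(\mathcal{L}) := \sum_{q\in\mathcal{L}} \hat P(q)\tilde c_l(q)$ so that $\mathsf{cost}(\mathcal{L}) = \mathbb{E}[c_l^\star] - g(\mathcal{L})$, the empirical optimality $\hat g(\hat{\mathcal{L}}) \ge \hat g(\mathcal{L}^\star)$ gives
\begin{align*}
\mathsf{cost}(\hat{\mathcal{L}}) - \mathsf{cost}(\mathcal{L}^\star) \;\le\; \bigl[g(\mathcal{L}^\star) - \hat g(\mathcal{L}^\star)\bigr] + \bigl[\hat g(\hat{\mathcal{L}}) - g(\hat{\mathcal{L}})\bigr].
\end{align*}

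The second bracket is dispatched by pessimism $\tilde c_l \le c_l^\star$: each summand becomes $[\hat P(q)-P(q)]\,c_l^\star(q) \le B_2|\hat P(q)-P(q)|$, summing to $O(LB_2\sqrt{\log(|\mathcal{Q}|/\delta)/N})$, a lower-order contribution. For the first bracket the sandwich furnishes a per-term bound $B_2|P(q)-\hat P(q)| + 2\hat P(q)\mathsf{b}(q)$; plugging the lower bound $n_q \ge NB_1/(2B_2|\mathcal{Q}|)$ from item (iii) into $\mathsf{b}(q)$, crudely using $\hat P(q) \le 1$, and summing over the at most $L$ items in $\mathcal{L}^\star$ produces $O\bigl(L(B_2-B_1)\sqrt{B_2|\mathcal{Q}|\log(N|\mathcal{Q}|)/(NB_1)}\bigr)$, which is exactly the theorem's rate. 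To pass from the high-probability bound on $\mathcal{E}$ to an expectation, I cap the deviation on $\mathcal{E}^c$ by $B_2$ and choose $\delta = 1/N$, making the $\mathcal{E}^c$ contribution $O(B_2/N)$ and lower order.

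The main obstacle is item (iii) of $\mathcal{E}$: without the $|\mathcal{Q}|$-dependent lower bound on $P(q)$ over $\mathcal{L}^\star$ from Lemma~\ref{lem:lower_bound_p}, one cannot uniformly bound $n_q$ from below for all cached queries, and the bonus $\mathsf{b}(q)$ would blow up for any tail query that happens to lie in $\mathcal{L}^\star$. The remaining steps (Hoeffding tail bounds, the pessimism sandwich, the three-term decomposition, and the substitution of empirical optimality) are standard offline RL machinery.
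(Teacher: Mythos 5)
Your proposal follows the paper's proof architecture essentially step for step: the same three high-probability events (uniform concentration of $\hat P$, Hoeffding for $\hat c_l$ conditional on the counts $n_q=\sum_{n}\mathbbm{1}(q_n=q)$, and a multiplicative Chernoff bound giving $n_q\geq NP(q)/2$ for every $q\in\mathcal{L}^\star$), the same pessimism sandwich $c_l^\star(q)-2\mathsf{b}(q)\leq\tilde c_l(q)\leq c_l^\star(q)$, the same role for Lemma~\ref{lem:lower_bound_p} in controlling the bonus on $\mathcal{L}^\star$, and the same conversion to an expectation bound by paying $\delta B_2$ on the bad event with $\delta=1/N$. Your global decomposition via $\hat g(\hat{\mathcal{L}})\geq\hat g(\mathcal{L}^\star)$ is a cosmetic variant of the paper's pairwise exchange argument (which compares each $q\in\hat{\mathcal{L}}\setminus\mathcal{L}^\star$ against some $q'\in\mathcal{L}^\star\setminus\hat{\mathcal{L}}$); both yield the same $L\times(\text{per-query error})$ bound, and your version is if anything cleaner.

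The one genuine gap is your justification of Lemma~\ref{lem:lower_bound_p}, which you yourself identify as the crux. You assert that the lemma is ``essentially the averaging fact that the $L$-th largest value of $P(\cdot)c_l^\star(\cdot)$ exceeds $B_1/|\mathcal{Q}|$ because the total sum is at least $B_1$.'' Averaging only controls the \emph{largest} value, not the $L$-th largest: take $|\mathcal{Q}|=3$, $L=2$, $P=(1-2\epsilon,\epsilon,\epsilon)$ and a constant cost; then the second-largest value of $P(\cdot)c_l^\star(\cdot)$ is $\epsilon B_1$, far below $B_1/3$ for small $\epsilon$. So your stated reason for item (iii) of the good event fails, and without a lower bound on $P(q)$ over $\mathcal{L}^\star$ the bonus $\mathsf{b}(q)\propto n_q^{-1/2}$ is uncontrolled on cached tail queries and the first bracket of your decomposition does not close --- exactly the failure mode you warn about. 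The paper derives the lemma by a different route: for $q\in\mathcal{L}^\star$ and $q'\notin\mathcal{L}^\star$, optimality of $\mathcal{L}^\star$ gives $P(q)c_l^\star(q)\geq P(q')c_l^\star(q')$, hence $P(q)\geq P(q')B_1/B_2$, which is then combined with $\sum_{q'}P(q')=1$. You should reconstruct the lemma along those lines rather than by averaging; note also that the example above puts pressure even on that derivation when nearly all of the probability mass sits inside $\mathcal{L}^\star$, so this is genuinely the delicate point of the whole proof and deserves a careful, self-contained argument rather than a one-line gloss.
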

The proof is deferred to Appendix~\ref{proof:offline_largeonly}, where we prove a stronger high-probability bound rather than a bound in expectation.
From the theorem, we know that the cost of the finite-sample caching policy converges to the cost of the optimal policy at a rate of $1/\sqrt{N}$, which achieves the optimal dependence on $N$. The insights from the tabular case also indicate that the cost needs  to be estimated in a conservative fashion when considered for the cache replacement algorithm.

\subsection{Finite sample setting: Online learning}

We summarize the caching algorithm pipeline in~\ref{alg:cache}, which relies on the two estimation oracles, $ \mathsf{DenEstOracle}$ and $\mathsf{RegressionOracle}$, which estimate both the frequency and cost of models from data.

\begin{algorithm}[!htbp]
\caption{Caching in Online Learning}
\label{alg:cache}
  \begin{algorithmic}[1]
\State Initialize  the set of cache $\mathcal{L}_1 = \{\}$, past observations $\mathcal{H}_1 = \{\}$, $\hat c_{l,0}(q) = B_1, \forall q\in\mathcal{Q}$.
\State \textbf{For}  {iteration $t=1, 2\cdots, T$}
  \State \quad Receive query $q_t$. 
  \State \quad Update the density estimation $\hat P_t = \mathsf{DenEstOracle}(q_1,\cdots, q_t)$.
  \State \quad \textbf{If} $q_t\in\mathcal{L}_{t}$:
  \State \qquad 
   Output the cached result, set $\hat c_{l,t} = \hat c_{l, t-1}$, update the past observation $\mathcal{H}_t = \mathcal{H}_{t-1} \bigcup (q_t, \times)$, and continue.  
  \State \quad Use  the large model to process the query, and observe a cost $c_t\sim \mathbb{P}_{C_l(q)}$.  
  \State \quad Update the past observation $\mathcal{H}_t = \mathcal{H}_{t-1} \bigcup (q_t, c_t)$.
  \State \quad Update $\hat c_{l, t} = \mathsf{RegressionOracle}(\mathcal{H}_t)$.
  \State \quad \textbf{If}  $|\mathcal{L}_{t}|< L$:
  \State \qquad Let $\mathcal{L}_{t+1}$ be the union of $\mathcal{L}_{t}$ and $q_t$. 
  \State \quad \textbf{Else if} 
  $ \hat P_t(q_t)\cdot  \hat c_{l, t}(q_t) > \min_{q\in\mathcal{L}_t}\hat P_t(q)\cdot \hat c_{l, t}(q):$
  \State 
 \qquad   Replace the minimizer element of $\hat P_t(q)\cdot \hat c_{l, t}(q)$ in the cache $\mathcal{L}_{t}$ with $q_t$ to get $\mathcal{L}_{t+1}$.
  \end{algorithmic}
  \end{algorithm}

For theoretical analysis, we focus on the tabular case and define the oracles as follows:
\begin{align}
    \hat P_t(q) & = \frac{\sum_{i=1}^t \mathbbm{1}(q_i = q)}{t},  \label{eq.p_oracle_lonly_regret}\\
    \hat c_{l,t}(q) & = \begin{cases} 
    B_1, \qquad \qquad \qquad \qquad  \qquad \qquad \qquad \qquad  \qquad  \text{if} \sum_{i=1}^t \mathbbm{1}(c_i \neq \times, q_i = q) = 0,  \\ 
    \max\left(B_1, \frac{\sum_{i=1}^t \mathbbm{1}(c_i \neq \times, q_i = q) c_i}{\sum_{i=1}^t \mathbbm{1}(c_i \neq \times, q_i = q)} -  (B_2-B_1)\sqrt{\frac{\log(6T|\mathcal{Q}|/\delta)}{2\sum_{i=1}^{t} \mathbbm{1}(c_i \neq \times, q_i = q)}}\right),   \quad \text{otherwise}   \\
   \end{cases} \label{eq.cl_oracle_lonly_regret}
\end{align}
For the estimation of density, we use plug-in estimator since there is no imbalance in the sampling process. For the estimation of the cost,   we subtract the confidence bound to include pessimism. We have the following regret guarantee.
\begin{theorem}\label{thm:online_largeonly}
    When substituting the $\mathsf{DenEstOracle}$ and $ \mathsf{RegressionOracle}$ with Equation (\ref{eq.p_oracle_lonly_regret}) and (\ref{eq.cl_oracle_lonly_regret}) and set $\delta=1/T$, we have for some universal constant $C$:
    \begin{align*}
        \mathsf{Regret}_{\mathsf{cache}}(T) \leq  {\frac{C L(B_2-B_1)B_2|\mathcal{Q}|L\log^2(T|\mathcal{Q}|)}{B_1}}\cdot  \sqrt{T}.
    \end{align*}
    On the other hand,   for any caching policy $\{\mathcal{L}_t\}_{t=1}^T$, there exist some cases of $P(q), c_l^\star(q)$ such that for some universal constant $C'$, 
    \begin{align*}
         \mathsf{Regret}_{\mathsf{cache}}(T) \geq C' \sqrt{T}. 
    \end{align*}
\end{theorem}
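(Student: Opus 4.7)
The proof has two parts, and I would tackle them with quite different techniques: the upper bound by a clean-event analysis adapting Theorem~\ref{thm:offline_largeonly} to the streaming setting, and the lower bound by a two-environment Le~Cam reduction in the flavor of stochastic bandits.

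For the upper bound, the first step is to take expectation over the i.i.d.\ query $q_t \sim P$, which reduces the per-round regret to $\mathsf{cost}(\mathcal{L}_t)-\mathsf{cost}(\mathcal{L}^\star)$ and lets me express it as a signed sum over the symmetric difference $\mathcal{L}_t \triangle \mathcal{L}^\star$, bounded in magnitude by the pointwise error of $\hat P_t(q)\hat c_{l,t}(q)$ times the size $L$ of the difference. I would then define a clean event $\mathcal{E}$ of probability at least $1-\delta$ on which (i) Hoeffding plus a union bound gives $|\hat P_t(q)-P(q)| = O(\sqrt{\log(T|\mathcal{Q}|/\delta)/t})$ uniformly in $q$ and $t$, (ii) the pessimistic cost estimator satisfies $\hat c_{l,t}(q) \leq c_l^\star(q)$, and (iii) $c_l^\star(q)-\hat c_{l,t}(q) = O((B_2-B_1)\sqrt{\log(T|\mathcal{Q}|/\delta)/N_t(q)})$, where $N_t(q)$ counts the non-cached observations of $q$ up to round $t$. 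Applying Lemma~\ref{lem:lower_bound_p} to obtain $P(q)\gtrsim B_1/(B_2|\mathcal{Q}|)$ for $q\in\mathcal{L}^\star$ ensures $N_t(q)=\Omega(tB_1/(B_2|\mathcal{Q}|))$ with high probability for every such $q$ before it settles in the cache, which gives the per-round regret bound $O(L(B_2-B_1)\sqrt{B_2|\mathcal{Q}|\log(T|\mathcal{Q}|)/(tB_1)})$; summing via $\sum_{t=1}^T t^{-1/2}=O(\sqrt{T})$, attaching the extra $L$ from the symmetric-difference size, and setting $\delta=1/T$ to handle the failure event delivers the stated bound.

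For the lower bound I would take the hard instance $\mathcal{Q}=\{q_1,q_2\}$, $L=1$, $P(q_1)=P(q_2)=1/2$, with $C_l(q_i)$ a two-point distribution supported on $\{B_1,B_2\}$. Under environment $\nu_+$ let $q_1$ have a slightly larger mean than $q_2$ by a gap $\epsilon>0$, and swap the roles under $\nu_-$; the unique optimal cache differs between the two environments, and caching the wrong item costs $\Omega(\epsilon)$ per round. A standard KL-based Le~Cam / Pinsker argument on the observation sequence --- each cost observation contributes only $O(\epsilon^2)$ to the KL, and cache-hit rounds contribute nothing --- shows any algorithm mis-identifies the environment with probability at least $1/3$ on at least one of $\nu_\pm$ unless it collects $\Omega(1/\epsilon^2)$ observations, and balancing $\epsilon=\Theta(1/\sqrt{T})$ yields the claimed $\Omega(\sqrt{T})$ regret.

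The main obstacle in the upper bound, and the piece that truly separates the online argument from the offline one, is the coupling between caching and feedback: once $q$ sits in $\mathcal{L}_t$ its cost estimator stops updating and its confidence interval is frozen at whatever width it had on entry. Pessimism protects the algorithm in the desired direction --- a frozen under-estimate still discourages eviction of a genuinely high-cost cached item --- but I still need to argue that every element of $\mathcal{L}^\star$ accumulates enough observations during an $\tilde O(B_2|\mathcal{Q}|/B_1)$-round warm-up before being locked into the cache, and that any remaining spurious swaps contribute only $O(\sqrt{T})$ total regret. This warm-up bookkeeping, rather than any fundamentally new concentration tool beyond those used for Theorem~\ref{thm:offline_largeonly}, is where I expect the most careful argument to be needed.
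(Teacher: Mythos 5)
Your lower bound is correct and is essentially the paper's argument: two queries, $L=1$, uniform arrival probabilities, Bernoulli-type costs with means separated by $\epsilon$ and swapped between two environments, Le Cam plus a KL computation in which each cost observation contributes $O(\epsilon^2)$ and cache hits contribute nothing, with $\epsilon=\Theta(1/\sqrt{T})$. No issues there.

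The upper bound has a genuine gap at exactly the step you flag as delicate. Your quantitative argument rests on the claim that every $q\in\mathcal{L}^\star$ satisfies $N_t(q)=\Omega(tB_1/(B_2|\mathcal{Q}|))$, giving a per-round regret of order $1/\sqrt{t}$ that sums to $\sqrt{T}$. This is false under partial observation: $N_t(q)$ only increments in rounds where $q$ arrives \emph{and} misses the cache, so if $q$ spends a long stretch inside $\mathcal{L}_t$ (during which $\hat c_{l,t}(q)$ is frozen) and is later evicted --- which can happen, since $\hat P_t(q)$ keeps moving even while $q$ is cached --- it can be outside the cache at a late round $t$ with $N_t(q)$ far below $tB_1/(B_2|\mathcal{Q}|)$. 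A ``warm-up'' phase does not repair this, because eviction and re-entry can recur throughout the horizon. The paper's proof avoids any per-round $1/\sqrt{t}$ bound: it bounds the aggregate $\sum_{t}\mathbb{E}\bigl[\mathbbm{1}(q\notin\mathcal{L}_t)\min(1,T_t(q)^{-1/2})\bigr]$ directly (Lemma~\ref{lem:sum_sqrt}) by partitioning time into inter-arrival intervals of $q$, showing that within each interval both the cache membership of $q$ and the counter $T_t(q)$ are constant, bounding each interval's length by $O(B_2|\mathcal{Q}|\log(TL/\delta)/B_1)$ via the geometric inter-arrival event $E_3$ and Lemma~\ref{lem:lower_bound_p}, and observing that whenever the indicator is nonzero the counter increments, so the remaining sum telescopes to $\sum_k k^{-1/2}=O(\sqrt{T})$. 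This interval-plus-telescoping device is the missing ingredient; you would need to supply it (or an equivalent) to close the argument. A secondary omission: your symmetric-difference comparison implicitly assumes $\mathcal{L}_t$ is the exact top-$L$ set under the current estimates $\hat P_t\hat c_{l,t}$, but the algorithm only performs one swap per round, so this requires a (short, inductive) argument that single-swap maintenance preserves the empirical argmin, as in Lemma~\ref{lem:minimizer_lonly}.
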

The proof is deferred to Appendix~\ref{proof:online_largeonly}. Different from the offline case, one interesting feature of the online case is the \textit{partial observation phenomenon}: when the query hits the cache, it will not be processed by the model, and thus we cannot observe the sample from $C_l(q)$ in this round. This is different from the traditional bandit literature where the selected arm is always observed in each round. Thus the partial observation thus requires new upper and lower bound analysis. 

\section{Optimal Caching and Model multiplexing}\label{sec:both}

\subsection{Population setting}

In the case when we have access to  two models,  we need to design a good caching and model multiplexing strategy jointly. 
We can compute the optimal caching and model multiplexing policy as $\mathcal{L}^\star, \pi^\star = \argmin_{\mathcal{L}, \pi}  \mathsf{cost}(\mathcal{L},\pi)$, which gives the following  solution:
\begin{align*}
\pi^\star(q) & = \mathbbm{1}(c_s^\star(q) \leq c_l^\star(q)), \\
\mathcal{L}^\star & = 
\argmin_{\mathcal{L}: |\mathcal{L}|\leq L}   \sum_{q\in\mathcal{Q}} P(q)\mathbbm{1}(q\not\in \mathcal{L}) \min\left(c_s^\star(q),  c_l^\star(q) \right).
\end{align*}
Such optimal strategies are straightforward: $\pi^\star$ always assigns the query to the model with a smaller cost, and  $\mathcal{L}^\star$ saves the $L$ queries with the largest $P(q)\cdot\min\left(c_s^\star(q),  c_l^\star(q)  \right)$.

For the model multiplexing algorithm, 
we consider two baselines: (a) one always uses large model $\pi_l(q) \equiv 0$; (b) one always uses  the small model $\pi_s(q) \equiv 0$. 
This is related to the LLM cascade idea in the concurrent work of~\citet{chen2023frugalgpt}. We provide more discussion in Appendix~\ref{sec:discussion}, and present comparisons between baselines and $\pi^\star$ in Appendix~\ref{app:population_diff}.

\subsection{Finite sample setting: Offline learning}

We now consider the finite sample case. Let $\mathcal{D}_N = \{(q_1, c_{s, 1}, c_{l, 1}), \cdots, (q_N, c_{s, N}, c_{l, N})\}$, where $c_{s,n}$ is a sample from random variable $C_{s}(q_n)$, the observed cost for processing query $q_n$ with the small model in round $n$. And $c_{l, n}$ is  a sample from random variable $C_{l}(q_n)$, the observed cost for processing query $q_n$ with the large model in round $n$.  
We consider estimating $P, c_s^\star, c_t^\star$ with some oracles
$\hat P = \mathsf{DenEstOracle}(q_1,\cdots, q_N)$,  $\hat c_s(q), \hat c_t(q) = \mathsf{RegressionOracle}(\mathcal{D}_N)$. We focus on the tabular case for theoretical analysis, where we set $\hat P$, $\hat c_s(q)$ and $\hat c_l(q)$ to be the plug-in estimator:
\begin{align*}
    \hat P(q)   = \frac{\sum_{i=1}^N \mathbbm{1}(q_i = q)}{N},  
    \hat c_l(q) & = \begin{cases} \frac{\sum_{i=1}^N \mathbbm{1}(q_i = q) c_{l, i}}{\sum_{i=1}^N \mathbbm{1}(q_i = q)},  & \text{if} \sum_{i=1}^N \mathbbm{1}(q_i = q)>0 \\
   B_1,  & \text{if} \sum_{i=1}^N \mathbbm{1}(q_i = q) = 0, \end{cases} \\ \hat c_s(q)  & = \begin{cases} \frac{\sum_{i=1}^N \mathbbm{1}(q_i = q) c_{s, i}}{\sum_{i=1}^N \mathbbm{1}( q_i = q)},  & \text{if} \sum_{i=1}^N \mathbbm{1}(q_i = q)>0 \\
   B_1,  & \text{if} \sum_{i=1}^N \mathbbm{1}(q_i = q) = 0. \end{cases} 
\end{align*}
Similar to the case of caching without model multiplexing,  for a long-tailed distribution $P(q)$,   the estimation of $c^\star_s(q), c^\star_l(q)$ can be bad for the queries that are visited less. 
To select the maximum $L$ elements from the plug-in estimator, we   introduce pessimism to the estimate of $\hat c_l$ and $\hat c_s$. This leads to the following  design of caching and model multiplexer $\hat L$ and $\hat \pi$: 
\begin{align*} 
&\hat \pi(q)  = \mathbbm{1}(\hat c_s (q) \leq \hat c_l (q)), \\
&\hat{\mathcal{L}}  = 
\argmin_{\mathcal{L}: |\mathcal{L}|\leq L}   \sum_{q\in\mathcal{Q}} \mathbbm{1}(q\not\in \mathcal{L}) \hat P(q) \max\left(B_1, \min(\hat c_s(q), \hat c_l(q)) - (B_2-B_1)\sqrt{\frac{\log(8|\mathcal{Q}|/\delta)}{2\sum_{n=1}^{N} \mathbbm{1}(q_n = q)}}\right).
\end{align*}




We now show  the cost for the caching and model multiplexer obtained from the empirical estimate is close to  the optimal cost.  The proof is deferred to Appendix~\ref{proof:offline_both}. 
\begin{theorem}\label{thm:offline_both}
Assume that $N\geq \frac{8B_2|\mathcal{Q}|\log(4L/\delta)}{B_1}$ and take $\delta = 1/N$.    We have
\begin{align*}
   \mathbb{E}[\mathsf{cost}(\hat{\mathcal{L}}, \hat \pi) - \mathsf{cost}(\mathcal{L}^\star, \pi^\star)]\leq CL(B_2-B_1)\cdot \sqrt{\frac{B_2|\mathcal{Q}|\log(8|\mathcal{Q}|N)}{B_1 N}}.
\end{align*}
\end{theorem}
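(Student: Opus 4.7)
The plan is to adapt the pessimism argument from Theorem~\ref{thm:offline_largeonly} to the effective per-query cost $c^\star(q) := \min(c_s^\star(q), c_l^\star(q))$. The key new observation is that, whenever the uniform concentration event
$|\hat c_m(q) - c_m^\star(q)| \leq b(q) := (B_2-B_1)\sqrt{\log(8|\mathcal{Q}|/\delta)/(2 N_q)}$ holds for both $m\in\{s,l\}$ with $N_q := \sum_i \mathbbm{1}(q_i = q)$, picking the wrong model can cost at most $2b(q)$, since this would force $|c_s^\star(q) - c_l^\star(q)| \leq 2 b(q)$; and the pessimistic surrogate $\tilde c(q) := \max(B_1, \min(\hat c_s(q), \hat c_l(q)) - b(q))$ is a valid lower confidence bound for $c^\star(q)$ with gap at most $2 b(q)$.

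First I would establish the good event via Hoeffding conditional on the query sequence and a union bound over $q\in\mathcal{Q}$ and over $m \in \{s,l\}$ (and analogously for $\hat P$), so that the event holds with probability at least $1 - \delta$. An analogue of Lemma~\ref{lem:lower_bound_p}, with $c^\star$ in place of $c_l^\star$, gives $P(q) \geq B_1/(B_2|\mathcal{Q}|)$ for every $q \in \mathcal{L}^\star$: if some $q \in \mathcal{L}^\star$ violated this, one could improve the true objective by swapping it with any $q' \notin \mathcal{L}^\star$ having $P(q') \geq 1/|\mathcal{Q}|$, which exists by pigeonhole. Combined with the sample-size assumption $N \geq 8 B_2 |\mathcal{Q}|\log(4L/\delta)/B_1$, a multiplicative Chernoff bound then guarantees $N_q \geq N P(q)/2$ simultaneously for all $q \in \mathcal{L}^\star$.

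I would then run the pessimism argument: optimality of $\hat{\mathcal{L}}$ for the empirical weighted objective yields
\[\sum_q \hat P(q) \mathbbm{1}(q \notin \hat{\mathcal{L}}) \tilde c(q) \leq \sum_q \hat P(q) \mathbbm{1}(q \notin \mathcal{L}^\star) \tilde c(q) \leq \sum_q \hat P(q) \mathbbm{1}(q \notin \mathcal{L}^\star) c^\star(q),\]
with the second inequality from $\tilde c \leq c^\star$. Swapping $\hat P$ for $P$ via concentration on both sides, and using $c_s^\star(q) \hat\pi(q) + c_l^\star(q)(1-\hat\pi(q)) \leq \tilde c(q) + 4 b(q)$ on the left, reduces the cost gap to a residual of the form $\sum_{q \in \mathcal{L}^\star \cup \hat{\mathcal{L}}} P(q) b(q)$.

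The main obstacle is controlling this residual for $q \in \hat{\mathcal{L}} \setminus \mathcal{L}^\star$, where $P(q)$ is not a priori bounded below. The workaround is that, by optimality of $\hat{\mathcal{L}}$ in the empirical problem, every such $q$ satisfies $\hat P(q) \tilde c(q) \geq \hat P(q') \tilde c(q')$ for each evicted $q' \in \mathcal{L}^\star \setminus \hat{\mathcal{L}}$, and combining this with the lower bound on $P(q')$ together with concentration of $\hat P$ and $\tilde c$ propagates a matching lower bound on $P(q)$. In both regimes one then has $N_q \gtrsim N P(q)$, so $P(q) b(q) \lesssim (B_2-B_1)\sqrt{P(q)\log(|\mathcal{Q}|/\delta)/N}$; summing the at most $2L$ such terms via Cauchy--Schwarz together with the uniform bound $P(q) \geq B_1/(B_2|\mathcal{Q}|)$ yields the announced rate $L(B_2-B_1)\sqrt{B_2|\mathcal{Q}|\log(8|\mathcal{Q}|N)/(B_1 N)}$. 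Finally, converting the high-probability bound to an expectation bound costs at most $(B_2-B_1) L \delta$ on the failure event, so choosing $\delta = 1/N$ completes the proof.
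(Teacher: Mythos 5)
Your proposal is correct and is essentially the paper's own argument: the same concentration events for $\hat P$, $\hat c_s$, $\hat c_l$, the same frequency lower bound $P(q)\geq B_1/(B_2|\mathcal{Q}|)$ for $q\in\mathcal{L}^\star$ (Lemma~\ref{lem:lower_bound_p} with $\min(c_s^\star,c_l^\star)$ in place of $c_l^\star$) combined with a Chernoff bound on the counts, the same pessimistic lower confidence bound built from $\min(\hat c_s,\hat c_l)$, and the same observation that a wrong multiplexing decision at $q$ is only possible when $|c_s^\star(q)-c_l^\star(q)|$ is at most the sum of the two estimation errors. The one place you genuinely diverge is the term you single out as the main obstacle. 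The paper decomposes the gap as $[\mathsf{cost}(\hat{\mathcal{L}},\hat\pi)-\mathsf{cost}(\hat{\mathcal{L}},\pi^\star)]+[\mathsf{cost}(\hat{\mathcal{L}},\pi^\star)-\mathsf{cost}(\mathcal{L}^\star,\pi^\star)]$, bounds the first piece by $\sum_{q}P(q)\bigl(|\hat c_s(q)-c_s^\star(q)|+|\hat c_l(q)-c_l^\star(q)|\bigr)$ via Cauchy--Schwarz over all of $\mathcal{Q}$, and in the second piece only ever needs, for $q\notin\mathcal{L}^\star$, the one-sided inequality $\hat P(q)\tilde c(q)\leq\bigl(P(q)+\sqrt{2\log(8/\delta)/N}\bigr)c^\star(q)$, which follows from $\tilde c\leq c^\star$ on the good event and requires no control of $N_q$ or $P(q)$ whatsoever --- that is exactly what pessimism buys, so no residual of the form $P(q)b(q)$ for $q\in\hat{\mathcal{L}}\setminus\mathcal{L}^\star$ ever arises. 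Your workaround (propagating a lower bound on $P(q)$ for cached-but-suboptimal queries from the empirical optimality of $\hat{\mathcal{L}}$) can be pushed through, but it is more delicate than necessary: the propagated bound degrades by extra factors of $B_1/B_2$ and can be vacuous at the stated sample size $N\geq 8B_2|\mathcal{Q}|\log(4L/\delta)/B_1$, which is why you are forced into the two-regime case analysis on $NP(q)$. The paper's one-sided route avoids all of this. Everything else, including the final conversion from a high-probability bound to an expectation bound at additive cost $O(B_2\delta)$ with $\delta=1/N$, matches.
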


\subsection{Finite sample setting: Online learning}

We turn to the online case.  
We first propose a meta-algorithm in Algorithm~\ref{alg:meta}. 
\begin{algorithm}[!htbp]
\caption{Joint Design of Caching and Model multiplexing}
\label{alg:meta}
  \begin{algorithmic}[1]
\State Initialize  the set of cache $\mathcal{L}_1 = \{\}$, past observations $\mathcal{H}_1 = \{\}$, $\hat c_{l,0}(q) = B_1$, $\hat c_{s,0}(q) = B_1$, model multiplexing policy $\pi_0(q)=1, \forall q\in\mathcal{Q}$.
\State \textbf{For}  {iteration $t=1, 2\cdots, T$}
  \State \quad Receive query $q_t$. 
  \State \quad Update the density estimation $\hat P_t = \mathsf{DenEstOracle}(q_1,\cdots, q_t)$.
  \State \quad \textbf{If} $q_t\in\mathcal{L}_{t}$: 
  output the cached result, set $\hat c_{s,t} = \hat c_{s, t-1}, \hat c_{l,t} = \hat c_{l, t-1}, \pi_t = \pi_{t-1}$, update the past observation $\mathcal{H}_t = \mathcal{H}_{t-1} \bigcup (q_t, \times, \times)$, and continue.  
  \State \quad Select the models according to $s_t = \pi_{t}(q_t)$.
  \State \quad Update the past observation $\mathcal{H}_t = \mathcal{H}_{t-1} \bigcup (q_t, s_t, c_t)$.
  \State \quad Update $\hat c_{l, t}, \hat c_{s, t} = \mathsf{RegressionOracle}(\mathcal{H}_t)$.  Set $\pi_{t+1}(q) = \mathbbm{1}(\hat c_{s, t}(q) < \hat c_{l, t}(q))$.
  \State \quad \textbf{If}  $|\mathcal{L}_{t}|< L$: let $\mathcal{L}_{t+1}$ be the union of $\mathcal{L}_{t}$ and $q_t$. 
  \State \quad \textbf{Else if} 
 $\hat P_t(q_t)\cdot \min\left(\hat c_{s, t}(q_t), \hat c_{l, t}(q_t)\right) > \min_{q\in\mathcal{L}_t}\hat P_t(q)\cdot \min\left(\hat c_{s, t}(q), \hat c_{l, t}(q)\right)$: \\
 \qquad   replace the minimizer element in the cache $\mathcal{L}_{t}$ on the RHS with $q_t$ to get $\mathcal{L}_{t+1}$.
  \end{algorithmic}
  \vspace{-1mm}
  \end{algorithm}
We provide a theoretical analysis of the meta-algorithm for the tabular case, with $\mathsf{DenEstOracle}$ $\hat P_t(q) = \frac{\sum_{i=1}^t \mathbbm{1}(q_i = q)}{t},$ and the $\mathsf{RegressionOracle}$ defined as follows:
\begin{align*}
    \hat c_{l,t}(q) & = \begin{cases}B_1,  \qquad \qquad \qquad \qquad  \qquad \qquad \qquad \qquad   \qquad  \text{if} \sum_{i=1}^t \mathbbm{1}(s_i=0, q_i = q) = 0 \\ 
    \max\left(B_1, \frac{\sum_{i=1}^t \mathbbm{1}(s_i = 0, q_i = q) c_{l, i}}{\sum_{i=1}^t \mathbbm{1}(s_i = 0, q_i = q)} -  (B_2-B_1)\sqrt{\frac{\log(8T|\mathcal{Q}|/\delta)}{2\sum_{i=1}^{t} \mathbbm{1}(s_i =0, q_i = q)}}\right),    \text{otherwise}, \\
   \end{cases} \\ 
   \hat c_{s,t}(q) & = \begin{cases}  B_1,   \qquad \qquad \qquad \qquad  \qquad \qquad \qquad \qquad   \qquad   \text{if} \sum_{i=1}^t \mathbbm{1}(s_i=1, q_i = q) = 0, \\
   \max\left(B_1, \frac{\sum_{i=1}^t \mathbbm{1}(s_i = 1, q_i = q) c_{s, i}}{\sum_{i=1}^t \mathbbm{1}(s_i = 1, q_i = q)} -  (B_2-B_1)\sqrt{\frac{\log(8T|\mathcal{Q}|/\delta)}{2\sum_{i=1}^{t} \mathbbm{1}(s_i =1, q_i = q)}}\right),    \text{otherwise}.  \\
\end{cases} 
\end{align*}

We provide the following theorem on the regret of the overall algorithm.
\begin{theorem} \label{thm:online_both}
Substituting the oracles in Algorithm~\ref{alg:meta} with the oracles above and $\delta = 1/T$,  
we have
\begin{align*}
    \mathsf{Regret}_{\mathsf{sel}}(T) \leq   {\frac{C L(B_2-B_1)B_2|\mathcal{Q}|L\log^2(T|\mathcal{Q}|)}{B_1}}\cdot  \sqrt{T}.
\end{align*}
\end{theorem}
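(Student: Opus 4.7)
The plan is to mirror the proof of Theorem~\ref{thm:online_largeonly} while separately accounting for the additional error introduced by the model multiplexer. The natural regret decomposition is
\begin{align*}
\mathsf{cost}(\mathcal{L}_t, \pi_t) - \mathsf{cost}(\mathcal{L}^\star, \pi^\star)
= \underbrace{[\mathsf{cost}(\mathcal{L}_t, \pi_t) - \mathsf{cost}(\mathcal{L}_t, \pi^\star)]}_{\Delta^{\mathsf{mux}}_t}
+ \underbrace{[\mathsf{cost}(\mathcal{L}_t, \pi^\star) - \mathsf{cost}(\mathcal{L}^\star, \pi^\star)]}_{\Delta^{\mathsf{cache}}_t}.
\end{align*}
I would first set up a good event $\mathcal{E}$ on which, uniformly over $t\leq T$ and $q\in\mathcal{Q}$, (i) $|\hat P_t(q)-P(q)|$ is bounded by a Hoeffding-type width in $t$, and (ii) the empirical means of $C_s(q)$ and $C_l(q)$ are within their respective widths $w_{s,t}(q), w_{l,t}(q)$ scaling as $(B_2-B_1)\sqrt{\log(8T|\mathcal{Q}|/\delta)/n}$ where $n$ is the corresponding per-arm visit count. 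A union bound yields $\mathbb{P}(\mathcal{E})\geq 1-\delta$, and taking $\delta=1/T$ contributes only $O(1)$ to the regret off $\mathcal{E}$.

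For $\Delta^{\mathsf{cache}}_t$ I would port the argument of Theorem~\ref{thm:online_largeonly} verbatim, replacing the reference cost $c_l^\star(q)$ with $\min(c_s^\star(q),c_l^\star(q))$. The key ingredients reused are (a) the analog of Lemma~\ref{lem:lower_bound_p} giving a uniform lower bound of order $B_1/(B_2|\mathcal{Q}|)$ on $P(q)$ for every $q\in\mathcal{L}^\star$, which ensures that within $O(B_2|\mathcal{Q}|\log T/B_1)$ rounds each element of $\mathcal{L}^\star$ has been observed, and (b) the pessimistic LCB design, which ensures that once a query in $\mathcal{L}^\star$ enters the cache it is never evicted on $\mathcal{E}$. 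Summing the per-round misplacement cost over $t$ yields the same $\mathrm{poly}(L,|\mathcal{Q}|,B_1,B_2,\log T)\cdot\sqrt{T}$ scaling as in Theorem~\ref{thm:online_largeonly}.

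For $\Delta^{\mathsf{mux}}_t$ I would argue that only queries with $\pi_t(q)\neq\pi^\star(q)$ contribute, and that such a flip forces, via $\mathcal{E}$, the true gap $|c_s^\star(q)-c_l^\star(q)|$ to be no larger than $w_{s,t}(q)+w_{l,t}(q)$. The instantaneous multiplexer excess cost for query $q$ is therefore bounded by this combined width, which is itself at most $2w_{\pi_t(q),t}(q)$ up to a factor, since after the flip the newly chosen arm is the one that gets sampled. Summing $\sum_{t,q} P(q)\mathbbm{1}(q\notin\mathcal{L}_t)[w_{s,t}(q)+w_{l,t}(q)]$ via a per-arm pigeonhole on the visit counts and a Cauchy--Schwarz step yields a $\sqrt{|\mathcal{Q}|T\log T}\cdot(B_2-B_1)$ contribution, which is dominated by the caching bound and is absorbed into the stated rate.

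The main obstacle is the partial-observation phenomenon intertwined with the multiplexer: if $\pi_t(q)$ consistently selects one model, the LCB of the unchosen model never tightens and may misdirect subsequent decisions. The cleanest resolution I see is the observation above---that a multiplexer mistake at round $t$ must itself be witnessed by large confidence widths of the chosen arm, so the regret is paid in units of widths that shrink once the arm is pulled; this reduces the analysis to a standard $\sqrt{T}$ bandit-style regret over two arms per query, interleaved with the caching potential argument. The second subtlety is that the caching decision uses $\min(\hat c_{s,t},\hat c_{l,t})$, a LCB of a minimum that is \emph{not} itself the LCB of $\min(c_s^\star,c_l^\star)$; a short sandwich argument using $\min(\hat c_{s,t},\hat c_{l,t}) \leq \min(c_s^\star,c_l^\star)$ on $\mathcal{E}$ and an upper bound via the width of the realized argmin suffices to preserve the pessimism property needed by the caching analysis.
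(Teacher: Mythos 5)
Your proposal follows essentially the same route as the paper's proof: the same multiplexer/cache decomposition, the same high-probability event (including the inter-arrival-gap event controlling how long a query in $\mathcal{L}^\star$ can be absent), the same observation that a multiplexing mistake is witnessed by the confidence width of the arm actually pulled (so the count that shrinks it increments), and the same sandwich argument showing $\min(\hat c_{s,t},\hat c_{l,t})$ is a valid pessimistic index for $\min(c_s^\star,c_l^\star)$ up to the width of the selected arm. The one inaccuracy is your claim that on the good event an element of $\mathcal{L}^\star$ is never evicted once cached --- the paper neither proves nor needs this (early on, large widths can still let a suboptimal query displace it); it instead bounds each round's misplacement cost by confidence widths and sums these via the inter-arrival-gap event, which is what your final summation step does anyway.
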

The proof is deferred to Appendix~\ref{proof:online_both}. Compared with the lower bound in Theorem~\ref{thm:online_largeonly}, we see that the dependency on $T$ is tight. The pessimism plays two different roles here: on the one hand, it encourages the exploration for model multiplexing to choose the ones with more uncertainty in the cost; on the other hand, it encourages the exploitation  to be conservative about which query to save into the cache. 

For the model multiplexer to work well, one needs to have a small yet accurate model multiplexer. In the case when the model multiplexer is not accurate, the small model always comes with a much smaller cost, and we are allowed to regenerate the responses and make corrections for the output, one may combine LEC with cascade~\citep{chen2023frugalgpt} to achieve better performance.

\section{Experiments}\label{sec:results}
We conduct both simulations and real-world experiments with our proposed methods. The code is available at \url{https://github.com/Ying1123/llm-caching-multiplexing}.
\subsection{Simulations for algorithm analysis}
We conduct synthetic online and offline experiments for joint optimization of caching and model switching. In Figure~\ref{fig:online-sim}, we plot the cumulative cost and regret in online learning for LFU and LEC caching algorithms. For LFU, we consider model switchers which always select the small or large models as the baselines. We consider $20$ distinct prompts and set the cache size to be $10$. We set the frequency distribution as power distribution with $\alpha = 0.9$. The ground truth cost for each query processed by both models is set as a sample from $100X+1$, where $X$ is a random variable generated from a Bernoulli distribution with the parameter $0.5$. We repeat the simulation $100$ times and plot the mean and standard deviation in the figure. 
Our simulation suggests that LEC with model switcher greatly improves the two baselines by a factor of $50\times$ when the cost ratio is $100$. 
We include additional results on the synthetic datasets for both online and offline settings with different $\alpha$ values, cost ratios, and switcher accuracy in Appendix~\ref{sec:app-additional-synthetic}.

\begin{figure}[!htbp]
     \centering
     \begin{subfigure}[b]{0.45\linewidth}
         \centering
         \includegraphics[width=\textwidth]{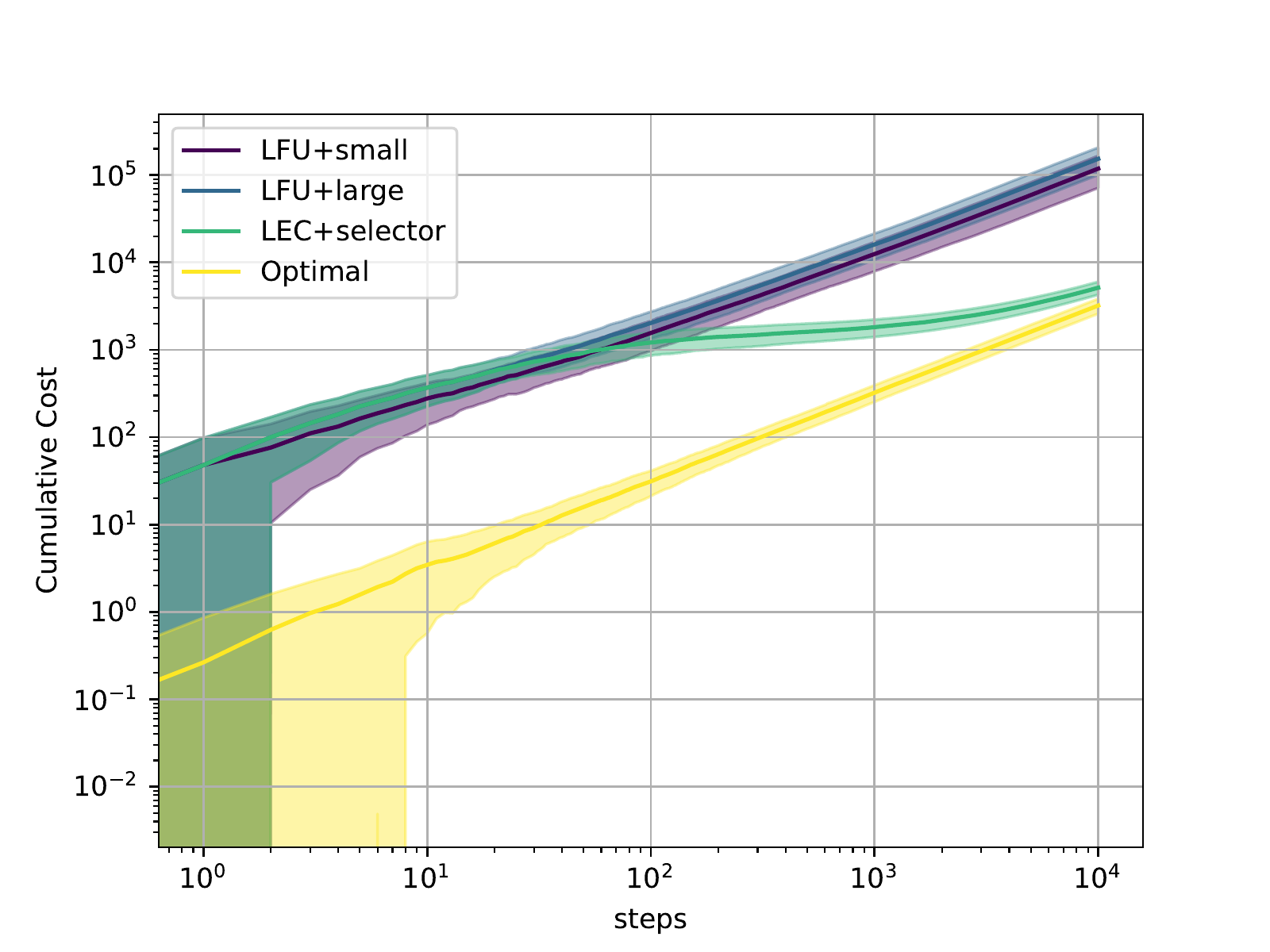}
     \end{subfigure}
     \hfill
     \begin{subfigure}[b]{0.45\linewidth}
         \centering
         \includegraphics[width=\textwidth]{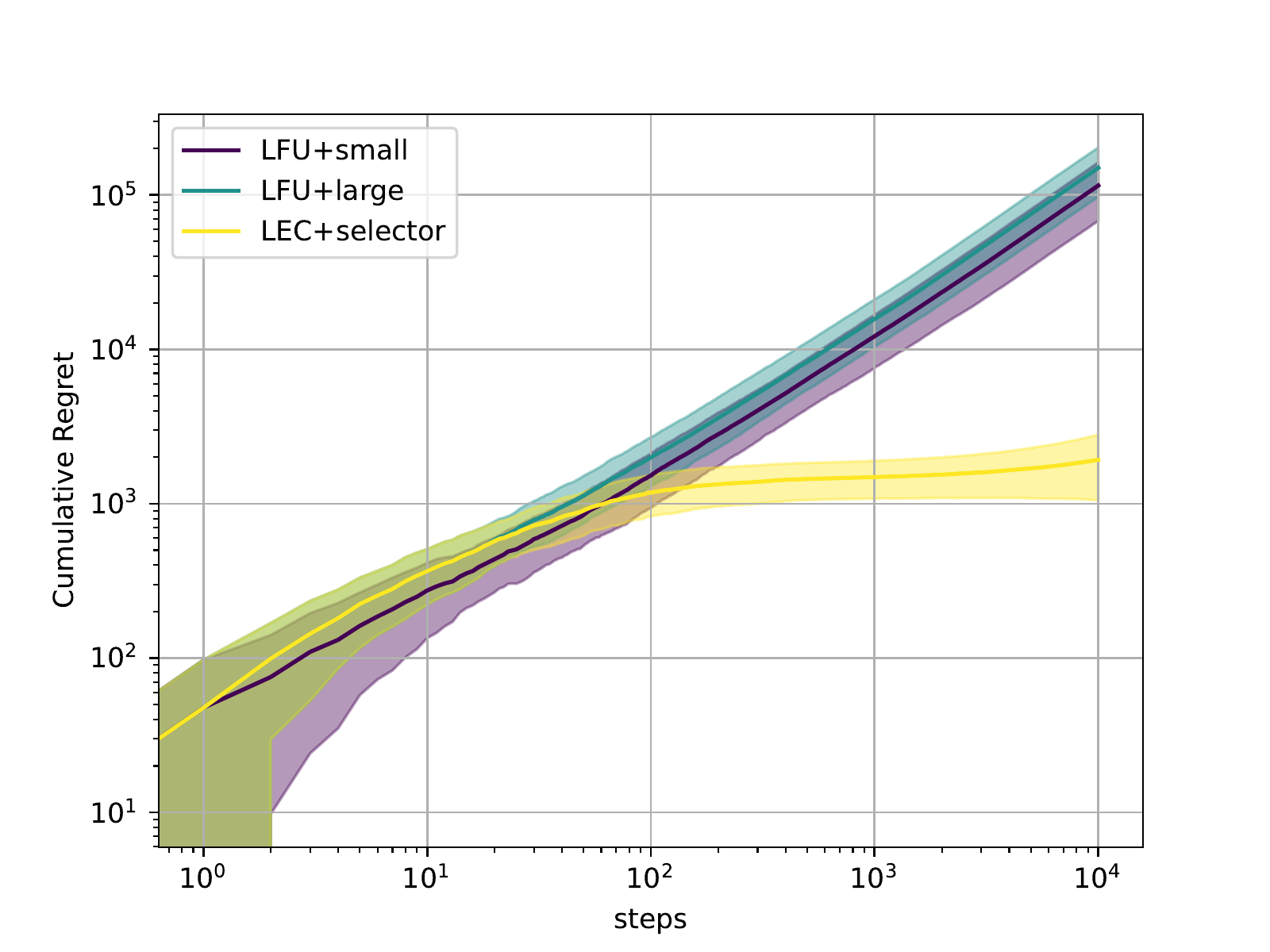}
     \end{subfigure} 
        \caption{Comparisons between LFU with either small or large model switching and LEC with model switcher. Both the $x$-axis and $y$-axis are logarithmic scales. The shaded regime represents the standard deviation calculated from the repeated experiments.  }
        \label{fig:online-sim}
        \vspace{-5mm}
\end{figure}

\subsection{Experiments on real datasets}
We evaluate our algorithms on two tasks: next-token prediction on the Lambada~\citep{paperno2016lambada} dataset and chat assistant on the OpenAssistant~\citep{kopf2023openassistant} dataset.

For the next-token prediction task, we run the offline algorithm with two models: OPT-1.3B and OPT-13B~\citep{zhang2022opt} and use FLOPs as the cost. The target performance metric is the number of correct tokens predicted, where we  get the ground-truth token from the Lambada dataset.   For a given query, an algorithm can choose to run the small model or the large model. If the small model is chosen but its result is wrong, the large model must be run and it will incur an additional penalty.
We fine-tune a BERT base model with $2000$ samples as the model switcher by predicting whether the small model can give the correct result and achieve 80.2\% accuracy. We work with $100$ unseen distinct prompts in the offline setting with total queries $10000$ and  cache size $40$.
We compare our offline caching and switcher algorithms against LFU, large-model-only, and cascade (which always calls the small model first). As shown in Table~\ref{tab:offline_lambda_opt-1.3b_opt-13b}, LEC is better than LFU in all cases. Combining LEC and switcher brings up to $4.3\times$ cost reduction compared to the baseline ``LFU + Large.'' However, as the predictor accuracy is  limited, the model switcher may not be as good as the cascade algorithm in some cases. We leave the training of a better switcher as future work.

On the chat assistant task, we run the online algorithm with two models: FastChat-T5-3B and Vicuna-13B~\citep{vicuna2023}, and use the inference latency as the cost. The quality of response is evaluated by GPT4 evaluation~\citep{liu2023gpteval}. We say a response is satisfying if the score is larger than 6 out of 10, and unsatisfying otherwise. If the response from the small model is unsatisfying, we will call the large model again and incur an additional cost in latency.
The ratio between the average latency of the large model and the small model is 1.85. We work with $100$ distinct prompts in the online setting with total queries $10000$ and  cache size $40$.
After a sufficient number of online learning steps, the switcher learns the accurate costs of two models on this finite prompts set, so ``LEC + switcher'' outperforms other algorithms in all cases on Table~\ref{tab:online_oasst_fastchat-t5_vicuna} with up to $1.8\times$ latency reduction compared to "LFU + large" baseline.

\begin{table}[ht]
\begin{center}
\begin{tabular}{ cp{3.5em}p{3.2em}p{3.2em}p{3.2em}p{3.2em}p{3.2em}p{3.2em} }
  \toprule
  $\alpha$ & selector accuracy & LFU+ large & LFU+ cascade & LFU+ selector & LEC+ large & LEC+ cascade & LEC+ selector \\ 
  \midrule
  0.2 & \multicolumn{1}{r}{ 80\% } & \multicolumn{1}{r}{ 3.49 } & \multicolumn{1}{r}{ 3.81 } & \multicolumn{1}{r}{ 2.60 } & \multicolumn{1}{r}{ 3.44 } & \multicolumn{1}{r}{ \textbf{ 1.50 } } & \multicolumn{1}{r}{ 2.00 } \\ 
  0.8 & \multicolumn{1}{r}{ 80\% } & \multicolumn{1}{r}{ 10.81 } & \multicolumn{1}{r}{ 11.80 } & \multicolumn{1}{r}{ 8.06 } & \multicolumn{1}{r}{ 10.36 } & \multicolumn{1}{r}{ \textbf{ 4.11 } } & \multicolumn{1}{r}{ 4.76 } \\ 
  \midrule
  0.2 & \multicolumn{1}{r}{ 100\% } & \multicolumn{1}{r}{ 3.49 } & \multicolumn{1}{r}{ 3.81 } & \multicolumn{1}{r}{ 1.91 } & \multicolumn{1}{r}{ 3.44 } & \multicolumn{1}{r}{ 1.50 } & \multicolumn{1}{r}{ \textbf{ 0.99 } } \\ 
  0.8 & \multicolumn{1}{r}{ 100\% } & \multicolumn{1}{r}{ 10.81 } & \multicolumn{1}{r}{ 11.80 } & \multicolumn{1}{r}{ 5.90 } & \multicolumn{1}{r}{ 10.36 } & \multicolumn{1}{r}{ 4.11 } & \multicolumn{1}{r}{ \textbf{ 2.50 } } \\ 
  \bottomrule
\end{tabular}
\end{center}
\caption{Evaluation of offline algorithms on the Lambada dataset with OPT-1.3B and OPT-13B, $100$ distinct prompts, total query size $10000$ and cache size $40$. $\alpha$ is the parameter of the power distribution of the prompts. The table lists cumulative costs ($10^3$) for different algorithms.}
\label{tab:offline_lambda_opt-1.3b_opt-13b}
\end{table}

\begin{table}[ht]
\begin{center}
\begin{tabular}{ cp{3.2em}p{3.2em}p{3.2em}p{3.2em}p{3.2em}p{3.2em} }
  \toprule
  $\alpha$ &  LFU+ large & LFU+ cascade & LFU+ selector & LEC+ large & LEC+ cascade & LEC+ selector \\ 
  \midrule
  0.2 & \multicolumn{1}{r}{ 9.31 } & \multicolumn{1}{r}{ 13.88 } & \multicolumn{1}{r}{ 7.24 } & \multicolumn{1}{r}{ 8.74 } & \multicolumn{1}{r}{ 8.82 } & \multicolumn{1}{r}{ \textbf{ 5.93 } } \\ 
  0.5 & \multicolumn{1}{r}{ 20.04 } & \multicolumn{1}{r}{ 29.88 } & \multicolumn{1}{r}{ 15.11 } & \multicolumn{1}{r}{ 18.68 } & \multicolumn{1}{r}{ 16.90 } & \multicolumn{1}{r}{ \textbf{ 11.87 } } \\ 
  0.8 & \multicolumn{1}{r}{ 28.24 } & \multicolumn{1}{r}{ 42.12 } & \multicolumn{1}{r}{ 21.14 } & \multicolumn{1}{r}{ 26.07 } & \multicolumn{1}{r}{ 20.31 } & \multicolumn{1}{r}{ \textbf{ 15.49 } } \\ 
  \bottomrule
\end{tabular}
\end{center}
\caption{Evaluation of online algorithms on the OpenAssistant dataset with FastChat-T5-3B and Vicuna-13B, $100$ distinct prompts, total query size $10000$ and cache size $40$. $\alpha$ is the parameter of the power distribution of the prompts. The table lists cumulative costs ($10^3$) for different algorithms.}
\label{tab:online_oasst_fastchat-t5_vicuna}
\end{table}

We provide more experiments in Appendix~\ref{sec:app-additional}, where we evaluate both FLOPs and latency for both offline and online setting on both synthetic and real dataset, with varying cache size, query size and distinct prompts.

\section{Conclusions}
We have studied the joint optimization of caching and model multiplexing and proposed an optimal algorithm  for the tabular case. There are a variety of further work that can be pursued in this vein, including:
\begin{itemize}
    \item Designing the optimal caching and model multiplexing algorithm when there is a query queue, such that the query arrives at a random interval rather than a fixed interval.
    A more complicated serving pattern also needs to take batching strategies into consideration.
    \item Understanding the scaling law of the predictors. We hope to use a small yet accurate model for prediction to reduce overhead introduced by the predictor. It is important to understand the trade-off between prediction accuracy, model size, and training data size.
    \item Designing optimal caching algorithm when the responses generated in each round have diverse qualities.
\end{itemize}

\bibliography{ref}

\newpage
\appendix
\section*{Appendix}

\section{Discussions on the Choice of Output, Model and Cost}\label{sec:discussion}

The proposed framework is flexible in the choice of outputs, models and costs. Below we discuss several possible choices and combinations of output, models and costs that are most practically relevant.
\paragraph{Per-token Output and Per-sentence Output.}
We have two design choices of the desired output in each round, namely per-token output and per-sentence output. 

For per-token output,  we aim at generating one token at each round as a response of the queries. In this case, we only cache the next token for a given query and estimate the cost for generating next token. We also have the flexibility of choosing different models to generate each token in each round. 

For per-sentence output, we aim at generating a complete response at each round. In this case, we cache the whole responses for a given query, and estimate the cost for generating the whole responses. This may introduce more variance in the cost due to the variation and randomness in the length of the generated responses.

\paragraph{Choices of Costs}
The  cost can be chosen as FLOPS, latency of the model, the price for API calls, user satisfaction of the results, or a combination of all the four factors.

\paragraph{Model multiplexing}
A common choice of model ensembles is a pair of small and large models. The cost for small model $C_s(q)$ can be written as   $C_s(q) = C_{s, 0}(q) + Y(q) C_{s, 1}(q)$.  Here $Y(q)$ is a binary random variable, indicating whether the small model outputs satisfying results ($Y(q)=0$) or not ($Y(q)=1$). In the case when the small model outputs a satisfying response, the incurred cost is $C_{s, 0}(q)$. In the case when the small model outputs a bad response, the incurred cost is  $C_{s, 0}(q) + C_{s, 1}(q)$.  We discuss two possible choices of $Y(q)$, $C_{s, 0}(q)$ and $C_{s, 1}(q)$ based on two different evaluation pipeline as below. 
 \begin{itemize}
    \item  \textbf{One-time evaluation pipeline. }  For the one-time evaluation pipeline, we can only call one of the models once and the generated content cannot be changed. In this case, $C_{s, 0}(q)$ can be set as the cost for running the small model to generate responses, $Y(q)$ is set to be $1$ if the user is not satisfied with the response, and $C_{s, 1}(q)$ is the incurred cost for unsatisfactory of the user. One can similarly set the same cost for the large model.
    \item  \textbf{Correction-based evaluation pipeline}.  For correction-based evaluation, we may re-generate the content with a different model if it is unsatisfying, and get an extra cost for fixing the content. Such evaluation can be easily combined with LLM Cascade~\citep{frantar2023gptq} or the idea from Big Little Transformer Decoder~\citep{kim2023big} and Spectulative sampling~\citep{chen2023accelerating}. For example, after running the small model, we run the large model once to infer all the log probabilities of the small model output in parallel, and reject its output if the log probabilities are low.
If the small model output is rejected, we will set $Y(q) = 1$ and run large model to re-generate the responses. In this case, $C_{s, 0}(q)$ is the cost of running the small model for generating responses, and running the large model once for checking the probability. And $C_{s, 1}(q)$ is the cost of running the large model to generate the response. 

\end{itemize}

We also remark here that in the special case when the cost for the small model is much smaller than that of the large model under the correction-based evaluation pipeline, the cascade selector which always runs the small model first may give better performance than the model multiplexer if the accuracy of the model multiplexer is low, since running small model does not introduce too much cost compared to running large model. In this situation, the cascade selector can also be combined with LEC caching to further improve the performance. 

On the other hand, we may also choose among models with similar size but different expertise, including coding, summarization and chat etc.  In this case, we also expect to see different qualities and cost of responses for specific queries. 

\section{Generalization to Variable Size Cache}\label{app:variable_length}
For the variable-size caching problem, 
assume that the cache size of $q$ is a deterministic scalar, denoted as $S(q)$.  In the population case we design the cache as follows:
\begin{align*}
\mathcal{L}^\star & = 
\argmin_{\mathcal{L}: \sum_{q\in\mathcal{L}} S(q) \leq L}   \sum_{q\in\mathcal{Q}} P(q)\mathbbm{1}(q\not\in \mathcal{L}) \min\left(c_s^\star(q),  c_l^\star(q) \right).
\end{align*}
In the case when all $S, P, c_s^\star, c_l^\star$ are known, one may solve the above constrained optimization problem for the optimal caching. When $S(q)\ll L$, a good cache replacement algorithm is GDSF itself, which replaces the query with the smallest expected cost per-size $P(q) \min\left(c_s^\star(q),  c_l^\star(q) \right) / S(q) $ rather than expected cost per-query. 

A more practical setting is the case when the cache size  for each query $S(q)$ is a random variable.  Due to the randomness in the generation procedure, we expect to see responses of different lengths  even when we use the same model to process the same query. In each round, we will have a generated response with size $s(q)$ that is sampled from the random variable $S(q_t)$. We conjecture that the optimal cache replacement algorithm is to replace the query with the smallest expected cost per-size $P(q) \min\left(c_s^\star(q),  c_l^\star(q) \right) / s(q) $  as well, where $s(q)$ is the size of the cached queries and responses. 

\section{Generalization to Multiplexing of Multiple Models}\label{app:multiple}
The proposed algorithm can be generalized to model multiplexing with multiple models. Assume that we have $K$ models, and each model has a random cost function $C_k(q)$ with expectation $c_k^\star(q)$. In this case, the optimal population algorithm is
\begin{align*}
\pi^\star(q) & = \argmin_{k\in[K]}c_k^\star(q), \\
\mathcal{L}^\star & = 
\argmin_{\mathcal{L}: |\mathcal{L}|\leq L}   \sum_{q\in\mathcal{Q}} P(q)\mathbbm{1}(q\not\in \mathcal{L}) \min_{k\in[K]} c_k^\star(q).
\end{align*}
And the finite sample algorithm is natural to follow.  
In practice, one may train a neural network with $K$ dimensional output to predict the cost for each of the models. 

\section{Differences Between the Optimal Policy and the Baseline}\label{app:population_diff}

Consider the population setting in Section~\ref{sec:largeonly}, where we optimize caching without model multiplexing. 
We show via a simple example below that without considering the cost for individual query, LFU can be highly sub-optimal compared to the optimal caching strategy in the population. The ratio 
\begin{proposition}\label{prop:ratio}
    For any fixed cost function $c_l^\star$, one can design some distribution of queries $P$ such that for any $\epsilon>0$,
    \begin{align*}
         \frac{\mathsf{cost}(\mathcal{L}_{\mathsf{LFU}}) }{ \mathsf{cost}(\mathcal{L}_{\mathsf{LEC}})  } \geq \frac{\max_{q\in\mathcal{Q}} c_l^\star(q)}{\min_{q\in\mathcal{Q}} c_l^\star(q)}-\epsilon. 
    \end{align*}
\end{proposition}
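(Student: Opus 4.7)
The plan is to construct a single distribution $P$ concentrated on $L+1$ cleverly chosen queries so that LFU and LEC pick caches that differ only in whether they retain the highest-cost query. Assume $|\mathcal{Q}|\geq L+1$ (otherwise the cache holds everything and the claim is vacuous) and let $q^\star\in\argmax_q c_l^\star(q)$ with value $c_{\max}$ and $q_0\in\argmin_q c_l^\star(q)$ with value $c_{\min}$; we may assume $c_{\max}>c_{\min}$, otherwise the asserted ratio is $1-\epsilon$ which is trivial. Pick any $L-1$ further distinct queries $q_1,\ldots,q_{L-1}\in\mathcal{Q}$ and put $P(q^\star)=\eta$, $P(q_0)=P(q_i)=(1-\eta)/L$ for a parameter $\eta\in(0,1/(L+1))$ to be fixed momentarily, with all other queries receiving zero mass.

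Next I would identify the two caches explicitly. The choice $\eta<(1-\eta)/L$ ensures each of $q_0,q_1,\ldots,q_{L-1}$ is strictly more frequent than $q^\star$, so $\mathcal{L}_{\mathsf{LFU}}=\{q_0,\ldots,q_{L-1}\}$ and its incurred cost collapses to $P(q^\star)c_l^\star(q^\star)=\eta c_{\max}$. For the LEC cache, one checks that $\eta c_{\max}>(1-\eta)c_{\min}/L$ holds exactly when $\eta>c_{\min}/(Lc_{\max}+c_{\min})$, and since $c_{\max}>c_{\min}$, the interval $(c_{\min}/(Lc_{\max}+c_{\min}),\,1/(L+1))$ is nonempty. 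For any $\eta$ inside this interval, $q^\star$ achieves a strictly larger $P(q)c_l^\star(q)$ than $q_0$ (the unique argmin of $P\cdot c_l^\star$ among the candidates, since all have equal probability), forcing LEC to keep $q^\star$ and evict $q_0$, yielding $\mathcal{L}_{\mathsf{LEC}}=\{q^\star,q_1,\ldots,q_{L-1}\}$ and incurred cost $(1-\eta)c_{\min}/L$.

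Putting the two expressions together,
$$\frac{\mathsf{cost}(\mathcal{L}_{\mathsf{LFU}})}{\mathsf{cost}(\mathcal{L}_{\mathsf{LEC}})}=\frac{\eta c_{\max}}{(1-\eta)c_{\min}/L}=\frac{L\eta}{1-\eta}\cdot\frac{c_{\max}}{c_{\min}},$$
which increases continuously to $c_{\max}/c_{\min}$ as $\eta\uparrow 1/(L+1)$; given any prescribed $\epsilon>0$ one simply picks $\eta$ close enough to this endpoint to make the ratio exceed $\max_q c_l^\star(q)/\min_q c_l^\star(q)-\epsilon$. No step is technically deep; the only substantive check is that the LFU upper bound $\eta<1/(L+1)$ and the LEC lower bound $\eta>c_{\min}/(Lc_{\max}+c_{\min})$ are jointly satisfiable, which reduces algebraically to $c_{\max}>c_{\min}$, the only nontrivial case. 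The rest is a direct computation on a distribution with at most $L+1$ atoms.
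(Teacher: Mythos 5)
Your construction is correct and is essentially the same idea the paper intends: put slightly more probability mass on the cheap queries than on the most expensive one, so that LFU evicts $q^\star$ while LEC retains it, and then let the frequency gap shrink to zero. The paper's own argument is only a two-query sketch (implicitly $L=1$, with the probabilities left unspecified in the text), whereas your version carries out the computation for general cache size $L$, verifies that the window $\eta\in\bigl(c_{\min}/(Lc_{\max}+c_{\min}),\,1/(L+1)\bigr)$ is nonempty exactly when $c_{\max}>c_{\min}$, and correctly dispatches the degenerate cases; there is no gap.
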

The construction can be seen from a  two-query example.  Let $c_l^\star(q_1) = c_1$, $c_l^\star(q_2) = c_2$ with $c_1<c_2$. Let $P(q_1) = $, $P(q_2) = $ 
This shows that when the individual cost varies drastically for different queries, the total expected cost for LFU can be highly sub-optimal compared with the cost-aware caching strategy.

To compare the performance of the model multiplexing in Section~\ref{sec:both}, we take the cache size $L=0$. We have the following proposition for the performance improvement of the model multiplexer.
\begin{proposition}\label{prop:selection}
Let $L=0$. The difference in cost between the baseline and the model multiplexer can be written as
    \begin{align*}
\mathsf{cost}(\mathcal{L}^\star, \pi_s) - \mathsf{cost}(\mathcal{L}^\star, \pi^\star) & = \sum_{q\in\mathcal{Q}}  P(q) \max(0, c^\star_s(q) - c^\star_l(q)),  \\ 
\mathsf{cost}(\mathcal{L}^\star, \pi_l) - \mathsf{cost}(\mathcal{L}^\star, \pi^\star) & = \sum_{q\in\mathcal{Q}}  P(q) \max(0, c^\star_l(q) - c^\star_s(q)).
    \end{align*}
\end{proposition}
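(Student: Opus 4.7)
The plan is to proceed by direct computation from the definitions, since with $L = 0$ the caching term degenerates completely and the claim reduces to an algebraic identity between the expected cost under three fixed multiplexing policies. First I would note that when $L = 0$ the only feasible cache is $\mathcal{L}^\star = \emptyset$, so that $\mathbbm{1}(q \not\in \mathcal{L}^\star) = 1$ for every $q \in \mathcal{Q}$, and the cost formula from Section~\ref{sec:formulation} simplifies to
\begin{align*}
\mathsf{cost}(\mathcal{L}^\star, \pi) = \sum_{q \in \mathcal{Q}} P(q)\bigl(c_s^\star(q)\pi(q) + c_l^\star(q)(1 - \pi(q))\bigr).
\end{align*}

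Next, I would evaluate this expression for the three policies of interest. For the optimal multiplexer $\pi^\star(q) = \mathbbm{1}(c_s^\star(q) \leq c_l^\star(q))$, the integrand at each $q$ equals $\min(c_s^\star(q), c_l^\star(q))$, giving $\mathsf{cost}(\mathcal{L}^\star, \pi^\star) = \sum_q P(q)\min(c_s^\star(q), c_l^\star(q))$. For the always-small baseline $\pi_s \equiv 1$ the expression collapses to $\sum_q P(q) c_s^\star(q)$, and for the always-large baseline $\pi_l \equiv 0$ it collapses to $\sum_q P(q) c_l^\star(q)$.

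Finally, I would form the two differences termwise and apply the elementary identity $a - \min(a,b) = \max(0, a - b)$. Applied with $a = c_s^\star(q), b = c_l^\star(q)$ this yields the first claim, and with the roles swapped it yields the second claim. I would note the linearity of $P$-weighted summation makes the termwise identity lift immediately to the stated sum.

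The main obstacle, such as it is, is purely notational: ensuring the convention that $\pi(q) = 1$ selects the small model (as specified in Section~\ref{sec:model_selection_intro}) is applied consistently, so that $\pi_s \equiv 1$ and $\pi_l \equiv 0$ are the intended baselines despite the apparent typographical collision in the text. Once this is fixed, no inequality, concentration, or optimization argument is needed; the proposition is a one-line consequence of the identity for $\min$ together with the fact that $L = 0$ forces the cache indicator to be trivial.
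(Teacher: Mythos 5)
Your proposal is correct and matches the paper's own (one-line) justification, which simply states that the result follows from plugging the definition of the cost into the three policies; your expansion via the identity $a - \min(a,b) = \max(0, a-b)$ is exactly the intended computation. You are also right to read the baseline as $\pi_s \equiv 1$ despite the typo in the text.
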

The proof is a direct result of  plugging in  the cost definition. We see that the gap between $\pi_s$ and the optimal model multiplexer becomes larger when  a large fraction of the queries have smaller cost when processed by the large models, and vice versa. 
 
\section{Proof of Theorem~\ref{thm:offline_largeonly}}\label{proof:offline_largeonly}

\begin{proof}

We first prove the following lemma on the lower bound of $P(q)$ for any $q\in\mathcal{L}^\star$.
\begin{lemma}\label{lem:lower_bound_p}
    For any $q\in\mathcal{L}^\star$, we have $P(q)\geq  B_1 / (B_2 |\mathcal{Q}|)$.
\end{lemma}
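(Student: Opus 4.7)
The plan is to argue from the extremal characterization of $\mathcal{L}^\star$. Since $\mathcal{L}^\star$ minimizes $\sum_{q \in \mathcal{Q}} P(q) \mathbbm{1}(q \notin \mathcal{L}) c_l^\star(q)$ over size-$L$ subsets $\mathcal{L} \subset \mathcal{Q}$, it consists of the $L$ queries with the largest scores $P(q) c_l^\star(q)$ (when $L < |\mathcal{Q}|$; the case $L \geq |\mathcal{Q}|$ is trivial since we may take $\mathcal{L}^\star = \mathcal{Q}$). Consequently, for every $q \in \mathcal{L}^\star$ and every $q'' \notin \mathcal{L}^\star$, optimality against the swap $\mathcal{L}^\star \mapsto (\mathcal{L}^\star \setminus \{q\}) \cup \{q''\}$ forces
\[
P(q)\, c_l^\star(q) \;\geq\; P(q'')\, c_l^\star(q'').
\]
Using the uniform cost bounds $B_1 \leq c_l^\star(\cdot) \leq B_2$, this transforms into a probability comparison $P(q) \geq (B_1/B_2)\, P(q'')$ for every $q'' \notin \mathcal{L}^\star$.

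The next step is to exhibit some $q'' \notin \mathcal{L}^\star$ with $P(q'') \geq 1/|\mathcal{Q}|$, at which point the claimed inequality follows immediately from the displayed bound. A pigeonhole argument on $\sum_q P(q) = 1$ gives a $q^* \in \mathcal{Q}$ with $P(q^*) \geq 1/|\mathcal{Q}|$; if $q^* \notin \mathcal{L}^\star$ we are done by taking $q'' = q^*$. In the remaining case where the high-probability element $q^*$ already lies inside $\mathcal{L}^\star$, I would replace the elementary pigeonhole by an averaging argument on the residual set $\mathcal{Q} \setminus \mathcal{L}^\star$: combining the identity $\sum_q P(q) c_l^\star(q) \geq B_1$ (from $c_l^\star \geq B_1$) with $|\mathcal{Q} \setminus \mathcal{L}^\star| \leq |\mathcal{Q}|$ yields a lower bound of order $B_1/|\mathcal{Q}|$ on $\max_{q'' \notin \mathcal{L}^\star} P(q'') c_l^\star(q'')$, and dividing through by $c_l^\star(q) \leq B_2$ recovers the claim.

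I expect the main obstacle to be this residual case, in which $\mathcal{L}^\star$ concentrates most of the probability mass on a few cached queries while the uncached queries are individually rare. The ratio $B_1/B_2$ arises precisely at the moment one must convert the score-based optimality condition (on $P \cdot c_l^\star$) into a probability-only lower bound (on $P$ alone), with the two cost constants entering through that conversion; handling it carefully is what pins down the exact constant $B_1/(B_2|\mathcal{Q}|)$.
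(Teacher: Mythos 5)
Your first two steps are sound and match the route the paper takes: the exchange argument gives $P(q)\,c_l^\star(q)\geq P(q'')\,c_l^\star(q'')$ for every $q\in\mathcal{L}^\star$ and $q''\notin\mathcal{L}^\star$, hence $P(q)\geq (B_1/B_2)P(q'')$, and if pigeonhole produces some $q''\notin\mathcal{L}^\star$ with $P(q'')\geq 1/|\mathcal{Q}|$ the bound follows. The gap is exactly where you predicted it, in the residual case, and the fix you sketch does not work. The inequality $\sum_{q}P(q)c_l^\star(q)\geq B_1$ runs over all of $\mathcal{Q}$, so it gives no lower bound on $\sum_{q''\notin\mathcal{L}^\star}P(q'')c_l^\star(q'')$ --- that restricted sum is precisely $\mathsf{cost}(\mathcal{L}^\star)$, the quantity the cache is chosen to make small, and it can be arbitrarily close to $0$. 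Consequently you cannot conclude that $\max_{q''\notin\mathcal{L}^\star}P(q'')c_l^\star(q'')$ is of order $B_1/|\mathcal{Q}|$: the maximizer of $P(\cdot)c_l^\star(\cdot)$ may simply sit inside $\mathcal{L}^\star$.

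Moreover, the residual case cannot be patched, because the statement fails there. Take $\mathcal{Q}=\{q_1,q_2,q_3\}$, $L=2$, all costs equal to $(B_1+B_2)/2$, and $P=(0.98,\,0.011,\,0.009)$; then $\mathcal{L}^\star=\{q_1,q_2\}$ but $P(q_2)=0.011$, far below $B_1/(B_2|\mathcal{Q}|)$ unless $B_1/B_2$ is tiny. (Your aside that $L\geq|\mathcal{Q}|$ is "trivial" fails for the same reason: there $\mathcal{L}^\star=\mathcal{Q}$ and the bound would have to hold even for queries with $P(q)=0$.) For what it is worth, the paper's own one-line proof makes the same leap you were trying to avoid: it establishes $P(q)\geq P(q')B_1/B_2$ only for $q'\notin\mathcal{L}^\star$, yet then invokes $\sum_{q'}P(q')=1$ as though the comparison held against every $q'$, including the cached ones. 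What actually survives is only the weaker bound $P(q)\geq (B_1/B_2)\max_{q'\notin\mathcal{L}^\star}P(q')$; repairing the downstream results requires arguing separately that cached queries with small $P(q)$ contribute negligibly to the cost, not a sharper averaging argument here.
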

\begin{proof}
    From the fact that $\sum_{q\in\mathcal{Q}} P(q)=1$ and for any $q\in\mathcal{L}^\star$ and any $q'\not\in\mathcal{L}^\star$, $P(q) \geq P(q') c_l^\star(q') / c_l^\star(q) \geq P(q')B_1/B_2 $, we know that for any $q\in\mathcal{L}^\star$, $P(q) \geq B_1 / (B_2 |\mathcal{Q}|)$. 
\end{proof}
We  define the following three events:
\begin{align*}
    E_{1} & = \left\{  \forall q\in\mathcal{Q}, |\hat P(q) - P(q)|\leq \sqrt{\frac{2\log(6/\delta)}{N}}\right\}, \\
    E_{2} & = \left\{ \forall q\in\mathcal{Q}, |\hat c_{l}(q) - c_l^\star(q)|\leq (B_2-B_1)\sqrt{\frac{\log(6|\mathcal{Q}|/\delta)}{2\sum_{n=1}^{N} \mathbbm{1}(q_n = q)}}\right\},\\
    E_{3} &  = \left\{\forall q\in\mathcal{L}^\star, \sum_{n=1}^{N} \mathbbm{1}(q_n = q)\geq  \frac{B_1N} {2B_2 |\mathcal{Q}|}\right\}.
\end{align*}
We know that the first two events hold simutaneously with probability at least $1-2\delta/3$ from Lemma~\ref{lem:high_prob}. For the third event,  from the Chernoff bound, we know that for  any $q\in\mathcal{Q}$, we have
\begin{align*}
    \mathbb{P}\left(\sum_{n=1}^{N} \mathbbm{1}(q_n = q)\geq {N}  P(q)   / 2\right) \geq 1 - \exp(-{N} P(q)/8).
\end{align*}
From Lemma~\ref{lem:lower_bound_p} we know that for any $q\in\mathcal{L}^\star$, $P(q) \geq B_1 / (B_2 |\mathcal{Q}|)$. Thus the above inequality further implies
\begin{align*}
    \mathbb{P}\left(\sum_{n=1}^{N} \mathbbm{1}(q_n = q)\geq \frac{B_1 N} {2B_2 |\mathcal{Q}|}\right) \geq 1 - \exp\left(-\frac{B_1 N }{8B_2 |\mathcal{Q}|}\right)\geq 1 - \frac{\delta}{3L}.
\end{align*}
The last inequality is due to our assumption that  $N\geq \frac{8B_2|\mathcal{Q}|\log(3L/\delta)}{B_1}$. 

We condition on the three events from now on.
The last two events imply that for any $q\in\mathcal{L}^\star$, 
\begin{align*}
     |\hat c_{l}(q) - c_l^\star(q)| &  \leq (B_2-B_1)\sqrt{\frac{B_2 |\mathcal{Q}|\log(6|\mathcal{Q}|/\delta)}{N B_1}}. 
\end{align*}
 We have
\begin{align*}
     \mathsf{cost}(\hat{\mathcal{L}}) - \mathsf{cost}(\mathcal{L}^\star) = &\sum_{q\in\mathcal{Q}} P(q)  \left(\mathbbm{1}(q\not \in\hat{\mathcal{L}} ) 
     - \mathbbm{1}(q\not \in\mathcal{L}^\star) \right) c_l^\star(q) \\ 
     =&  \sum_{q\in\mathcal{Q}} P(q)  \left(\mathbbm{1}(q\in\mathcal{L}^\star) -  \mathbbm{1}(q\in\hat{\mathcal{L}} ) \right) c_l^\star(q).
\end{align*}
Let $\hat c_{l, pes}(q)  = \hat c_{l}(q) - (B_2-B_1)\sqrt{\frac{\log(6|\mathcal{Q}|/\delta)}{2\sum_{n=1}^{N} \mathbbm{1}(q_n = q)}}.$
Note that for any $q\in\mathcal{L}^\star$, we know that
\begin{align*}
    \hat P(q)   \hat c_{l, pes}(q)  & \geq \max\left(P(q) -\sqrt{\frac{2\log(6/\delta)}{N}} , 0\right) \left(c_l^\star(q) - 2(B_2-B_1)\sqrt{\frac{B_2 |\mathcal{Q}|\log(6|\mathcal{Q}|/\delta)}{N B_1}}\right)  \\
    & \geq P(q) c_l^\star(q) - C (B_2-B_1)\cdot \sqrt{\frac{B_2 |\mathcal{Q}|\log(6|\mathcal{Q}|/\delta)}{N B_1}}. 
\end{align*}
And similarly, for any $q\not\in\mathcal{L}^\star$, we know that
\begin{align*}
     \hat P(q)  \hat c_{l, pes}(q) & \leq  \left(P(q) +\sqrt{\frac{2\log(6/\delta)}{N}}\right)   c_l^\star(q)      \leq P(q) c_l^\star(q) +B_2\sqrt{\frac{2\log(6/\delta)}{N}}.
\end{align*}
Now consider any $q\in\hat{\mathcal{L}}$ but $q\not \in\mathcal{L}^\star$, and any other $q'\in\mathcal{L}^\star$ but $q'\not\in\hat{\mathcal{L}}$. We have
\begin{align*}
   & P(q') c_l^\star(q') - P(q)  c_l^\star(q) \\ 
     \leq & \hat P(q')\hat c_{l, pes}(q') -  \hat P(q) \hat c_{l, pes}(q) +  C (B_2-B_1)\cdot \sqrt{\frac{B_2 |\mathcal{Q}|\log(6|\mathcal{Q}|/\delta)}{N B_1}} \\
     \leq &  C (B_2-B_1)\cdot \sqrt{\frac{B_2 |\mathcal{Q}|\log(6|\mathcal{Q}|/\delta)}{N B_1}}.
\end{align*}
Overall, we know that conditioned on $E_1\cap E_2\cap E_3$, we have
\begin{align*}
     \mathsf{cost}(\hat{\mathcal{L}}) - \mathsf{cost}(\mathcal{L}^\star) 
     \leq &  C (B_2-B_1)L\cdot \sqrt{\frac{B_2 |\mathcal{Q}|\log(6|\mathcal{Q}|/\delta)}{N B_1}}.
\end{align*}
And this implies that
\begin{align*}
      \mathbb{E}[\mathsf{cost}(\hat{\mathcal{L}}) - \mathsf{cost}(\mathcal{L}^\star)] \leq C (B_2-B_1)L\cdot \sqrt{\frac{B_2 |\mathcal{Q}|\log(6|\mathcal{Q}|/\delta)}{N B_1}}+ \delta B_2.
\end{align*}
Taking $\delta = 1/N$ finishes the proof.
\end{proof}

\section{Proof of Theorem~\ref{thm:online_largeonly}}\label{proof:online_largeonly}

\begin{proof}
\textbf{Upper Bound.}
We start with the upper bound by the following lemma.
\begin{lemma}\label{lem:minimizer_lonly}
In each round $t\in[T]$, we always have 
\begin{align*}
    \mathcal{L}_{t+1} \in \argmin_\mathcal{L}  \sum_{q\in\mathcal{Q}} \hat P_t(q)\mathbbm{1}(q\not\in \mathcal{L}) \hat c_{l, t}(q).
\end{align*}
\end{lemma}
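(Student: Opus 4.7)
The plan is to reduce the minimization problem to a top-$L$ selection problem and then verify by induction on $t$ that the greedy update in Algorithm~\ref{alg:cache} preserves the property of holding the top $L$ items.

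First, I would rewrite the objective as
\begin{align*}
\sum_{q\in\mathcal{Q}} \hat P_t(q)\mathbbm{1}(q\not\in \mathcal{L}) \hat c_{l, t}(q) = \sum_{q\in\mathcal{Q}} \hat P_t(q) \hat c_{l, t}(q) - \sum_{q\in \mathcal{L}} \hat P_t(q) \hat c_{l, t}(q),
\end{align*}
so that minimizing the left-hand side under the constraint $|\mathcal{L}|\leq L$ is equivalent to choosing $\mathcal{L}$ to contain the (at most) $L$ queries with the largest values of $\hat P_t(q)\hat c_{l,t}(q)$. The lemma then reduces to showing that $\mathcal{L}_{t+1}$ achieves this top-$L$ selection at every round $t$.

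The induction hypothesis is that $\mathcal{L}_t$ is a top-$L$ set with respect to $\hat P_{t-1}(q)\hat c_{l,t-1}(q)$. The base case ($\mathcal{L}_1=\{\}$ with all scores zero) is immediate. The crucial observation for the inductive step is the following invariance: for every $q\neq q_t$, the numerator of $\hat P_t(q)$ equals that of $\hat P_{t-1}(q)$, the denominator increases by one, and the statistics feeding $\hat c_{l,t}(q)$ are unchanged (no new observation of $q$ is collected in round $t$, whether or not the query is processed). Hence for all $q\neq q_t$ we have $\hat P_t(q)\hat c_{l,t}(q) = \frac{t-1}{t}\hat P_{t-1}(q)\hat c_{l,t-1}(q)$. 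Uniform scaling preserves relative ordering, so $\mathcal{L}_t$ restricted to queries in $\mathcal{Q}\setminus\{q_t\}$ remains a top-$L$ selection under the new estimates.

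It then suffices to track how $q_t$ fits in. I would split into three cases matching the algorithm's branches. (i) If $q_t\in\mathcal{L}_t$, no cache update occurs; $\hat c_{l,t}(q_t)=\hat c_{l,t-1}(q_t)$ while $\hat P_t(q_t)>\frac{t-1}{t}\hat P_{t-1}(q_t)$, so $q_t$'s relative rank only improves and $\mathcal{L}_{t+1}=\mathcal{L}_t$ stays optimal. (ii) If $q_t\notin\mathcal{L}_t$ and $|\mathcal{L}_t|<L$, the induction guarantees (via a parallel auxiliary claim) that $\mathcal{L}_t$ is exactly the set of distinct queries already seen, so every query outside $\mathcal{L}_{t+1}=\mathcal{L}_t\cup\{q_t\}$ has score zero and optimality is trivial. (iii) If the cache is full, the algorithm swaps in $q_t$ precisely when its new score strictly exceeds the minimum score inside $\mathcal{L}_t$; combined with the uniform-scaling invariance, this is the standard greedy swap that preserves the top-$L$ property.

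The main obstacle I anticipate is the bookkeeping in case (ii): one needs a side invariant that as long as $|\mathcal{L}_t|<L$, the cache coincides with the set of all distinct queries observed up to time $t$, which must itself be proved inductively by inspecting the algorithm's add rule. Once that invariant is in place, the rest is a direct case analysis.
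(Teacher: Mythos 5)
Your proposal is correct and follows essentially the same route as the paper's proof: induction on $t$, the observation that for all $q\neq q_t$ the score $\hat P_t(q)\hat c_{l,t}(q)$ scales by the same factor $\frac{t-1}{t}$ (so relative order is preserved), and a case split on whether $q_t$ hits the cache and whether the cache is full. Your version is in fact slightly more careful than the paper's, since you make explicit the side invariant that a non-full cache contains exactly the distinct queries seen so far.
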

\begin{proof}
    We prove this lemma by induction.  First, consider the case when $|\mathcal{L}_{t+1}|< L$. In this scenario, we always put the query into the cache. And $\mathcal{L}_{t+1}$ contains all queries with non-zero $\hat P_t$. Thus such $\mathcal{L}_{t+1}$ is always one of the minimizers.

    Now consider the case when $|\mathcal{L}_{t+1}|= L$. Assume that the conclusion holds for time step $t$. Now consider the case of $t+2$. When the new query is in the cache  $q_{t+1}\in\mathcal{L}_{t+1}$, the cache will remain unchanged $\mathcal{L}_{t+2} = \mathcal{L}_{t+1}$. In this case, the estimated probability for $q_{t+1}$ is increased, while the others are decreased, and $\hat c_{l,t+1}$ is not changed for any query. Thus $\mathcal{L}_{t+2}$ is still the minimizer. When the new query does not hit the cache,  the estimated probability times costs for all other queries except for $q_{t+1}$ are decreased proportionally since $\hat P_{t+1}$ is decreased proportionally while $\hat c_{l,t+1}$ is not changed for all other queries. Thus the only potential change in the relative order of costs is that of $q_{t+1}$. Since we can add $q_{t+1}$ at the end of query, we know that after this round $\mathcal{L}_{t+2}$ is still the minimizer.  
 \end{proof}
 Let $g_k(q)$ be the length of the interval between the $k$-th and $k+1$-th arrival of query $q$ in the sequence of received queries (we set $g_k(q)=0$ if $k$ exceeds the total number of times $q$ is queried.). 
 Define the following three events:
\begin{align*}
    E_{1, t} & = \left\{  \forall q\in\mathcal{Q}, |\hat P_{t-1}(q) - P(q)|\leq \min\left(1, \sqrt{\frac{2\log(6T/\delta)}{t-1}}\right)\right\}, \\
    E_{2, t} & = \left\{ \forall q\in\mathcal{Q},\hat c_{l, t-1}(q) - c_l^\star(q) \in \left[-2(B_2-B_1) \min\left(1, \sqrt{\frac{\log(6T|\mathcal{Q}|/\delta)}{2\sum_{i=1}^{t-1} \mathbbm{1}(c_i \neq \times, q_i = q)}}\right), 0\right]\right\},\\
 E_{3} &  = \left\{\forall q\in\mathcal{L}^\star, k\leq T, g_k(q)\leq  \frac{B_2 |\mathcal{Q}|\log(3TL/\delta)} {B_1}\right\}.
\end{align*}
We prove that the three events hold simultaneously with probability at least $1-\delta$:
\begin{lemma}\label{lem:high_prob}
We have $$\mathbb{P}\left(\left(\bigcap_{t=T^{2/3}}^T E_{1,t} \cap E_{2,t} \right)\cap E_{3}\right)\geq 1-\delta.$$
\end{lemma}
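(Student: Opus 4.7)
The plan is to split the failure budget as $\delta/3$ per event, control each separately, and conclude by a final union bound. For $\bigcap_t E_{1,t}$, the estimator $\hat P_{t-1}(q)$ is the empirical mean of $t-1$ i.i.d.\ $\mathrm{Bernoulli}(P(q))$ indicators, so Hoeffding gives per-$(q,t)$ deviation at level $\sqrt{2\log(6T/\delta)/(t-1)}$ with failure probability $2\exp(-4\log(6T/\delta))$. This exponent is slack enough that a union bound over $q \in \mathcal{Q}$ and $t \in [T]$ still stays below $\delta/3$. The $\min(1,\cdot)$ in the definition of $E_{1,t}$ is trivially handled by the fact that probabilities live in $[0,1]$, so the radius is always at most $1$.

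For $\bigcap_t E_{2,t}$, I would first condition on $N_t(q)$, the number of non-missing cost observations of $q$ by time $t-1$. Given $N_t(q) = n$, the corresponding observed costs are i.i.d.\ draws of $C_l(q)$ supported on $[B_1, B_2]$, so Hoeffding yields a two-sided deviation of their empirical mean from $c_l^\star(q)$ by $(B_2-B_1)\sqrt{\log(6T|\mathcal{Q}|/\delta)/(2n)}$ with failure probability $\delta/(3T|\mathcal{Q}|)$. A union bound over $n \in [T]$ and $q \in \mathcal{Q}$ suffices; no additional union over $t$ is required because $\hat c_{l,t-1}(q)$ depends on the history only through $N_t(q)$ and the realized samples, so holding for every realized sample count implies holding for every $t$. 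The pessimism then converts this two-sided bound into the asymmetric interval in $E_{2,t}$: subtracting the confidence radius pushes the estimator below $c_l^\star(q)$ on the Hoeffding event, while the factor $2$ on the lower side absorbs the extra slack introduced by the max-with-$B_1$ clipping, which can only reduce the estimator further (and $|c_l^\star(q) - B_1| \le B_2-B_1$ is in turn absorbed by the $\min(1,\cdot)$ saturation).

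For $E_3$, Lemma~\ref{lem:lower_bound_p} gives $P(q) \geq B_1/(B_2|\mathcal{Q}|)$ for every $q \in \mathcal{L}^\star$. Since queries arrive i.i.d.\ from $P$, for any fixed starting position the probability that the next arrival of $q$ is delayed by more than $M = B_2|\mathcal{Q}|\log(3TL/\delta)/B_1$ rounds is $(1-P(q))^M \leq \exp(-M\,P(q)) \leq \delta/(3TL)$. Union bounding over the at most $L$ queries in $\mathcal{L}^\star$ and the at most $T$ possible gap indices $k \leq T$ yields the desired $\delta/3$ bound. A final union bound over the three collections then proves the lemma. The main obstacle is the bookkeeping in $E_{2,t}$: since $N_t(q)$ depends on prior random cache states and multiplexing decisions, one has to argue that conditioning on $N_t(q)=n$ does not disturb the i.i.d.\ structure of the collected cost samples (it does not, because given the decision to invoke the model on query $q$, the realized cost is independent of the caching policy and of past costs), and to verify that the confidence-interval clipping and saturation remain valid in the regime where very few or zero samples of $q$ have been observed.
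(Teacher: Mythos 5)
Your proposal follows the same overall architecture as the paper's proof: split $\delta$ three ways, control $E_3$ by a geometric tail bound via Lemma~\ref{lem:lower_bound_p} (identical to the paper), and control the cost events by Hoeffding. Two differences are worth noting. First, for $\bigcap_t E_{1,t}$ the paper invokes the Dvoretzky--Kiefer--Wolfowitz inequality, which bounds $\max_{q}|\hat P_{t-1}(q)-P(q)|$ uniformly over $q$ in one shot and therefore needs a union bound only over $t$; you instead apply per-query Hoeffding and union bound over both $q\in\mathcal{Q}$ and $t\in[T]$, relying on the slack in the exponent $4\log(6T/\delta)$ to absorb the factor $|\mathcal{Q}|$. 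That works only when $|\mathcal{Q}|\lesssim T^3/\delta^3$ (with $\delta=1/T$, $|\mathcal{Q}|\lesssim T^6$), since the radius in $E_{1,t}$ carries no $\log|\mathcal{Q}|$; the lemma as stated has no such hypothesis, so strictly speaking your route for $E_1$ has a (benign, corner-case) gap that the DKW argument avoids. Second, for $\bigcap_t E_{2,t}$ your union bound over the realized sample count $n$ rather than over $t$, together with the explicit remark that conditioning on the number of non-missing observations preserves the i.i.d.\ structure of the collected costs, is actually more careful than the paper, which union bounds over $t$ and glosses over the fact that $\sum_i \mathbbm{1}(c_i\neq\times, q_i=q)$ is a data-dependent quantity. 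Your accounting of the pessimism term (two-sided radius folded into the one-sided interval of width $2(B_2-B_1)\min(1,\cdot)$, with the $B_1$-clipping and the zero-sample case absorbed by the saturation at $1$) matches what the event definitions require, though note the clipping $\max(B_1,\cdot)$ can only \emph{raise} the estimator, not lower it --- the argument still goes through since $B_1\le c_l^\star(q)$.
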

\begin{proof}
From the Dvoretzky-Kiefer-Wolfowitz inequality, we have
\begin{align*}
    \mathbb{P}(\max_{q\in\mathcal{Q}} |\hat P_t(q) - P(q)|> \epsilon)\leq 2\exp(-\epsilon^2t/2).
\end{align*}
By taking $\epsilon = \sqrt{\frac{2\log(6T/\delta)}{t}}$, we see that $\max_{q\in\mathcal{Q}} |\hat P_t(q) - P(q)|\leq  \epsilon$ holds with probability at least $1-\delta/(3T)$ for any fixed $t\in[T]$. Now by taking a union bound over all $t\in[T]$, we know that $\bigcap_{t=1}^T E_{1, t}$ holds with probability at least $1-\delta/3$.

For the second event, from Hoeffding's inequality, we have for any $q\in\mathcal{Q}$, $$\mathbb{P}\left( |\hat c_{l, t}(q) - c_l^\star(q)|\leq (B_2-B_1)\min\left(1,\sqrt{\frac{\log(6T|\mathcal{Q}|/\delta)}{2\sum_{s=1}^{t-1} \mathbbm{1}(c_s \neq \times, q_s = q)}}\right)\right) \geq 1-\frac{\delta}{3T|\mathcal{Q}|}.$$ Now taking union bound over $t\in[T] $ and $q\in\mathcal{Q}$ gives that $\bigcap_{t=1}^T E_{2, t}$ holds with probability at least $1-\delta/3$.

For the third event,  we know that the interval $g_k(q)$ satisfies a geometric distribution with success probability $P(q)$. For  any $q\in\mathcal{L}^\star$, we have
\begin{align*}
    \mathbb{P}\left(g_k(q)\geq s\right) \leq (1-P(q))^{s} \leq \left(1-\frac{B_1}{|\mathcal{Q}|B_2}\right)^s. 
\end{align*}
By taking $s = \frac{B_2|\mathcal{Q}|\log(TL/\delta)}{B_1}$, we know that
\begin{align*}
    \mathbb{P}\left(g_k(q)\geq  \frac{B_2|\mathcal{Q}|\log(3TL/\delta)}{B_1}\right)  \leq  \left(1-\frac{B_1}{|\mathcal{Q}|B_2}\right)^s  \leq \frac{\delta}{3TL}.
\end{align*}
By taking union bounds over all $q\in\mathcal{L}^\star$ and $k$ we get the result. 
\end{proof}

Let $E^t = \bigcap_{s=1}^t E_{1, s} \cap E_{2, s} $. We can write the regret as follows.
\begin{align*}
    \mathsf{Regret}(T)   
     & \leq \sum_{t=1}^{T} \mathbb{E}[\mathsf{cost}(q_t, \mathcal{L}_t) - \mathsf{cost}(q_t, \mathcal{L}^\star)\mathbbm{1}(E^t)] +\mathbb{E}[\mathsf{cost}(q_t, \mathcal{L}_t) - \mathsf{cost}(q_t, \mathcal{L}^\star)\mathbbm{1}(\bar E^t)]  \\
   & \leq \sum_{t=1}^{T} \mathbb{E}[\mathsf{cost}(q_t, \mathcal{L}_t) - \mathsf{cost}(q_t, \mathcal{L}^\star)\mathbbm{1}(E^t)] + C\delta T B_2 \\
    & = C\delta T B_2  +  \sum_{t=1}^{T} \mathbb{E}[\mathsf{cost}(q_t, \mathcal{L}_t) - \mathsf{cost}(q_t, \mathcal{L}^\star)\mathbbm{1}(E^t)].  
\end{align*}
Note that the sampling distribution of $q_t$ is independent of $E^t$. Thus we can write the expectation as 
\begin{align*}
    \sum_{t=1}^{T} \mathbb{E}[\mathsf{cost}(q_t, \mathcal{L}_t) - \mathsf{cost}(q_t, \mathcal{L}^\star)\mathbbm{1}(E^t)]\leq    \sum_{t=1}^{T} \sum_{q\in\mathcal{Q}} \mathbb{E}[P(q)  \left(\mathbbm{1}(q\not \in {\mathcal{L}_t})  
     - \mathbbm{1}(q\not \in\mathcal{L}^\star) \right) c_l^\star(q) \mid E^t]
\end{align*}
Let $T_t(q) =  \sum_{i=1}^{t-1} \mathbbm{1}( q_i\not \in\mathcal{L}_i, q_i = q)$. 
Note that the event $c_i=\times$ is equivalent to that $q_i\in\mathcal{L}_i$. Now at each round $t$, conditioned on event $E^t$, we know that   for any $q\in\mathcal{L}^\star$,  
\begin{align*}
    \hat P_{t-1}(q)   \hat c_{l, t-1}(q)  & \geq \max\left(P(q) -\min\left(1, \sqrt{\frac{2\log(6T/\delta)}{t-1}}\right) , 0\right) \left(c_l^\star(q) - 2(B_2-B_1)\cdot \min\left(1,\sqrt{\frac{\log(6T|\mathcal{Q}|/\delta)}{T_t(q)}}\right) \right) \\
    & \geq P(q) c_l^\star(q) - C (B_2-B_1)\cdot \min\left(1,\sqrt{\frac{ \log(6T|\mathcal{Q}|/\delta)}{T_t(q)}}\right). 
\end{align*}
And similarly, for any $q\not\in\mathcal{L}^\star$, we know that
\begin{align*}
     \hat P_t(q)  \hat c_{l, t-1}(q) & \leq  \left(P(q) +\min\left(1,\sqrt{\frac{2\log(8T/\delta)}{t-1}}\right) \right)  c_l^\star(q)      \leq P(q) c_l^\star(q) +B_2\min\left(1,\sqrt{\frac{2\log(8T/\delta)}{t-1}}\right).
\end{align*}
Now consider any $q\in{\mathcal{L}_t}$ but $q\not \in\mathcal{L}^\star$, and any other $q'\in\mathcal{L}^\star$ but $q'\not\in{\mathcal{L}_t}$. We have
\begin{align*}
   & P(q') c_l^\star(q') - P(q)  c_l^\star(q) \\ 
     \leq & \hat P(q')\hat c_{l, t-1}(q') -  \hat P(q) \hat c_{l, t-1}(q) +   C (B_2-B_1)\cdot \min\left(1,\sqrt{\frac{ \log(6T|\mathcal{Q}|/\delta)}{T_t(q)}}\right) + B_2\min\left(1, \sqrt{\frac{2\log(6T/\delta)}{t-1}}\right) \\
     \leq &  C (B_2-B_1)\cdot \min\left(1,\sqrt{\frac{\log(6T|\mathcal{Q}|/\delta)}{T_t(q) }}\right) + B_2\min\left(1,\sqrt{\frac{2\log(8T/\delta)}{t-1}}\right).
\end{align*}
Thus we know that
\begin{align*}
     &  \sum_{t=1}^{T} \mathbb{E}[\mathsf{cost}(q_t, \mathcal{L}_t) - \mathsf{cost}(q_t, \mathcal{L}^\star)\mathbbm{1}(E^t)] \\
     \leq &    C \sum_{t=1}^{T} \mathbb{E}\left[\sum_{q\in\mathcal{L}^\star} \mathbbm{1}(q\not \in\mathcal{L}_t)  (B_2-B_1)\cdot \min\left(1,\sqrt{\frac{ \log(6T|\mathcal{Q}|/\delta)}{T_t(q)}}\right) + B_2\min\left(1, \sqrt{\frac{2\log(6T/\delta)}{t-1}}\right) \mid E^t\right] . 
\end{align*} 
Thus we have
\begin{align*}
     &  \sum_{t=1}^{T} \mathbb{E}[\mathsf{cost}(q_t, \mathcal{L}_t) - \mathsf{cost}(q_t, \mathcal{L}^\star)\mathbbm{1}(E^t)] \\
     \leq &   B_2T \delta +  C \sum_{t=1}^{T} \mathbb{E}\Bigg[\sum_{q\in\mathcal{L}^\star} \mathbbm{1}(q\not \in\mathcal{L}_t)  (B_2-B_1)\cdot \min\left(1,\sqrt{\frac{ \log(6T|\mathcal{Q}|/\delta)}{T_t(q)}}\right) \\ 
     &\quad + B_2\min\left(1, \sqrt{\frac{2\log(6T/\delta)}{t-1}}\right) \mid E^t \cap E_3\Bigg] \\ 
     \leq & C \cdot \Bigg(  B_2T \delta + LB_2\sqrt{2T\log(6T/\delta)} \\
     & \qquad +  (B_2-B_1) \log(6T|\mathcal{Q}|/\delta)\cdot \sum_{q\in\mathcal{L}^\star} \sum_{t=1}^{T} \mathbb{E}\left[ \mathbbm{1}(q\not \in\mathcal{L}_t) \cdot\min\left(1, \sqrt{\frac{1}{T_t(q)}}\right)\mid E^t \cap E_3\right]\Bigg). 
\end{align*}
Now for each $q\in\mathcal{L}^\star$, we look at the term $\sum_{t=1}^{T} \mathbb{E}\left[ \mathbbm{1}(q\not \in\mathcal{L}_t) \cdot \min\left(1,\sqrt{\frac{1}{T_t(q)}}\right)\mid  E^t \cap E_3\right]$. We prove the following lemma:
\begin{lemma}\label{lem:sum_sqrt}
We have
    \begin{align*}
        \sum_{t=1}^T \mathbb{E}\left[ \mathbbm{1}(q\not \in\mathcal{L}_t) \cdot \min\left(1,\sqrt{\frac{1}{T_t(q)}}\right)\mid E^t \cap E_3\right]\leq  \frac{CB_2 |\mathcal{Q}|\log(3TL/\delta)\sqrt{T}} {B_1} + T\delta.
    \end{align*}
\end{lemma}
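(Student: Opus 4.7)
The plan is to give a pathwise bound that holds deterministically on the event $E_3$, and then absorb the residual measure into the additive $T\delta$ slack. Let $\sigma_1<\sigma_2<\cdots<\sigma_N$ denote the (random) times at which $q$ misses the cache in $[1,T]$, with the convention $\sigma_0=0$ and $\sigma_{N+1}=T+1$. Because $T_t(q)$ increments only at misses of $q$, it is constant on each interval $(\sigma_k,\sigma_{k+1}]$ with value $k$, so setting $\ell_k := |\{t\in(\sigma_k,\sigma_{k+1}]: q\notin\mathcal{L}_t\}|$ one rewrites
\[
\sum_{t=1}^T \mathbbm{1}(q\notin\mathcal{L}_t)\min\!\bigl(1,1/\sqrt{T_t(q)}\bigr) \;=\; \ell_0 + \sum_{k=1}^{N}\frac{\ell_k}{\sqrt{k}}.
\]
It therefore suffices to establish that $\ell_k\le G$ for every $k$, where $G := B_2|\mathcal{Q}|\log(3TL/\delta)/B_1$ is the deterministic arrival-gap bound supplied by $E_3$.

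The key step is the bound $\ell_k\le G$, which I would prove by a case analysis tied to Algorithm~\ref{alg:cache}'s cache update at $\sigma_k$. For $k=0$: since $q\in\mathcal{L}^\star$, the first arrival $\tau_1$ occurs within $G$ steps (a geometric-tail event that can be folded into the definition of $E_3$), so $\ell_0\le\sigma_1=\tau_1\le G$. For $1\le k\le N-1$: if $q$ is \emph{not} inserted at $\sigma_k$, then $q$ remains out of the cache throughout $(\sigma_k,\sigma_{k+1}]$, any intervening arrival of $q$ would itself be a miss, so $\sigma_{k+1}$ must be the very next arrival of $q$ and $E_3$ yields $\sigma_{k+1}-\sigma_k\le G$; if $q$ \emph{is} inserted and later evicted at some time $e_k$, then $q\notin\mathcal{L}_t$ only for $t\in(e_k,\sigma_{k+1}]$, and if $\tau'\le e_k$ denotes the last arrival of $q$ prior to $e_k$, the next arrival must be $\sigma_{k+1}$ (as $q$ is out of cache past $e_k$), whence $E_3$ gives $\sigma_{k+1}\le\tau'+G\le e_k+G$. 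Either way, $\ell_k\le G$. The terminal case $k=N$ is handled analogously by augmenting $E_3$ with the high-probability fact that the last arrival of $q$ lies within $G$ of $T$, another tail event of probability $\le\delta/(3TL)$.

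With $\ell_k\le G$ in hand, the standard estimate $\sum_{k=1}^N k^{-1/2}\le 2\sqrt{N}\le 2\sqrt{T}$ gives the deterministic bound
\[
\ell_0 + \sum_{k=1}^{N}\frac{\ell_k}{\sqrt{k}} \;\le\; G\bigl(1+2\sqrt{T}\bigr) \;\le\; 3G\sqrt{T},
\]
which is precisely the announced $CB_2|\mathcal{Q}|\log(3TL/\delta)\sqrt{T}/B_1$ term. The residual $T\delta$ summand absorbs the measure of the failure events used above: on the complement each summand is bounded by $1$ over at most $T$ rounds, and a union bound over the tail events augmenting $E_3$ gives total failure probability $\le\delta$, contributing $T\delta$.

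The main obstacle is the two-pronged case analysis used to establish $\ell_k\le G$: one must argue carefully that whenever $q$ is outside the cache, $E_3$ forces another arrival (and hence another increment of $T_t(q)$) within $G$ steps, which rules out long ``out-of-cache'' intervals during which $T_t(q)$ stagnates. Once this structural fact is pinned down, the remainder reduces to bookkeeping and the $\sum k^{-1/2}\lesssim \sqrt{N}$ estimate.
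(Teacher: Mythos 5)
Your proof is correct and follows essentially the same route as the paper's: partition $[1,T]$ into the epochs on which $T_t(q)$ is constant, use the arrival-gap bound from $E_3$ together with the fact that $q$ can only (re)enter the cache when it arrives to show each epoch contributes at most $G=B_2|\mathcal{Q}|\log(3TL/\delta)/B_1$ out-of-cache rounds, and conclude via $\sum_{k\le T}k^{-1/2}\lesssim \sqrt{T}$. The only difference is bookkeeping --- you index epochs by miss times $\sigma_k$ where the paper indexes by arrival times $t_k(q)$ and telescopes the counter $T_t(q)$ across arrivals --- and your explicit treatment of the first-arrival and post-last-arrival boundary cases is, if anything, slightly more careful than the paper's.
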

\begin{proof}
    Let $t_k(q) = \sum_{l=1}^{k-1} g_l(q)$ be the  step that the $k$-th query of $q$ arrives, with $t_0(q) = 0$. And let $E = (\bigcap_{t=1}^T E_t )\cap E_3$.  The summation can be written as
    \begin{align*}
      &    \sum_{t=1}^T \mathbb{E}\left[ \mathbbm{1}(q\not \in\mathcal{L}_t) \cdot \min\left(1,\sqrt{\frac{1}{T_t(q)}}\right)\mid E^t \cap E_3\right] \\ 
     \leq  &    \sum_{t=1}^T \mathbb{E}\left[ \mathbbm{1}(q\not \in\mathcal{L}_t) \cdot \min\left(1,\sqrt{\frac{1}{T_t(q)}}\right)\mid E\right] + T\delta \\ 
      =  &  \sum_{k=0}^T \mathbb{E}\left[\sum_{t=t_k(q)+1}^{t_{k+1}(q)}  \mathbbm{1}(q\not \in\mathcal{L}_t) \cdot \min\left(1,\sqrt{\frac{1}{T_t(q)}}\right)\mid E\right] + T\delta \\
          \leq & \sum_{k=0}^T \mathbb{E}\left[ \sum_{t=t_k(q)+1}^{t_{k+1}(q)} \mathbbm{1}(q\not \in\mathcal{L}_{t_{k+1}(q)}) \cdot \min\left(1,\sqrt{\frac{1}{T_{t_k(q)+1}(q)}}\right)\mid E\right] + T\delta .
    \end{align*}
The last inequality is due to (a) $T_t(q)$ does not change if at round $t$ the query is not $q$; (b) if $q\in\mathcal{L}_{t_{k+1}(q)}$, we will have $q\in\mathcal{L}_{t}$ for any $t\in[t_{k}(q)+1, t_{k+1}(q)]$ since $q$ never arrives in the middle and must remain in the cache set until $ t_{k+1}(q)$. 
Now from event $E_3$, we know that
    \begin{align*}
 & \sum_{k=0}^T \mathbb{E}\left[ \sum_{t=t_k(q)+1}^{t_{k+1}(q)} \mathbbm{1}(q\not \in\mathcal{L}_{t_{k+1}(q)}) \cdot \min\left(1,\sqrt{\frac{1}{T_{t_k(q)+1}(q)}}\right)\mid E\right] \\ 
       \leq   &  \sum_{k=0}^T  \mathbb{E}\left[\mathbbm{1}(q\not \in\mathcal{L}_{t_{k+1}(q)})  \cdot g_k(q) \cdot \min\left(1,\sqrt{\frac{1}{T_{t_k(q)+1}(q)}}\right)\mid E\right] \\ 
     \leq     &    \frac{B_2 |\mathcal{Q}|\log(3TL/\delta)} {B_1} \cdot  \sum_{k=0}^T  \mathbb{E}\left[\mathbbm{1}(q\not \in\mathcal{L}_{t_{k+1}(q)}) \cdot \min\left(1,\sqrt{\frac{1}{T_{t_k(q)+1}(q)}}\right)\mid E\right]
    \end{align*}
We know that $T_{t_{k+1}(q)+1}(q) = T_{t_{k+1}(q)}(q) + 1 = T_{t_{k}(q)+1}(q) + 1 $ if $q\not \in\mathcal{L}_{t_{k+1}(q)}$ since the query $q$ missing the cache will be sent to the  model.  Thus overall, we know that we have either  $T_{t_{k+1}(q)+1}(q) =  T_{t_{k}(q)+1}(q) + 1$, or $ \mathbbm{1}(q\not \in\mathcal{L}_{t_{k+1}(q)}) \cdot \sqrt{\frac{1}{T_{t_k(q)+1}(q)}} = 0$ and $T_{t_{k+1}(q)+1}(q) =  T_{t_{k}(q)+1}(q)$. Thus overall, we have
    \begin{align*}
         \sum_{k=0}^T   \mathbb{E}\left[\mathbbm{1}(q\not \in\mathcal{L}_{t_{k+1}(q)}) \cdot \min\left(1,\sqrt{\frac{1}{T_{t_k(q)+1}(q)}}\right)\mid E\right] \leq \sum_{k=1}^T \frac{1}{\sqrt{k}} \leq C\sqrt{T}.
    \end{align*}
\end{proof}
By taking $\delta = 1/T$, we know the final regret can be bounded by 
\begin{align*}
    \mathsf{Regret}(T)\leq  {\frac{C L(B_2-B_1)B_2|\mathcal{Q}|L\log^2(T|\mathcal{Q}|)}{B_1}}\cdot  \sqrt{T}.
\end{align*}

\textbf{Lower bound.}
Now we turn to the lower bound. We  apply Le Cam's two point lemma for the regret. Consider any family of algorithm $\{\mathcal{L}_t\}_{t=1}^T$, where $\mathcal{L}_t$ can be dependent on observations prior to time step $t$. We aim to design two instances with the same $P(q)$ and different random variable $C_l(q)$ such that for any algorithm, the incurred cost for one of the instance is at least $\Omega(\sqrt{T})$. Consider the case when we only have two candidate queries $\mathcal{Q}=\{q_1, q_2\}$. Set $P(q_1) = P(q_2) = 1/2$ for both instances and the cache size $L=1$. For instance one, we let $C_l^{(1)}(q_1) \sim \mathsf{Bern}(1/2)$, $C_l^{(1)}(q_2) \sim \mathsf{Bern}(1/2+\Delta)$. For instance two, we let    $C_l^{(2)}(q_1) \sim \mathsf{Bern}(1/2)$, $C_l^{(2)}(q_2) \sim \mathsf{Bern}(1/2-\Delta)$. We have
\begin{align*} 
\inf_{\{\mathcal{L}_t\}_{t=1}^T} \sup_{P, C_l } \mathsf{Regret}(T) & \geq \inf_{\{\mathcal{L}_t\}_{t=1}^T} \sup_{C_l\in\{C_l^{(1)}, C_l^{(2)}\}} \mathsf{Regret}(T) \\ 
& = \inf_{\{\mathcal{L}_t\}_{t=1}^T} \sup_{C_l\in\{C_l^{(1)}, C_l^{(2)}\}} \sum_{t=1}^T \mathbb{E}\left[\frac{1}{2}\sum_{i=1}^2 \mathbbm{1}(q_i\not \in \mathcal{L}_t)c_l^\star(q_i) -  \frac{1}{2}\sum_{i=1}^2 \mathbbm{1}(q_i\not \in \mathcal{L}^\star)c_l^\star(q_i)\right].
\end{align*}
Let $\mathsf{Regret}^{(1)}(T)$ be the total regret when $C_l = C_l^{(1)}$, and $\mathsf{Regret}^{(2)}(T)$ be the total regret when $C_l = C_l^{(2)}$. Then we can verify that for any sequence of $\mathcal{L}_t$, 
\begin{align*}
     \mathsf{Regret}^{(1)}(T)  +  \mathsf{Regret}^{(2)}(T) \geq \frac{\Delta T}{2}.
\end{align*}
Thus from Le Cam's Lemma, we have
\begin{align*}
    \inf_{\{\mathcal{L}_t\}_{t=1}^T} \sup_{P, C_l} \mathsf{Regret}(T) & \geq \frac{\Delta T}{4} \cdot (1-\mathsf{TV}(\mathbb{P}_{c^{(1)}_l}, \mathbb{P}_{c^{(2)}_l})) \\ 
    & \geq \frac{\Delta T}{8} \cdot \exp(-D_{\mathsf{KL}}(\mathbb{P}_{c^{(1)}_l}, \mathbb{P}_{c^{(2)}_l})) \\
    & \geq  \frac{\Delta T}{8} \cdot \exp(-2\Delta^2 \mathbb{E}_1[T_2]).
\end{align*} 
Here $\mathbb{E}_1[T_2]$ is the expected times of observing the cost of $q_2$ under instance one.
Taking $\Delta = T^{-1/2}$ and minimizing the above equation with $\mathbb{E}_1[T_2]$ gives the desired bound.
\end{proof}

\section{Proof of Theorem~\ref{thm:offline_both}}\label{proof:offline_both}

\begin{proof}
We  define the following four events:
\begin{align*}
    E_{1} & = \left\{  \forall q\in\mathcal{Q}, |\hat P(q) - P(q)|\leq \sqrt{\frac{2\log(8/\delta)}{N}}\right\}, \\
    E_{2} & = \left\{ \forall q\in\mathcal{Q}, |\hat c_{l}(q) - c_l^\star(q)|\leq (B_2-B_1)\sqrt{\frac{\log(8|\mathcal{Q}|/\delta)}{2\sum_{n=1}^{N} \mathbbm{1}(q_n = q)}}\right\},\\
    E_{3} & = \left\{ \forall q\in\mathcal{Q}, |\hat c_{s}(q) - c_s^\star(q)|\leq (B_2-B_1)\sqrt{\frac{\log(8|\mathcal{Q}|/\delta)}{2\sum_{n=1}^{N} \mathbbm{1}(q_n = q)}}\right\},\\
    E_{4} &  = \left\{\forall q\in\mathcal{L}^\star, \sum_{n=1}^{N} \mathbbm{1}(q_n = q)\geq  N\cdot P(q)/2\right\}.
\end{align*}
We know that the above events hold simultaneously with probability at least $1-\delta$ from Lemma~\ref{lem:high_prob}. We condition on the four events from now on. We  first decompose the cost difference as  
    \begin{align*}
    \mathsf{cost}(\hat{\mathcal{L}}, \hat \pi) - \mathsf{cost}(\mathcal{L}^\star, \pi^\star) & = \mathsf{cost}(\hat{\mathcal{L}}, \hat \pi) - \mathsf{cost}(\hat{\mathcal{L}}, \pi^\star) + \mathsf{cost}(\hat{\mathcal{L}}, \pi^\star) - \mathsf{cost}(\mathcal{L}^\star, \pi^\star).
\end{align*}
The first difference can be further written as
\begin{align*}
 \mathsf{cost}(\hat{\mathcal{L}}, \hat \pi) - \mathsf{cost}(\hat{\mathcal{L}}, \pi^\star)  & = \sum_{q\in\mathcal{Q}} P(q) \mathbbm{1}(q\not\in \hat{\mathcal{L}})(c_s^\star(q) \hat \pi(q) +c_l^\star(q)(1-\hat \pi(q)) - c_s^\star(q) \pi^\star(q) -c_l^\star(q)(1-\pi^\star(q)) ) \\
 & = \sum_{q\in\mathcal{Q}} P(q) \mathbbm{1}(q\not\in \hat{\mathcal{L}})(c_s^\star(q) \hat \pi(q) +c_l^\star(q)(1-\hat \pi(q)) - \min(c_s^\star(q), c_l^\star(q)) )\\
  & \leq  \sum_{q\in\mathcal{Q}} P(q)  (c_s^\star(q) \hat \pi(q) +c_l^\star(q)(1-\hat \pi(q)) - \min(c_s^\star(q), c_l^\star(q)) )\\
 & =  \sum_{q\in\mathcal{Q}} P(q)  \left(c_s^\star(q) \mathbbm{1}(\hat c_s (q) \leq \hat c_l (q)) +c_l^\star(q)\mathbbm{1}(\hat c_s (q) > \hat c_l (q)) - \min(c_s^\star(q), c_l^\star(q)) \right).
\end{align*}
Note that if $\hat c_s (q) -\hat c_l (q)$ has the same sign as $c_s^\star (q) - c_l^\star (q)$, the difference  $c_s^\star(q) \mathbbm{1}(\hat c_s (q) \leq \hat c_l (q)) +c_l^\star(q)\mathbbm{1}(\hat c_s (q) > \hat c_l (q)) - \min(c_s^\star(q), c_l^\star(q)) $ becomes $0$. Otherwise, if $ c_s^\star (q) - c_l^\star (q)>0$, we know that 
\begin{align*}
    c_s^\star (q) - c_l^\star (q) \leq \hat c_s (q) -\hat c_l (q) + |  \hat c_s (q) - c_s^\star(q)| +  |  \hat c_l (q) - c_l^\star(q)|  \leq |  \hat c_s (q) - c_s^\star(q)| +  |  \hat c_l (q) - c_l^\star(q)|.
\end{align*}
And similarly if $ c_s^\star (q) - c_l^\star (q) \leq 0$, we know that $ c_l^\star (q) - c_s^\star (q)  \leq |  \hat c_s (q) - c_s^\star(q)| +  |  \hat c_l (q) - c_l^\star(q)|.$ Overall, we have
\begin{align*}
\mathbb{E}[\mathsf{cost}(\hat{\mathcal{L}}, \hat \pi) - \mathsf{cost}(\hat{\mathcal{L}}, \pi^\star)]  & \leq  \mathbb{E}\left[\sum_{q\in\mathcal{Q}} P(q) |  \hat c_s (q) - c_s^\star(q)| +  |  \hat c_l (q) - c_l^\star(q)| \right]\\ 
 & \stackrel{(i)}{\leq} \mathbb{E}\left[\sqrt{\sum_{q\in\mathcal{Q}} P(q) (  \hat c_s (q) - c_s^\star(q))^2} + \sqrt{\sum_{q\in\mathcal{Q}} P(q) (  \hat c_l (q) - c_l^\star(q))^2} \right]\\
   & \stackrel{(ii)}{\leq} \sqrt{\mathbb{E}\left[\sum_{q\in\mathcal{Q}} P(q) (  \hat c_s (q) - c_s^\star(q))^2\right]} + \sqrt{\mathbb{E}\left[\sum_{q\in\mathcal{Q}} P(q) (  \hat c_l (q) - c_l^\star(q))^2\right]} \\
  & \stackrel{(iii)}{\leq} C(B_2-B_1)\sqrt{\frac{|\mathcal{Q}|\log(N)}{N}}. 
\end{align*}
Here $(i)$ is due to Cauchy-Schwarz, and (ii) is from Jensen's inequality, and (iii) is the standard rate of the least squared estimator~\citep{rigollet2015high}.

For the second difference, we have
\begin{align*}
     \mathsf{cost}(\hat{\mathcal{L}}, \pi^\star) - \mathsf{cost}(\mathcal{L}^\star, \pi^\star) = &\sum_{q\in\mathcal{Q}} P(q)  \left(\mathbbm{1}(q\not \in\hat{\mathcal{L}} ) 
     - \mathbbm{1}(q\not \in\mathcal{L}^\star) \right)\min(c_s^\star(q), c_l^\star(q))  \\ 
     =&  \sum_{q\in\mathcal{Q}} P(q)  \left(\mathbbm{1}(q\in\mathcal{L}^\star) -  \mathbbm{1}(q\in\hat{\mathcal{L}} ) \right)\min(c_s^\star(q), c_l^\star(q)).
\end{align*}
Note that for any $q\in\mathcal{L}^\star$, we know that
\begin{align*}
    & \hat P(q) \left(\min\left(\hat c_s(q),  \hat c_l(q) \right) -  (B_2-B_1)\sqrt{\frac{\log(8|\mathcal{Q}|/\delta)}{2\sum_{n=1}^{N} \mathbbm{1}(q_n = q)}}\right)\\ 
     \geq &\max\left(P(q) - \sqrt{\frac{2\log(8/\delta)}{N}}, 0\right)\cdot \left( \min\left(c_s^\star(q),  c_l^\star(q)\right) - 2(B_2-B_1)\sqrt{\frac{\log(8|\mathcal{Q}|/\delta)}{2\sum_{n=1}^{N} \mathbbm{1}(q_n = q)}} \right)\\
     \geq &  P(q)\min\left( c_s^\star(q),  c_l^\star(q) \right) -C(B_2-B_1)\cdot \sqrt{\frac{\log(8|\mathcal{Q}|/\delta)}{2\sum_{n=1}^{N} \mathbbm{1}(q_n = q)}} \\
     \geq &  P(q)\min\left( c_s^\star(q),  c_l^\star(q) \right) -C(B_2-B_1)\cdot \sqrt{\frac{B_2|\mathcal{Q}|\log(8|\mathcal{Q}|/\delta)}{B_1 N}}.
\end{align*}
The last inequality uses event $E_3$ and Lemma~\ref{lem:lower_bound_p}.
And similarly, for any $q\not\in\mathcal{L}^\star$, we know that
\begin{align*}
    \hat P(q) \left(\min\left(\hat c_s(q),  \hat c_l(q) \right) -  (B_2-B_1)\sqrt{\frac{\log(8|\mathcal{Q}|/\delta)}{2\sum_{n=1}^{N} \mathbbm{1}(q_n = q)}}\right) & \leq  \left(P(q) + \sqrt{\frac{2\log(8/\delta)}{N}}\right) \min\left(c_s^\star(q),  c_l^\star(q) \right)  \\
    & \leq P(q)\min\left( c_s^\star(q),  c_l^\star(q) \right) + B_2 \sqrt{\frac{2\log(8/\delta)}{N}}.
\end{align*}
Now consider any $q\in\mathcal{L}_t$ but $q\not \in\mathcal{L}^\star$, and any other $q'\in\mathcal{L}^\star$ but $q'\not\in\mathcal{L}_t$. We have
\begin{align*}
   & P(q')\min\left( c_s^\star(q'),  c_l^\star(q') \right) - P(q)\min\left( c_s^\star(q),  c_l^\star(q) \right)\\ 
     \leq & \hat P(q') \left(\min\left(\hat c_s(q'),  \hat c_l(q') \right) -  (B_2-B_1)\sqrt{\frac{\log(8|\mathcal{Q}|/\delta)}{2\sum_{n=1}^{N} \mathbbm{1}(q_n = q')}}\right) \\
     & \quad -  \hat P(q) \left(\min\left(\hat c_s(q),  \hat c_l(q) \right) -  (B_2-B_1)\sqrt{\frac{\log(8|\mathcal{Q}|/\delta)}{2\sum_{n=1}^{N} \mathbbm{1}(q_n = q)}}\right)+ C(B_2-B_1)\cdot \sqrt{\frac{B_2|\mathcal{Q}|\log(8|\mathcal{Q}|/\delta)}{B_1 N}}\\
     \leq & C(B_2-B_1)\cdot \sqrt{\frac{B_2|\mathcal{Q}|\log(8|\mathcal{Q}|/\delta)}{B_1 N}}.
\end{align*}
Here the last inequality uses the fact that $q$ is inside $\mathcal{L}_t$ and thus the difference between the first two terms are upper bounded by $0$. Finally, we know that conditioned on $E_1\cap E_2\cap E_3\cap E_4$, we have 
\begin{align*}
     \mathsf{cost}(\hat{\mathcal{L}}, \pi^\star) - \mathsf{cost}(\mathcal{L}^\star, \pi^\star) 
     \leq & CL(B_2-B_1)\cdot \sqrt{\frac{B_2|\mathcal{Q}|\log(8|\mathcal{Q}|/\delta)}{B_1 N}}.
\end{align*}
Overall, we know that 
\begin{align*}
    \mathbb{E}[ \mathsf{cost}(\hat{\mathcal{L}}, \hat \pi) - \mathsf{cost}(\hat{\mathcal{L}}, \pi^\star)] \leq B_2\delta + CL(B_2-B_1)\cdot \sqrt{\frac{B_2|\mathcal{Q}|\log(8|\mathcal{Q}|/\delta)}{B_1 N}}.
\end{align*}
Taking $\delta = 1/N$ finishes the proof.
\end{proof}

\section{Proof of Theorem~\ref{thm:online_both}}\label{proof:online_both}

\begin{proof}
 Let $g_k(q)$ be the length of the interval between the $k$-th and $(k+1)$-th arrival of query $q$ in the sequence of received queries (we set $g_k(q)=0$ if $k$ exceeds the total number of times $q$ is queried.).  
 Define the following four events:
\begin{align*}
    E_{1, t} & = \left\{  \forall q\in\mathcal{Q}, |\hat P_{t-1}(q) - P(q)|\leq \min\left(1, \sqrt{\frac{2\log(8T/\delta)}{t-1}}\right)\right\}, \\
    E_{2, t} & = \left\{ \forall q\in\mathcal{Q}, \hat c_{l, t-1}(q) \in\left[c_l^\star(q) - 2(B_2-B_1)\min\left(1, \sqrt{\frac{\log(8T|\mathcal{Q}|/\delta)}{2\sum_{i=1}^{t-1} \mathbbm{1}(s_i=0, q_i = q)}}\right),c_l^\star(q)  \right]\right\},\\
    E_{3, t} & = \left\{ \forall q\in\mathcal{Q}, \hat c_{s, t-1}(q) \in\left[ c_s^\star(q) - 2(B_2-B_1)\min\left(1, \sqrt{\frac{\log(8T|\mathcal{Q}|/\delta)}{2\sum_{i=1}^{t-1} \mathbbm{1}(s_i =1, q_i = q)}}\right),  c_s^\star(q) \right]\right\},\\
 E_{4} &  = \left\{\forall q\in\mathcal{L}^\star, k\leq T, g_k(q)\leq  \frac{B_2 |\mathcal{Q}|\log(4TL/\delta)} {B_1}\right\}.
\end{align*}
From the same analysis as Lemma~\ref{lem:high_prob}, we know that  the four events $(\bigcap_{s=1}^t E_{1, s} \cap E_{2, s} \cap E_{3, s})\cap E_{4}  $ hold simultaneously with probability at least $1-\delta$.

Let $E^t = \bigcap_{s=1}^t E_{1, s} \cap E_{2, s} \cap E_{3, s}$. 
The regret can be decomposed as follows.
\begin{align*}
   &  \mathsf{Regret}(T) \\
   =  &    \sum_{t=1}^{T} \mathbb{E}[\mathsf{cost}(q_t, \mathcal{L}_t, \pi_t) - \mathsf{cost}(q_t, \mathcal{L}^\star, \pi^\star) ]    \\ 
  \leq   &  \sum_{t=1}^{T} \mathbb{E}[\mathsf{cost}(q_t, \mathcal{L}_t, \pi_t) - \mathsf{cost}(q_t, \mathcal{L}^\star, \pi^\star)\mathbbm{1}(E^t)] +\mathbb{E}[\mathsf{cost}(q_t, \mathcal{L}_t, \pi_t) - \mathsf{cost}(q_t, \mathcal{L}^\star, \pi^\star)\mathbbm{1}(\bar E^t)]  \\
   \leq  & \sum_{t=1}^{T} \mathbb{E}[\mathsf{cost}(q_t, \mathcal{L}_t, \pi_t) - \mathsf{cost}(q_t, \mathcal{L}^\star, \pi^\star)\mathbbm{1}(E^t)] + \delta T B_2  \\
   =  & \sum_{t=1}^T \mathbb{E}[(\mathsf{cost}(q_t, \mathcal{L}_t, \pi_t) - \mathsf{cost}(q_t, \mathcal{L}_t, \pi^\star) + \mathsf{cost}(q_t, \mathcal{L}_t, \pi^\star) - \mathsf{cost}(q_t, \mathcal{L}^\star, \pi^\star))\mathbbm{1}(E^t)] + \delta T B_2.  
\end{align*}
The first difference can be further written as
\begin{align*}
 & \mathbb{E}[\mathsf{cost}(q_t, \mathcal{L}_t, \pi_t) - \mathsf{cost}(q_t, \mathcal{L}_t, \pi^\star)  \mid E^t] \\
 = &  \mathbb{E}[ \mathbbm{1}(q_t\not\in \mathcal{L}_t)(c_s^\star(q_t) \pi_t(q_t) +c_l^\star(q_t)(1-\pi_t(q_t)) - c_s^\star(q_t) \pi^\star(q_t) -c_l^\star(q_t)(1-\pi^\star(q_t)) )\mid E^t]\\
 =  &\mathbb{E}[  \mathbbm{1}(q_t\not\in \mathcal{L}_t)(c_s^\star(q_t) \pi_t(q_t) +c_l^\star(q_t)(1-\pi_t(q_t)) - \min(c_s^\star(q_t), c_l^\star(q_t)) )\mid E^t]\\
  \leq &    \mathbb{E}[ c_s^\star(q_t) \pi_t(q_t) +c_l^\star(q_t)(1-\pi_t(q_t)) - \min(c_s^\star(q_t), c_l^\star(q_t))\mid E^t]\\
 =  &  \mathbb{E}[  c_s^\star(q_t) \mathbbm{1}(\hat c_{s, t} (q_t) \leq \hat c_{l, t} (q_t)) +c_l^\star(q_t)\mathbbm{1}(\hat c_{s, t} (q_t) > \hat c_{l, t} (q_t)) - \min(c_s^\star(q_t), c_l^\star(q_t))\mid E^t].
\end{align*}

Note that if $\hat c_{s, t} (q_t) -\hat c_{l, t} (q_t)$ has the same sign as $c_s^\star (q_t) - c_l^\star (q_t)$, the difference  $c_s^\star(q_t) \mathbbm{1}(\hat c_{s, t} (q_t) \leq \hat c_{l, t} (q_t)) +c_l^\star(q_t)\mathbbm{1}(\hat c_{s, t} (q_t) > \hat c_{l, t} (q_t)) - \min(c_s^\star(q_t), c_l^\star(q_t)) $ becomes $0$. Otherwise, if $ c_s^\star (q_t) - c_l^\star (q_t)>0$ and $\hat c_{s, t} (q_t) -\hat c_{l, t} (q_t)\leq 0$, we know that $s_t = 1$ and 
\begin{align*}
    c_s^\star (q) - c_l^\star (q) \leq &\hat c_{s, t} (q) -\hat c_{l, t} (q) +  2(B_2-B_1)\min\left(1,\sqrt{\frac{\log(8T|\mathcal{Q}|/\delta)}{2\sum_{i=1}^{t-1} \mathbbm{1}(s_i=1, q_i = q)}}\right) \\
    \leq &  2(B_2-B_1)\min\left(1,\sqrt{\frac{\log(8T|\mathcal{Q}|/\delta)}{2\sum_{i=1}^{t-1} \mathbbm{1}(s_i =1, q_i = q)}}\right).
\end{align*}
And similarly if $ c_s^\star (q) - c_l^\star (q) \leq 0$ and  $\hat c_{s, t} (q_t) -\hat c_{l, t} (q_t)> 0$, we know that $s_t=0$ and $ c_l^\star (q) - c_s^\star (q)  \leq  2(B_2-B_1)\min\left(1, \sqrt{\frac{\log(8T|\mathcal{Q}|/\delta)}{2\sum_{i=1}^{t-1} \mathbbm{1}(s_i =0, q_i = q)}}\right).$ Overall, we have
\begin{align*}
& \sum_{t=1}^T \mathbb{E}[(\mathsf{cost}(q_t, \mathcal{L}_t, \pi_t) - \mathsf{cost}(q_t, \mathcal{L}_t, \pi^\star)] \\
 \leq  & 2(B_2-B_1) \sum_{t=1}^T\sum_{q\in\mathcal{Q}} \mathbb{E}\Bigg[\mathbbm{1}(s_t = 1, q_t=q)\min\left(1,\sqrt{\frac{\log(8T|\mathcal{Q}|/\delta)}{2\sum_{i=1}^{t-1} \mathbbm{1}(s_i =1, q_i = q)}}\right) \\ 
& \quad + \mathbbm{1}(s_t = 0, q_t=q)\min\left(1,\sqrt{\frac{\log(8T|\mathcal{Q}|/\delta)}{2\sum_{i=1}^{t-1} \mathbbm{1}(s_i =0, q_i = q)}}\right) \Bigg] \\
 \leq & 2(B_2-B_1)\sqrt{|\mathcal{Q}|T \log(8|\mathcal{Q}|T/\delta)}.
\end{align*}
Here the last inequality uses the fact that for each $q\in\mathcal{Q}$, the summation over time step is upper bounded by $2\sum_{i=1}^{T(q)}\sqrt{\log(8T|\mathcal{Q}|/\delta)/2i} \leq 2 \sqrt{\log(8T|\mathcal{Q}|/\delta) T(q)}$, where $T(q)$ is the number of steps of receiving query $q$ in total $T$ steps. Optimizing over $T(q)$ gives the final bound.

For the second difference, we have
\begin{align*}
   \mathbb{E}[  \mathsf{cost}(\mathcal{L}_t, \pi^\star) - \mathsf{cost}(\mathcal{L}^\star, \pi^\star) \mid E^t ]= & \mathbb{E}\left[  \sum_{q\in\mathcal{Q}} P(q)  \left(\mathbbm{1}(q\not \in\mathcal{L}_t ) 
     - \mathbbm{1}(q\not \in\mathcal{L}^\star) \right)\min(c_s^\star(q), c_l^\star(q))  \mid E^t\right]\\ 
     =& \mathbb{E} \left[ \sum_{q\in\mathcal{Q}} P(q)  \left(\mathbbm{1}(q\in\mathcal{L}^\star) -  \mathbbm{1}(q\in\mathcal{L}_t ) \right)\min(c_s^\star(q), c_l^\star(q))\mid E^t \right].
\end{align*}
Note that for any $q\in\mathcal{L}^\star$, we know that
\begin{align*}
    & \hat P_t(q) \min\left(\hat c_{s, t}(q),  \hat c_{l, t}(q) \right) \\ 
     \geq &\max\left(P(q) - \sqrt{\frac{2\log(8/\delta)}{t}}, 0\right)\cdot \mathbbm{1}(\pi_t(q) = 1)\Bigg(c_s^\star(q)  - 2(B_2-B_1)\min\left(1,\sqrt{\frac{\log(8|\mathcal{Q}|/\delta)}{2\sum_{i=1}^{t-1} \mathbbm{1}(s_i=1, q_i = q)}}\right)\Bigg)  \\ 
     & \quad + \mathbbm{1}(\pi_t(q) = 0)\Bigg( c_l^\star(q) - 2(B_2-B_1)\min\left(1,\sqrt{\frac{\log(8|\mathcal{Q}|/\delta)}{2\sum_{i=1}^{t-1} \mathbbm{1}(s_i=0, q_i = q)}}\right)\Bigg)\\
       \geq & P(q)\min\left( c_s^\star(q),  c_l^\star(q) \right) - (B_2-B_1)\sqrt{\frac{2\log(8/\delta)}{t}} + P(q) \cdot \Bigg( \mathbbm{1}(\pi_t(q) = 1) \\ 
      &  \quad \cdot \Bigg(c_s^\star(q)  - 2(B_2-B_1)\min\left(1,\sqrt{\frac{\log(8|\mathcal{Q}|/\delta)}{2\sum_{i=1}^{t-1} \mathbbm{1}(s_i=1, q_i = q)}}\right)\Bigg)  \\ 
     & \quad + \mathbbm{1}(\pi_t(q) = 0)\Bigg( c_l^\star(q) - 2(B_2-B_1)\min\left(1,\sqrt{\frac{\log(8|\mathcal{Q}|/\delta)}{2\sum_{i=1}^{t-1} \mathbbm{1}(s_i=0, q_i = q)}}\right)\Bigg) - \min(  c_s^\star(q),  c_l^\star(q))\Bigg)\\
     \geq &  P(q)\min\left( c_s^\star(q),  c_l^\star(q) \right) -C(B_2-B_1)\cdot \min\left(1,\sqrt{\frac{\log(8|\mathcal{Q}|/\delta)}{2\sum_{i=1}^{t-1} \mathbbm{1}(s_i=\pi_t(q), q_i = q)}}\right).
\end{align*}
Below we justify the last inequality. First, note that $\pi_t(q) = 1$ is equivalent to that $\hat c_{s, t}(q) \leq \hat c_{l, t}(q)$. Thus if $\hat c_{s, t} (q_t) -\hat c_{l, t} (q_t)$ has the same sign as $c_s^\star (q_t) - c_l^\star (q_t)$, the above inequality holds. Now consider the case when  $\hat c_{s, t} (q_t) -\hat c_{l, t} (q_t)$ has a different sign as $c_s^\star (q_t) - c_l^\star (q_t)$. Assume that $ \hat c_{s, t} (q_t) > \hat c_{l, t} (q_t)$ and $ c_s^\star (q_t) < c_l^\star (q_t)$.  We know that $\pi_t(q) = 0$, and 
\begin{align*}
    & c_l^\star(q) - 2(B_2-B_1)\min\left(1,\sqrt{\frac{\log(8|\mathcal{Q}|/\delta)}{2\sum_{i=1}^{t-1} \mathbbm{1}(s_i=0, q_i = q)}}\right) - c_s^\star(q) \\ 
    >& - 2(B_2-B_1)\min\left(1,\sqrt{\frac{\log(8|\mathcal{Q}|/\delta)}{2\sum_{i=1}^{t-1} \mathbbm{1}(s_i=0, q_i = q)}}\right).
\end{align*}
Similarly we can prove that for the reversed case. 
Now for any $q\not\in\mathcal{L}^\star$, we know that
\begin{align*}
    \hat P_t(q) \min\left(\hat c_{s, t}(q),  \hat c_{l, t}(q) \right) & \leq  \left(P(q) + \sqrt{\frac{2\log(8/\delta)}{t}}\right) \min\left(c_s^\star(q),  c_l^\star(q) \right)  \\
    & \leq P(q)\min\left( c_s^\star(q),  c_l^\star(q) \right) + B_2 \sqrt{\frac{2\log(8/\delta)}{t}}.
\end{align*}
Now consider any $q\in\mathcal{L}_t$ but $q\not \in\mathcal{L}^\star$, and any other $q'\in\mathcal{L}^\star$ but $q'\not\in\mathcal{L}_t$. We have
\begin{align*}
   & P(q')\min\left( c_s^\star(q'),  c_l^\star(q') \right) - P(q)\min\left( c_s^\star(q),  c_l^\star(q) \right)\\ 
     \leq & \hat P_t(q') \min\left(\hat c_{s, t}(q'),  \hat c_{l, t}(q') \right)   -  \hat P_t(q) \min\left(\hat c_{s, t}(q),  \hat c_{l, t}(q) \right) \\ 
     & \quad + C(B_2-B_1)\cdot \min\left(1,\sqrt{\frac{\log(8|\mathcal{Q}|/\delta)}{2\sum_{i=1}^{t-1} \mathbbm{1}(s_i=\pi_t(q), q_i = q)}}\right)  + B_2 \sqrt{\frac{2\log(8/\delta)}{t}}\\
     \leq &  C(B_2-B_1)\cdot\min\left(1, \sqrt{\frac{\log(8|\mathcal{Q}|/\delta)}{2\sum_{i=1}^{t-1} \mathbbm{1}(s_i=\pi_t(q), q_i = q)}}\right)+ B_2 \sqrt{\frac{2\log(8/\delta)}{t}}.
\end{align*}
Here the last inequality uses the fact that $q$ is inside $\mathcal{L}_t$. Thus we have
\begin{align*}
     &  \sum_{t=1}^{T} \mathbb{E}[\mathsf{cost}(\mathcal{L}_t, \pi^\star) - \mathsf{cost}(\mathcal{L}^\star, \pi^\star)\mathbbm{1}(E^t)] \\
     \leq  &  \sum_{t=1}^{T} \mathbb{E}[\mathsf{cost}(\mathcal{L}_t, \pi^\star) - \mathsf{cost}(\mathcal{L}^\star, \pi^\star)\mathbbm{1}(E^t \cap E_4)] + B_2T \delta \\
     \leq &   B_2T \delta +  C \sum_{t=1}^{T} \mathbb{E}\Bigg[\sum_{q\in\mathcal{L}^\star} \mathbbm{1}(q\not \in\mathcal{L}_t)  (B_2-B_1)\cdot \min\left(1,\sqrt{\frac{\log(8|\mathcal{Q}|/\delta)}{2\sum_{i=1}^{t-1} \mathbbm{1}(s_i=\pi_t(q), q_i = q)}}\right)  \\ 
     & \quad + B_2 \sqrt{\frac{2\log(8/\delta)}{t}}\mid E^t \cap E_4\Bigg] \\ 
     \leq & C \cdot \Bigg(  B_2T \delta + LB_2\sqrt{2T\log(8/\delta)} +  (B_2-B_1) \log(8T|\mathcal{Q}|/\delta)  \\ 
     & \quad \cdot \sum_{q\in\mathcal{L}^\star} \sum_{t=1}^{T} \mathbb{E}\left[ \mathbbm{1}(q\not \in\mathcal{L}_t)\cdot \min\left(1,\sqrt{\frac{1}{\sum_{i=1}^{t-1} \mathbbm{1}(s_i=\pi_t(q), q_i = q)}}\right)\mid E^t \cap E_4\right]\Bigg) \\
     = & C \cdot \Bigg(  B_2T \delta + LB_2\sqrt{2T\log(8/\delta)} +  (B_2-B_1) \log(8T|\mathcal{Q}|/\delta)  \\ 
     & \quad \cdot \sum_{q\in\mathcal{L}^\star} \sum_{\pi\in\{1, 2\}}\sum_{t=1}^{T} \mathbb{E}\left[ \mathbbm{1}(q\not \in\mathcal{L}_t, \pi_t(q) = \pi)\cdot \min\left(1,\sqrt{\frac{1}{\sum_{i=1}^{t-1} \mathbbm{1}(s_i=\pi, q_i = q)}}\right)\mid E^t \cap E_4\right]\Bigg).
\end{align*}
Let $T_t(q,\pi) = \sum_{i=1}^{t-1} \mathbbm{1}(s_i=\pi, q_i = q)$.
Now for each $q\in\mathcal{L}^\star$ and $\pi\in\{0,1\}$, we look at the term $\sum_{t=1}^{T} \mathbb{E}\left[ \mathbbm{1}(q\not \in\mathcal{L}_t, \pi_t = \pi) \cdot \min\left(1,\sqrt{\frac{1}{T_t(q,\pi)}}\right)\mid  E^t \cap E_4\right]$. We prove the following lemma:
\begin{lemma}
We have
    \begin{align*}
        \sum_{t=1}^T \mathbb{E}\left[ \mathbbm{1}(q\not \in\mathcal{L}_t, \pi_t(q) = \pi) \cdot \min\left(1,\sqrt{\frac{1}{T_t(q,\pi)}}\right)\mid E^t \cap E_4\right]\leq  \frac{CB_2 |\mathcal{Q}|\log(3TL/\delta)\sqrt{T}} {B_1} + T \delta.
    \end{align*}
\end{lemma}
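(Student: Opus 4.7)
The plan is to mirror the proof of Lemma~\ref{lem:sum_sqrt} with extra bookkeeping for the model index $\pi$. Let $t_k(q)$ denote the step of the $k$-th arrival of query $q$ (with $t_0(q)=0$), and let $E$ abbreviate the joint event $\bigl(\bigcap_{s=1}^T E_{1,s}\cap E_{2,s}\cap E_{3,s}\bigr)\cap E_4$. First I would pass from the expectation conditional on $E^t \cap E_4$ to one restricted to $E$, paying an additive cost of at most $T\delta$ since each summand lies in $[0,1]$.

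The key structural step is to observe that within each arrival interval $[t_k(q)+1,\, t_{k+1}(q)]$ three quantities are frozen: (a) the cache-membership flag $\mathbbm{1}(q\in\mathcal{L}_t)$, since the cache can add $q$ only when $q$ itself arrives; (b) the counter $T_t(q,\pi)$, since it advances only when $q$ is processed; and crucially (c) the policy value $\pi_t(q)$, since $\hat c_{s,t}(q)$ and $\hat c_{l,t}(q)$ only update at steps where $q$ is processed. Consequently,
\begin{align*}
& \sum_{t=1}^T \mathbbm{1}(q\not\in\mathcal{L}_t,\, \pi_t(q)=\pi)\cdot \min\!\left(1,\sqrt{\tfrac{1}{T_t(q,\pi)}}\right) \\
&\quad \leq \sum_k g_k(q)\cdot \mathbbm{1}(q\not\in\mathcal{L}_{t_{k+1}(q)},\, \pi_{t_{k+1}(q)}(q)=\pi)\cdot \min\!\left(1,\sqrt{\tfrac{1}{T_{t_k(q)+1}(q,\pi)}}\right).
\end{align*}
On $E_4$ each $g_k(q)$ is at most $B_2|\mathcal{Q}|\log(4TL/\delta)/B_1$, so it suffices to bound the residual sum $\sum_k \mathbbm{1}(q\not\in\mathcal{L}_{t_{k+1}(q)},\,\pi_{t_{k+1}(q)}(q)=\pi)\cdot \min(1,\,1/\sqrt{T_{t_k(q)+1}(q,\pi)})$. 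Whenever the indicator equals $1$, query $q$ is processed by model $\pi$ at step $t_{k+1}(q)$, so $T_{t_{k+1}(q)+1}(q,\pi) = T_{t_k(q)+1}(q,\pi)+1$; when it equals $0$ the counter for $\pi$ need not move. Thus the values $T_{t_k(q)+1}(q,\pi)$ attained with non-zero indicator are distinct positive integers, and $\sum_{j=1}^{T} 1/\sqrt{j} \leq C\sqrt{T}$ closes the bound; combining this with the bound on $g_k(q)$ and the $T\delta$ conditioning penalty gives the claim.

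The main obstacle is step (c): the argument requires $\pi_t(q)$ to remain constant on each inter-arrival interval so that the ``$\pi_{t_{k+1}(q)}(q)=\pi$'' labeling commutes with the telescoping counter. This would fail for a generic contextual policy, but here the tabular regression oracle refreshes $\hat c_{s,t}(q), \hat c_{l,t}(q)$ only at the steps where $q$ is actually processed, so $\pi_t(q)$ is indeed locally constant on the interval, and the double-counting argument on $T_t(q,\pi)$ goes through cleanly.
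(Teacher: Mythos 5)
Your proposal is correct and follows essentially the same route as the paper's proof: pay $T\delta$ to condition on the full event $E$, partition time into inter-arrival intervals of $q$, use that the counter $T_t(q,\pi)$, the (relevant direction of the) cache-membership indicator, and the policy value $\pi_t(q)$ do not change within an interval, bound each interval length by $B_2|\mathcal{Q}|\log(TL/\delta)/B_1$ via $E_4$, and telescope the counter to get $\sum_k k^{-1/2}\leq C\sqrt{T}$. You also correctly isolate the same subtle point the paper relies on, namely that the tabular oracle only updates $\hat c_{s,t}(q),\hat c_{l,t}(q)$ when $q$ is processed, so $\pi_t(q)$ is constant between arrivals.
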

\begin{proof}
    Let $t_k(q) = \sum_{l=1}^{k-1} g_l(q)$ be the  step that the $k$-th query of $q$ arrives. And let $E = (\bigcap_{t=1}^T E_t )\cap E_4$.  The summation can be written as
    \begin{align*}
      &    \sum_{t=1}^T \mathbb{E}\left[ \mathbbm{1}(q\not \in\mathcal{L}_t, \pi_t(q) = \pi) \cdot \min\left(1,\sqrt{\frac{1}{T_t(q,\pi)}}\right)\mid E^t \cap E_3\right] \\ 
     \leq  &    \sum_{t=1}^T \mathbb{E}\left[ \mathbbm{1}(q\not \in\mathcal{L}_t, \pi_t(q) = \pi) \cdot \min\left(1,\sqrt{\frac{1}{T_t(q,\pi)}}\right)\mid E\right] + T\delta \\ 
      =  &  \sum_{k=0}^T \mathbb{E}\left[\sum_{t=t_k(q)+1}^{t_{k+1}(q)}  \mathbbm{1}(q\not \in\mathcal{L}_t, \pi_t(q) = \pi) \cdot \min\left(1,\sqrt{\frac{1}{T_t(q,\pi)}}\right)\mid E\right] + T\delta \\
          \leq & \sum_{k=0}^T \mathbb{E}\left[ \sum_{t=t_k(q)+1}^{t_{k+1}(q)} \mathbbm{1}(q\not \in\mathcal{L}_{t_{k+1}(q)}, \pi_{t_{k+1}}(q) = \pi) \cdot \min\left(1,\sqrt{\frac{1}{T_{t_k(q)+1}(q, \pi)}}\right)\mid E\right] + T\delta.
    \end{align*}
The last inequality is due to (a). $T_t(q,\pi)$ does not change if at round $t$ the query is not $q$; (b). if $q\in\mathcal{L}_{t_{k+1}(q)}$, we will have $q\in\mathcal{L}_{t}$ for any $t\in[t_{k}(q)+1, t_{k+1}(q)]$ since $q$ never arrives in the middle and must remain in the cache set until $ t_{k+1}(q)$; (c) For $t\in[t_{k}(q)+1, t_{k+1}(q)]$, $\pi_{t}$ does not change since both the frequency and cost estimator does not change for $q$. From the definition of $t_k(q)$, we know that $T_{t_k(q)+1}(q, \pi) \geq 1 $ and thus we can drop the the minimum in the above equation.
Now from event $E_4$, we know that
    \begin{align*}
 &\sum_{k=0}^T \mathbb{E}\left[ \sum_{t=t_k(q)+1}^{t_{k+1}(q)} \mathbbm{1}(q\not \in\mathcal{L}_{t_{k+1}(q)}, \pi_{t_{k+1}}(q) = \pi) \cdot \min\left(1,\sqrt{\frac{1}{T_{t_k(q)+1}(q, \pi)}}\right)\mid E\right] \\ 
       \leq   &  \sum_{k=0}^T  \mathbb{E}\left[ g_k(q) \cdot\mathbbm{1}(q\not \in\mathcal{L}_{t_{k+1}(q)}, \pi_{t_{k+1}}(q) = \pi) \cdot \min\left(1,\sqrt{\frac{1}{T_{t_k(q)+1}(q, \pi)}}\right) \mid E\right] \\ 
     \leq     &    \frac{B_2 |\mathcal{Q}|\log(4TL/\delta)} {B_1} \cdot  \sum_{k=0}^T  \mathbb{E}\left[\mathbbm{1}(q\not \in\mathcal{L}_{t_{k+1}(q)}, \pi_{t_{k+1}}(q) = \pi) \cdot \min\left(1,\sqrt{\frac{1}{T_{t_k(q)+1}(q, \pi)}}\right)\mid E\right]
    \end{align*}
We know that $T_{t_{k+1}(q)+1}(q, \pi) = T_{t_{k+1}(q)}(q, \pi) + 1 = T_{t_{k}(q)+1}(q, \pi) + 1 $ if $q\not \in\mathcal{L}_{t_{k+1}(q)}$ and $\pi_{t_{k+1}}(q) = \pi$ since the query $q$ missing the cache will be sent to one of the models, and only the one selected will observe the cost.  Thus overall, we know that we have either  $T_{t_{k+1}(q)+1}(q, \pi) =  T_{t_{k}(q)+1}(q, \pi) + 1$, or $ \mathbbm{1}(q\not \in\mathcal{L}_{t_{k+1}(q)}, \pi_{t_{k+1}}(q) = \pi) \cdot \min\left(1,\sqrt{\frac{1}{T_{t_k(q)+1}(q, \pi)}}\right) = 0$ and $T_{t_{k+1}(q)+1}(q,\pi) =  T_{t_{k}(q)+1}(q,\pi)$. Thus overall, we have
    \begin{align*}
         \sum_{k=0}^T   \mathbb{E}\left[\mathbbm{1}(q\not \in\mathcal{L}_{t_{k+1}(q)}, \pi_{t_{k+1}}(q) = \pi) \cdot \min\left(1,\sqrt{\frac{1}{T_{t_k(q)+1}(q, \pi)}}\right)\mid E\right] \leq \sum_{k=1}^T \frac{1}{\sqrt{k}} \leq C\sqrt{T}.
    \end{align*}
\end{proof}
Thus  we know that the second difference satisfies
\begin{align*}
      \sum_{t=1}^{T} \mathbb{E}[\mathsf{cost}(\mathcal{L}_t, \pi^\star) - \mathsf{cost}(\mathcal{L}^\star, \pi^\star)\mathbbm{1}(E^t)] \leq &  C \cdot \Bigg(  B_2T \delta + LB_2\sqrt{2T\log(8/\delta)} \\
    & \quad + {\frac{ L(B_2-B_1)B_2|\mathcal{Q}|L\log^2(T|\mathcal{Q}|/\delta)}{B_1}}\cdot  \sqrt{T} \Bigg).
 \end{align*}
Overall by taking $\delta=1/T$, we know that 
\begin{align*}
    \mathsf{Regret}(T)\leq  {\frac{C L(B_2-B_1)B_2|\mathcal{Q}|L\log^2(T|\mathcal{Q}|)}{B_1}}\cdot  \sqrt{T}.
\end{align*}

\end{proof}


\section{Additional Experiments}
\label{sec:app-additional}
\subsection{Synthetic Datasets}
\label{sec:app-additional-synthetic}

We conduct synthetic online and offline experiments for joint optimization of caching and model multiplexing.
We use i.i.d. Bernoulli distributions for two models because we want to mimic the model ensemble use case and give a large penalty to the wrong output.
In Figure~\ref{fig:online-sim}, we plot the cumulative cost and regret in online learning for LFU and LEC caching algorithms. We present more data points in Table~\ref{tab:offline_synthetic} and Table~\ref{tab:online_synthetic}, under the same setting of 10000 requests with 20 different queries and cache size 10.
Similar to the real dataset setting, we compare all combinations of caching strategy choices and model multiplexer choices.
We consider the frequency distribution as power distribution with $\alpha = 0.5$ and $0.8$. The ground truth cost for each query processed by both models is set as a sample from $r \cdot X+1$, where $r$ is called as cost ratio and $X$ is a random variable generated from a Bernoulli distribution with the parameter $0.5$.
We consider the model multiplexer accuracy with $0.8$ and $1$.
We repeat the simulation $1000$ times and take the mean.
Consistent with Figure~\ref{fig:online-sim}, our simulation suggests that LEC with a perfect model multiplexer significantly improves the baselines when the cost ratio is large.
Simulation $1000$ times cannot remove all randomness so we can observe some fluctuations.
Theoretically, the columns of choosing model 1 and the columns of choosing model 2 should behave similarly.

\begin{table}[ht]
\begin{center}
\begin{tabular}{ ccp{3.5em}p{3.4em}p{3.4em}p{3.4em}p{3.4em}p{3.4em}p{3.4em} }
  \toprule
  $\alpha$ & cost ratio & selector accuracy & LFU+ model 1 & LFU+ model 2 & LFU+ selector & LEC+ model 1 & LEC+ model 2 & LEC+ selector \\ 
  \midrule
  0.5 & 1.5 & \multicolumn{1}{r}{ 0.8 } & \multicolumn{1}{r}{ 5.25 } & \multicolumn{1}{r}{ 5.24 } & \multicolumn{1}{r}{ 4.09 } & \multicolumn{1}{r}{ 4.40 } & \multicolumn{1}{r}{ 4.36 } & \multicolumn{1}{r}{ \textbf{ 3.59 } } \\ 
  0.8 & 1.5 & \multicolumn{1}{r}{ 0.8 } & \multicolumn{1}{r}{ 7.61 } & \multicolumn{1}{r}{ 7.60 } & \multicolumn{1}{r}{ 5.93 } & \multicolumn{1}{r}{ 5.73 } & \multicolumn{1}{r}{ 5.68 } & \multicolumn{1}{r}{ \textbf{ 4.85 } } \\ 
  \midrule
  0.5 & 1.5 & \multicolumn{1}{r}{ 1 } & \multicolumn{1}{r}{ 5.25 } & \multicolumn{1}{r}{ 5.24 } & \multicolumn{1}{r}{ 3.31 } & \multicolumn{1}{r}{ 4.40 } & \multicolumn{1}{r}{ 4.36 } & \multicolumn{1}{r}{ \textbf{ 2.74 } } \\ 
  0.8 & 1.5 & \multicolumn{1}{r}{ 1 } & \multicolumn{1}{r}{ 7.61 } & \multicolumn{1}{r}{ 7.60 } & \multicolumn{1}{r}{ 4.81 } & \multicolumn{1}{r}{ 5.73 } & \multicolumn{1}{r}{ 5.68 } & \multicolumn{1}{r}{ \textbf{ 3.68 } } \\ 
  \midrule
  0.5 & 100 & \multicolumn{1}{r}{ 0.8 } & \multicolumn{1}{r}{ 148.05 } & \multicolumn{1}{r}{ 147.38 } & \multicolumn{1}{r}{ 103.43 } & \multicolumn{1}{r}{ 29.93 } & \multicolumn{1}{r}{ \textbf{ 26.83 } } & \multicolumn{1}{r}{ 39.32 } \\ 
  0.8 & 100 & \multicolumn{1}{r}{ 0.8 } & \multicolumn{1}{r}{ 214.93 } & \multicolumn{1}{r}{ 213.88 } & \multicolumn{1}{r}{ 150.31 } & \multicolumn{1}{r}{ 43.77 } & \multicolumn{1}{r}{ \textbf{ 39.02 } } & \multicolumn{1}{r}{ 49.88 } \\ 
  \midrule
  0.5 & 100 & \multicolumn{1}{r}{ 1 } & \multicolumn{1}{r}{ 148.05 } & \multicolumn{1}{r}{ 147.38 } & \multicolumn{1}{r}{ 73.94 } & \multicolumn{1}{r}{ 29.93 } & \multicolumn{1}{r}{ 26.83 } & \multicolumn{1}{r}{ \textbf{ 3.12 } } \\ 
  0.8 & 100 & \multicolumn{1}{r}{ 1 } & \multicolumn{1}{r}{ 214.93 } & \multicolumn{1}{r}{ 213.88 } & \multicolumn{1}{r}{ 107.63 } & \multicolumn{1}{r}{ 43.77 } & \multicolumn{1}{r}{ 39.02 } & \multicolumn{1}{r}{ \textbf{ 4.19 } } \\ 
  \bottomrule
\end{tabular}
\end{center}
\caption{offline synthetic dataset}
\label{tab:offline_synthetic}
\end{table}

\begin{table}[ht]
\begin{center}
\begin{tabular}{ ccp{3.4em}p{3.4em}p{3.4em}p{3.4em}p{3.4em}p{3.4em} }
  \toprule
  $\alpha$ & cost ratio &  LFU+ model 1 & LFU+ model 2 & LFU+ selector & LEC+ model 1 & LEC+ model 2 & LEC+ selector \\ 
  \midrule
  0.5 & 1.5 & \multicolumn{1}{r}{ 5.35 } & \multicolumn{1}{r}{ 5.34 } & \multicolumn{1}{r}{ 4.34 } & \multicolumn{1}{r}{ 4.60 } & \multicolumn{1}{r}{ 4.59 } & \multicolumn{1}{r}{ \textbf{ 3.75 } } \\ 
  0.8 & 1.5 & \multicolumn{1}{r}{ 7.79 } & \multicolumn{1}{r}{ 7.78 } & \multicolumn{1}{r}{ 6.32 } & \multicolumn{1}{r}{ 6.01 } & \multicolumn{1}{r}{ 5.98 } & \multicolumn{1}{r}{ \textbf{ 5.09 } } \\ 
  0.5 & 100 & \multicolumn{1}{r}{ 150.93 } & \multicolumn{1}{r}{ 150.37 } & \multicolumn{1}{r}{ 76.80 } & \multicolumn{1}{r}{ 31.88 } & \multicolumn{1}{r}{ 28.65 } & \multicolumn{1}{r}{ \textbf{ 4.85 } } \\ 
  0.8 & 100 & \multicolumn{1}{r}{ 220.19 } & \multicolumn{1}{r}{ 219.49 } & \multicolumn{1}{r}{ 112.26 } & \multicolumn{1}{r}{ 46.44 } & \multicolumn{1}{r}{ 41.45 } & \multicolumn{1}{r}{ \textbf{ 6.31 } } \\ 
  \bottomrule
\end{tabular}
\end{center}
\caption{online synthetic dataset}
\label{tab:online_synthetic}
\end{table}

\subsection{Real Datasets}

In this section, we provide additional experiments on real-world dataset. We run both offline and online algorithms on Lambada dataset and OpenAssistant dataset, with OPT-1.3B vs OPT-13B or FastChat-T5-3B vs Vicuna-13B. We mainly consider three settings: 
\begin{itemize}
    \item In Table 5-12, we consider distinct prompt size $100$, total query size $10000$, cache size $40$, same as the experiments in the main text.
    \item In Table 13-20, we consider distinct prompt size $1000$, total query size $2000$, cache size $100$.
    \item In Table 21-28, we consider distinct prompt size $1000$, total query size $2000$, cache size $0$, same as the experiments in the main text.
    
\end{itemize}

Our proposed algorithm mostly gives dominant results over other baselines. In Table 21-28, the cache size is set to be $0$ so there is no difference between LFU and LEC. 

\begin{table}[ht]
\begin{center}
\begin{tabular}{ cp{3.2em}p{3.2em}p{3.2em}p{3.2em}p{3.2em}p{3.2em} }
  \toprule
  $\alpha$ &  LFU+ large & LFU+ cascade & LFU+ selector & LEC+ large & LEC+ cascade & LEC+ selector \\ 
  \midrule
  0.2 & \multicolumn{1}{r}{ 4.13 } & \multicolumn{1}{r}{ 4.49 } & \multicolumn{1}{r}{ \textbf{ 2.88 } } & \multicolumn{1}{r}{ 4.13 } & \multicolumn{1}{r}{ 4.49 } & \multicolumn{1}{r}{ \textbf{ 2.88 } } \\ 
  0.5 & \multicolumn{1}{r}{ 4.13 } & \multicolumn{1}{r}{ 4.49 } & \multicolumn{1}{r}{ \textbf{ 3.12 } } & \multicolumn{1}{r}{ 4.13 } & \multicolumn{1}{r}{ 4.49 } & \multicolumn{1}{r}{ \textbf{ 3.12 } } \\ 
  0.8 & \multicolumn{1}{r}{ 4.13 } & \multicolumn{1}{r}{ 4.49 } & \multicolumn{1}{r}{ \textbf{ 3.21 } } & \multicolumn{1}{r}{ 4.13 } & \multicolumn{1}{r}{ 4.49 } & \multicolumn{1}{r}{ \textbf{ 3.21 } } \\ 
  \bottomrule
\end{tabular}
\end{center}
\caption{FLOPs for online lambda dataset, opt-1.3b vs opt-13b}
\label{tab:online_lambda_opt-1.3b_opt-13b_flops}
\end{table}

\begin{table}[ht]
\begin{center}
\begin{tabular}{ cp{3.2em}p{3.2em}p{3.2em}p{3.2em}p{3.2em}p{3.2em} }
  \toprule
  $\alpha$ &  LFU+ large & LFU+ cascade & LFU+ selector & LEC+ large & LEC+ cascade & LEC+ selector \\ 
  \midrule
  0.2 & \multicolumn{1}{r}{ 0.39 } & \multicolumn{1}{r}{ 0.48 } & \multicolumn{1}{r}{ \textbf{ 0.31 } } & \multicolumn{1}{r}{ 0.39 } & \multicolumn{1}{r}{ 0.48 } & \multicolumn{1}{r}{ \textbf{ 0.31 } } \\ 
  0.5 & \multicolumn{1}{r}{ 0.39 } & \multicolumn{1}{r}{ 0.48 } & \multicolumn{1}{r}{ \textbf{ 0.33 } } & \multicolumn{1}{r}{ 0.39 } & \multicolumn{1}{r}{ 0.48 } & \multicolumn{1}{r}{ \textbf{ 0.33 } } \\ 
  0.8 & \multicolumn{1}{r}{ 0.39 } & \multicolumn{1}{r}{ 0.48 } & \multicolumn{1}{r}{ \textbf{ 0.34 } } & \multicolumn{1}{r}{ 0.39 } & \multicolumn{1}{r}{ 0.48 } & \multicolumn{1}{r}{ \textbf{ 0.34 } } \\ 
  \bottomrule
\end{tabular}
\end{center}
\caption{Latency for online lambda dataset, opt-1.3b vs opt-13b}
\label{tab:online_lambda_opt-1.3b_opt-13b_latency}
\end{table}

\begin{table}[ht]
\begin{center}
\begin{tabular}{ cp{3.2em}p{3.2em}p{3.2em}p{3.2em}p{3.2em}p{3.2em} }
  \toprule
  $\alpha$ &  LFU+ large & LFU+ cascade & LFU+ selector & LEC+ large & LEC+ cascade & LEC+ selector \\ 
  \midrule
  0.2 & \multicolumn{1}{r}{ 1.66 } & \multicolumn{1}{r}{ 1.68 } & \multicolumn{1}{r}{ 0.90 } & \multicolumn{1}{r}{ 1.12 } & \multicolumn{1}{r}{ 0.67 } & \multicolumn{1}{r}{ \textbf{ 0.48 } } \\ 
  0.5 & \multicolumn{1}{r}{ 3.56 } & \multicolumn{1}{r}{ 3.61 } & \multicolumn{1}{r}{ 1.88 } & \multicolumn{1}{r}{ 2.20 } & \multicolumn{1}{r}{ 1.22 } & \multicolumn{1}{r}{ \textbf{ 0.87 } } \\ 
  0.8 & \multicolumn{1}{r}{ 5.01 } & \multicolumn{1}{r}{ 5.09 } & \multicolumn{1}{r}{ 2.64 } & \multicolumn{1}{r}{ 2.75 } & \multicolumn{1}{r}{ 1.49 } & \multicolumn{1}{r}{ \textbf{ 1.04 } } \\ 
  \bottomrule
\end{tabular}
\end{center}
\caption{FLOPs for online oasst dataset, fastchat-t5 vs vicuna}
\label{tab:online_oasst_fastchat-t5_vicuna_FLOPs}
\end{table}

\begin{table}[ht]
\begin{center}
\begin{tabular}{ cp{3.2em}p{3.2em}p{3.2em}p{3.2em}p{3.2em}p{3.2em} }
  \toprule
  $\alpha$ &  LFU+ large & LFU+ cascade & LFU+ selector & LEC+ large & LEC+ cascade & LEC+ selector \\ 
  \midrule
  0.2 & \multicolumn{1}{r}{ 9.31 } & \multicolumn{1}{r}{ 13.88 } & \multicolumn{1}{r}{ 7.24 } & \multicolumn{1}{r}{ 8.74 } & \multicolumn{1}{r}{ 8.82 } & \multicolumn{1}{r}{ \textbf{ 5.93 } } \\ 
  0.5 & \multicolumn{1}{r}{ 20.04 } & \multicolumn{1}{r}{ 29.88 } & \multicolumn{1}{r}{ 15.11 } & \multicolumn{1}{r}{ 18.68 } & \multicolumn{1}{r}{ 16.90 } & \multicolumn{1}{r}{ \textbf{ 11.87 } } \\ 
  0.8 & \multicolumn{1}{r}{ 28.24 } & \multicolumn{1}{r}{ 42.12 } & \multicolumn{1}{r}{ 21.14 } & \multicolumn{1}{r}{ 26.07 } & \multicolumn{1}{r}{ 20.31 } & \multicolumn{1}{r}{ \textbf{ 15.49 } } \\ 
  \bottomrule
\end{tabular}
\end{center}
\caption{Latency for online oasst dataset, fastchat-t5 vs vicuna}
\label{tab:online_oasst_fastchat-t5_vicuna_Latency}
\end{table}

\begin{table}[ht]
\begin{center}
\begin{tabular}{ cp{3.5em}p{3.2em}p{3.2em}p{3.2em}p{3.2em}p{3.2em}p{3.2em} }
  \toprule
  $\alpha$ & selector accuracy & LFU+ large & LFU+ cascade & LFU+ selector & LEC+ large & LEC+ cascade & LEC+ selector \\ 
  \midrule
  0.2 & \multicolumn{1}{r}{ 0.8 } & \multicolumn{1}{r}{ 4.13 } & \multicolumn{1}{r}{ 4.49 } & \multicolumn{1}{r}{ \textbf{ 3.07 } } & \multicolumn{1}{r}{ 4.13 } & \multicolumn{1}{r}{ 4.49 } & \multicolumn{1}{r}{ \textbf{ 3.07 } } \\ 
  0.5 & \multicolumn{1}{r}{ 0.8 } & \multicolumn{1}{r}{ 4.13 } & \multicolumn{1}{r}{ 4.49 } & \multicolumn{1}{r}{ \textbf{ 3.07 } } & \multicolumn{1}{r}{ 4.13 } & \multicolumn{1}{r}{ 4.49 } & \multicolumn{1}{r}{ \textbf{ 3.07 } } \\ 
  0.8 & \multicolumn{1}{r}{ 0.8 } & \multicolumn{1}{r}{ 4.13 } & \multicolumn{1}{r}{ 4.49 } & \multicolumn{1}{r}{ \textbf{ 3.07 } } & \multicolumn{1}{r}{ 4.13 } & \multicolumn{1}{r}{ 4.49 } & \multicolumn{1}{r}{ \textbf{ 3.07 } } \\ 
  \midrule
  0.2 & \multicolumn{1}{r}{ 1 } & \multicolumn{1}{r}{ 4.13 } & \multicolumn{1}{r}{ 4.49 } & \multicolumn{1}{r}{ \textbf{ 2.24 } } & \multicolumn{1}{r}{ 4.13 } & \multicolumn{1}{r}{ 4.49 } & \multicolumn{1}{r}{ \textbf{ 2.24 } } \\ 
  0.5 & \multicolumn{1}{r}{ 1 } & \multicolumn{1}{r}{ 4.13 } & \multicolumn{1}{r}{ 4.49 } & \multicolumn{1}{r}{ \textbf{ 2.25 } } & \multicolumn{1}{r}{ 4.13 } & \multicolumn{1}{r}{ 4.49 } & \multicolumn{1}{r}{ \textbf{ 2.25 } } \\ 
  0.8 & \multicolumn{1}{r}{ 1 } & \multicolumn{1}{r}{ 4.13 } & \multicolumn{1}{r}{ 4.49 } & \multicolumn{1}{r}{ \textbf{ 2.25 } } & \multicolumn{1}{r}{ 4.13 } & \multicolumn{1}{r}{ 4.49 } & \multicolumn{1}{r}{ \textbf{ 2.25 } } \\ 
  \bottomrule
\end{tabular}
\end{center}
\caption{FLOPs for offline lambda dataset, opt-1.3b vs opt-13b}
\label{tab:offline_lambda_opt-1.3b_opt-13b_flops}
\end{table}

\begin{table}[ht]
\begin{center}
\begin{tabular}{ cp{3.5em}p{3.2em}p{3.2em}p{3.2em}p{3.2em}p{3.2em}p{3.2em} }
  \toprule
  $\alpha$ & selector accuracy & LFU+ large & LFU+ cascade & LFU+ selector & LEC+ large & LEC+ cascade & LEC+ selector \\ 
  \midrule
  0.2 & \multicolumn{1}{r}{ 0.8 } & \multicolumn{1}{r}{ 0.39 } & \multicolumn{1}{r}{ 0.48 } & \multicolumn{1}{r}{ \textbf{ 0.32 } } & \multicolumn{1}{r}{ 0.39 } & \multicolumn{1}{r}{ 0.48 } & \multicolumn{1}{r}{ \textbf{ 0.32 } } \\ 
  0.5 & \multicolumn{1}{r}{ 0.8 } & \multicolumn{1}{r}{ 0.39 } & \multicolumn{1}{r}{ 0.48 } & \multicolumn{1}{r}{ \textbf{ 0.32 } } & \multicolumn{1}{r}{ 0.39 } & \multicolumn{1}{r}{ 0.48 } & \multicolumn{1}{r}{ \textbf{ 0.32 } } \\ 
  0.8 & \multicolumn{1}{r}{ 0.8 } & \multicolumn{1}{r}{ 0.39 } & \multicolumn{1}{r}{ 0.48 } & \multicolumn{1}{r}{ \textbf{ 0.32 } } & \multicolumn{1}{r}{ 0.39 } & \multicolumn{1}{r}{ 0.48 } & \multicolumn{1}{r}{ \textbf{ 0.32 } } \\ 
  \midrule
  0.2 & \multicolumn{1}{r}{ 1 } & \multicolumn{1}{r}{ 0.39 } & \multicolumn{1}{r}{ 0.48 } & \multicolumn{1}{r}{ \textbf{ 0.24 } } & \multicolumn{1}{r}{ 0.39 } & \multicolumn{1}{r}{ 0.48 } & \multicolumn{1}{r}{ \textbf{ 0.24 } } \\ 
  0.5 & \multicolumn{1}{r}{ 1 } & \multicolumn{1}{r}{ 0.39 } & \multicolumn{1}{r}{ 0.48 } & \multicolumn{1}{r}{ \textbf{ 0.24 } } & \multicolumn{1}{r}{ 0.39 } & \multicolumn{1}{r}{ 0.48 } & \multicolumn{1}{r}{ \textbf{ 0.24 } } \\ 
  0.8 & \multicolumn{1}{r}{ 1 } & \multicolumn{1}{r}{ 0.39 } & \multicolumn{1}{r}{ 0.48 } & \multicolumn{1}{r}{ \textbf{ 0.24 } } & \multicolumn{1}{r}{ 0.39 } & \multicolumn{1}{r}{ 0.48 } & \multicolumn{1}{r}{ \textbf{ 0.24 } } \\ 
  \bottomrule
\end{tabular}
\end{center}
\caption{Latency for offline lambda dataset, opt-1.3b vs opt-13b}
\label{tab:offline_lambda_opt-1.3b_opt-13b_latency}
\end{table}

\begin{table}[ht]
\begin{center}
\begin{tabular}{ cp{3.5em}p{3.2em}p{3.2em}p{3.2em}p{3.2em}p{3.2em}p{3.2em} }
  \toprule
  $\alpha$ & selector accuracy & LFU+ large & LFU+ cascade & LFU+ selector & LEC+ large & LEC+ cascade & LEC+ selector \\ 
  \midrule
  0.2 & \multicolumn{1}{r}{ 0.8 } & \multicolumn{1}{r}{ 1.54 } & \multicolumn{1}{r}{ 1.55 } & \multicolumn{1}{r}{ 1.09 } & \multicolumn{1}{r}{ 1.01 } & \multicolumn{1}{r}{ \textbf{ 0.56 } } & \multicolumn{1}{r}{ 0.66 } \\ 
  0.5 & \multicolumn{1}{r}{ 0.8 } & \multicolumn{1}{r}{ 3.39 } & \multicolumn{1}{r}{ 3.42 } & \multicolumn{1}{r}{ 2.41 } & \multicolumn{1}{r}{ 2.08 } & \multicolumn{1}{r}{ \textbf{ 1.10 } } & \multicolumn{1}{r}{ 1.31 } \\ 
  0.8 & \multicolumn{1}{r}{ 0.8 } & \multicolumn{1}{r}{ 4.74 } & \multicolumn{1}{r}{ 4.82 } & \multicolumn{1}{r}{ 3.38 } & \multicolumn{1}{r}{ 2.61 } & \multicolumn{1}{r}{ \textbf{ 1.36 } } & \multicolumn{1}{r}{ 1.62 } \\ 
  \midrule
  0.2 & \multicolumn{1}{r}{ 1 } & \multicolumn{1}{r}{ 1.54 } & \multicolumn{1}{r}{ 1.55 } & \multicolumn{1}{r}{ 0.79 } & \multicolumn{1}{r}{ 1.01 } & \multicolumn{1}{r}{ 0.56 } & \multicolumn{1}{r}{ \textbf{ 0.38 } } \\ 
  0.5 & \multicolumn{1}{r}{ 1 } & \multicolumn{1}{r}{ 3.39 } & \multicolumn{1}{r}{ 3.42 } & \multicolumn{1}{r}{ 1.75 } & \multicolumn{1}{r}{ 2.08 } & \multicolumn{1}{r}{ 1.10 } & \multicolumn{1}{r}{ \textbf{ 0.76 } } \\ 
  0.8 & \multicolumn{1}{r}{ 1 } & \multicolumn{1}{r}{ 4.74 } & \multicolumn{1}{r}{ 4.82 } & \multicolumn{1}{r}{ 2.45 } & \multicolumn{1}{r}{ 2.61 } & \multicolumn{1}{r}{ 1.36 } & \multicolumn{1}{r}{ \textbf{ 0.93 } } \\ 
  \bottomrule
\end{tabular}
\end{center}
\caption{FLOPs for offline oasst dataset, fastchat-t5 vs vicuna}
\label{tab:offline_oasst_fastchat-t5_vicuna_FLOPs}
\end{table}

\begin{table}[ht]
\begin{center}
\begin{tabular}{ cp{3.5em}p{3.2em}p{3.2em}p{3.2em}p{3.2em}p{3.2em}p{3.2em} }
  \toprule
  $\alpha$ & selector accuracy & LFU+ large & LFU+ cascade & LFU+ selector & LEC+ large & LEC+ cascade & LEC+ selector \\ 
  \midrule
  0.2 & \multicolumn{1}{r}{ 0.8 } & \multicolumn{1}{r}{ 8.64 } & \multicolumn{1}{r}{ 12.86 } & \multicolumn{1}{r}{ 8.06 } & \multicolumn{1}{r}{ 8.03 } & \multicolumn{1}{r}{ 7.76 } & \multicolumn{1}{r}{ \textbf{ 6.73 } } \\ 
  0.5 & \multicolumn{1}{r}{ 0.8 } & \multicolumn{1}{r}{ 19.07 } & \multicolumn{1}{r}{ 28.42 } & \multicolumn{1}{r}{ 17.81 } & \multicolumn{1}{r}{ 17.66 } & \multicolumn{1}{r}{ 15.63 } & \multicolumn{1}{r}{ \textbf{ 14.14 } } \\ 
  0.8 & \multicolumn{1}{r}{ 0.8 } & \multicolumn{1}{r}{ 26.70 } & \multicolumn{1}{r}{ 39.91 } & \multicolumn{1}{r}{ 24.98 } & \multicolumn{1}{r}{ 24.53 } & \multicolumn{1}{r}{ 19.01 } & \multicolumn{1}{r}{ \textbf{ 18.12 } } \\ 
  \midrule
  0.2 & \multicolumn{1}{r}{ 1 } & \multicolumn{1}{r}{ 8.64 } & \multicolumn{1}{r}{ 12.86 } & \multicolumn{1}{r}{ 6.28 } & \multicolumn{1}{r}{ 8.03 } & \multicolumn{1}{r}{ 7.76 } & \multicolumn{1}{r}{ \textbf{ 4.86 } } \\ 
  0.5 & \multicolumn{1}{r}{ 1 } & \multicolumn{1}{r}{ 19.07 } & \multicolumn{1}{r}{ 28.42 } & \multicolumn{1}{r}{ 13.86 } & \multicolumn{1}{r}{ 17.66 } & \multicolumn{1}{r}{ 15.63 } & \multicolumn{1}{r}{ \textbf{ 10.43 } } \\ 
  0.8 & \multicolumn{1}{r}{ 1 } & \multicolumn{1}{r}{ 26.70 } & \multicolumn{1}{r}{ 39.91 } & \multicolumn{1}{r}{ 19.44 } & \multicolumn{1}{r}{ 24.53 } & \multicolumn{1}{r}{ 19.01 } & \multicolumn{1}{r}{ \textbf{ 13.80 } } \\ 
  \bottomrule
\end{tabular}
\end{center}
\caption{Latency for offline oasst dataset, fastchat-t5 vs vicuna}
\label{tab:offline_oasst_fastchat-t5_vicuna_Latency}
\end{table}


\begin{table}[ht]
\begin{center}
\begin{tabular}{ cp{3.2em}p{3.2em}p{3.2em}p{3.2em}p{3.2em}p{3.2em} }
  \toprule
  $\alpha$ &  LFU+ large & LFU+ cascade & LFU+ selector & LEC+ large & LEC+ cascade & LEC+ selector \\ 
  \midrule
  0.2 & \multicolumn{1}{r}{ 1.03 } & \multicolumn{1}{r}{ 1.16 } & \multicolumn{1}{r}{ 0.96 } & \multicolumn{1}{r}{ 0.88 } & \multicolumn{1}{r}{ 0.91 } & \multicolumn{1}{r}{ \textbf{ 0.85 } } \\ 
  0.5 & \multicolumn{1}{r}{ 1.78 } & \multicolumn{1}{r}{ 2.01 } & \multicolumn{1}{r}{ 1.52 } & \multicolumn{1}{r}{ 1.45 } & \multicolumn{1}{r}{ 1.47 } & \multicolumn{1}{r}{ \textbf{ 1.27 } } \\ 
  0.8 & \multicolumn{1}{r}{ 2.08 } & \multicolumn{1}{r}{ 2.34 } & \multicolumn{1}{r}{ 1.73 } & \multicolumn{1}{r}{ 1.64 } & \multicolumn{1}{r}{ 1.69 } & \multicolumn{1}{r}{ \textbf{ 1.43 } } \\ 
  \bottomrule
\end{tabular}
\end{center}
\caption{FLOPs for online oasst dataset, fastchat-t5 vs vicuna}
\label{tab:online_oasst_fastchat-t5_vicuna_flops}
\end{table}

\begin{table}[ht]
\begin{center}
\begin{tabular}{ cp{3.2em}p{3.2em}p{3.2em}p{3.2em}p{3.2em}p{3.2em} }
  \toprule
  $\alpha$ &  LFU+ large & LFU+ cascade & LFU+ selector & LEC+ large & LEC+ cascade & LEC+ selector \\ 
  \midrule
  0.2 & \multicolumn{1}{r}{ 4.60 } & \multicolumn{1}{r}{ 7.12 } & \multicolumn{1}{r}{ 5.86 } & \multicolumn{1}{r}{ \textbf{ 4.53 } } & \multicolumn{1}{r}{ 6.54 } & \multicolumn{1}{r}{ 5.73 } \\ 
  0.5 & \multicolumn{1}{r}{ 7.98 } & \multicolumn{1}{r}{ 12.33 } & \multicolumn{1}{r}{ 9.33 } & \multicolumn{1}{r}{ \textbf{ 7.86 } } & \multicolumn{1}{r}{ 11.27 } & \multicolumn{1}{r}{ 9.08 } \\ 
  0.8 & \multicolumn{1}{r}{ 9.31 } & \multicolumn{1}{r}{ 14.38 } & \multicolumn{1}{r}{ 10.64 } & \multicolumn{1}{r}{ \textbf{ 9.19 } } & \multicolumn{1}{r}{ 13.12 } & \multicolumn{1}{r}{ 10.35 } \\ 
  \bottomrule
\end{tabular}
\end{center}
\caption{Latency for online oasst dataset, fastchat-t5 vs vicuna}
\label{tab:online_oasst_fastchat-t5_vicuna_latency}
\end{table}

\begin{table}[ht]
\begin{center}
\begin{tabular}{ cp{3.5em}p{3.2em}p{3.2em}p{3.2em}p{3.2em}p{3.2em}p{3.2em} }
  \toprule
  $\alpha$ & selector accuracy & LFU+ large & LFU+ cascade & LFU+ selector & LEC+ large & LEC+ cascade & LEC+ selector \\ 
  \midrule
  0.2 & \multicolumn{1}{r}{ 0.8 } & \multicolumn{1}{r}{ 0.79 } & \multicolumn{1}{r}{ 0.90 } & \multicolumn{1}{r}{ 0.61 } & \multicolumn{1}{r}{ 0.51 } & \multicolumn{1}{r}{ 0.41 } & \multicolumn{1}{r}{ \textbf{ 0.35 } } \\ 
  0.5 & \multicolumn{1}{r}{ 0.8 } & \multicolumn{1}{r}{ 1.48 } & \multicolumn{1}{r}{ 1.67 } & \multicolumn{1}{r}{ 1.13 } & \multicolumn{1}{r}{ 0.97 } & \multicolumn{1}{r}{ 0.84 } & \multicolumn{1}{r}{ \textbf{ 0.68 } } \\ 
  0.8 & \multicolumn{1}{r}{ 0.8 } & \multicolumn{1}{r}{ 1.75 } & \multicolumn{1}{r}{ 1.97 } & \multicolumn{1}{r}{ 1.34 } & \multicolumn{1}{r}{ 1.13 } & \multicolumn{1}{r}{ 0.99 } & \multicolumn{1}{r}{ \textbf{ 0.79 } } \\ 
  \midrule
  0.2 & \multicolumn{1}{r}{ 1 } & \multicolumn{1}{r}{ 0.79 } & \multicolumn{1}{r}{ 0.90 } & \multicolumn{1}{r}{ 0.45 } & \multicolumn{1}{r}{ 0.51 } & \multicolumn{1}{r}{ 0.41 } & \multicolumn{1}{r}{ \textbf{ 0.23 } } \\ 
  0.5 & \multicolumn{1}{r}{ 1 } & \multicolumn{1}{r}{ 1.48 } & \multicolumn{1}{r}{ 1.67 } & \multicolumn{1}{r}{ 0.84 } & \multicolumn{1}{r}{ 0.97 } & \multicolumn{1}{r}{ 0.84 } & \multicolumn{1}{r}{ \textbf{ 0.46 } } \\ 
  0.8 & \multicolumn{1}{r}{ 1 } & \multicolumn{1}{r}{ 1.75 } & \multicolumn{1}{r}{ 1.97 } & \multicolumn{1}{r}{ 0.99 } & \multicolumn{1}{r}{ 1.13 } & \multicolumn{1}{r}{ 0.99 } & \multicolumn{1}{r}{ \textbf{ 0.54 } } \\ 
  \bottomrule
\end{tabular}
\end{center}
\caption{FLOPs for offline oasst dataset, fastchat-t5 vs vicuna}
\label{tab:offline_oasst_fastchat-t5_vicuna_flops}
\end{table}

\begin{table}[ht]
\begin{center}
\begin{tabular}{ cp{3.5em}p{3.2em}p{3.2em}p{3.2em}p{3.2em}p{3.2em}p{3.2em} }
  \toprule
  $\alpha$ & selector accuracy & LFU+ large & LFU+ cascade & LFU+ selector & LEC+ large & LEC+ cascade & LEC+ selector \\ 
  \midrule
  0.2 & \multicolumn{1}{r}{ 0.8 } & \multicolumn{1}{r}{ 3.55 } & \multicolumn{1}{r}{ 5.49 } & \multicolumn{1}{r}{ 3.42 } & \multicolumn{1}{r}{ 3.41 } & \multicolumn{1}{r}{ 4.43 } & \multicolumn{1}{r}{ \textbf{ 3.15 } } \\ 
  0.5 & \multicolumn{1}{r}{ 0.8 } & \multicolumn{1}{r}{ 6.65 } & \multicolumn{1}{r}{ 10.28 } & \multicolumn{1}{r}{ 6.41 } & \multicolumn{1}{r}{ 6.45 } & \multicolumn{1}{r}{ 8.50 } & \multicolumn{1}{r}{ \textbf{ 5.90 } } \\ 
  0.8 & \multicolumn{1}{r}{ 0.8 } & \multicolumn{1}{r}{ 7.86 } & \multicolumn{1}{r}{ 12.14 } & \multicolumn{1}{r}{ 7.57 } & \multicolumn{1}{r}{ 7.61 } & \multicolumn{1}{r}{ 10.06 } & \multicolumn{1}{r}{ \textbf{ 6.95 } } \\ 
  \midrule
  0.2 & \multicolumn{1}{r}{ 1 } & \multicolumn{1}{r}{ 3.55 } & \multicolumn{1}{r}{ 5.49 } & \multicolumn{1}{r}{ 2.69 } & \multicolumn{1}{r}{ 3.41 } & \multicolumn{1}{r}{ 4.43 } & \multicolumn{1}{r}{ \textbf{ 2.40 } } \\ 
  0.5 & \multicolumn{1}{r}{ 1 } & \multicolumn{1}{r}{ 6.65 } & \multicolumn{1}{r}{ 10.28 } & \multicolumn{1}{r}{ 5.04 } & \multicolumn{1}{r}{ 6.45 } & \multicolumn{1}{r}{ 8.50 } & \multicolumn{1}{r}{ \textbf{ 4.59 } } \\ 
  0.8 & \multicolumn{1}{r}{ 1 } & \multicolumn{1}{r}{ 7.86 } & \multicolumn{1}{r}{ 12.14 } & \multicolumn{1}{r}{ 5.95 } & \multicolumn{1}{r}{ 7.61 } & \multicolumn{1}{r}{ 10.06 } & \multicolumn{1}{r}{ \textbf{ 5.44 } } \\ 
  \bottomrule
\end{tabular}
\end{center}
\caption{Latency for offline oasst dataset, fastchat-t5 vs vicuna}
\label{tab:offline_oasst_fastchat-t5_vicuna_latency}
\end{table}


\begin{table}[ht]
\begin{center}
\begin{tabular}{ cp{3.2em}p{3.2em}p{3.2em}p{3.2em}p{3.2em}p{3.2em} }
  \toprule
  $\alpha$ &  LFU+ large & LFU+ cascade & LFU+ selector & LEC+ large & LEC+ cascade & LEC+ selector \\ 
  \midrule
  0.2 & \multicolumn{1}{r}{ 2.50 } & \multicolumn{1}{r}{ 2.66 } & \multicolumn{1}{r}{ \textbf{ 1.72 } } & \multicolumn{1}{r}{ 2.50 } & \multicolumn{1}{r}{ 2.66 } & \multicolumn{1}{r}{ \textbf{ 1.72 } } \\ 
  0.5 & \multicolumn{1}{r}{ 2.36 } & \multicolumn{1}{r}{ 2.63 } & \multicolumn{1}{r}{ \textbf{ 1.84 } } & \multicolumn{1}{r}{ 2.36 } & \multicolumn{1}{r}{ 2.63 } & \multicolumn{1}{r}{ \textbf{ 1.84 } } \\ 
  0.8 & \multicolumn{1}{r}{ 2.34 } & \multicolumn{1}{r}{ 2.64 } & \multicolumn{1}{r}{ \textbf{ 1.88 } } & \multicolumn{1}{r}{ 2.34 } & \multicolumn{1}{r}{ 2.64 } & \multicolumn{1}{r}{ \textbf{ 1.88 } } \\ 
  \bottomrule
\end{tabular}
\end{center}
\caption{FLOPs for online oasst dataset, fastchat-t5 vs vicuna}
\label{tab:online_oasst_fastchat-t5_vicuna_flops}
\end{table}

\begin{table}[ht]
\begin{center}
\begin{tabular}{ cp{3.2em}p{3.2em}p{3.2em}p{3.2em}p{3.2em}p{3.2em} }
  \toprule
  $\alpha$ &  LFU+ large & LFU+ cascade & LFU+ selector & LEC+ large & LEC+ cascade & LEC+ selector \\ 
  \midrule
  0.2 & \multicolumn{1}{r}{ 10.45 } & \multicolumn{1}{r}{ 16.08 } & \multicolumn{1}{r}{ \textbf{ 10.37 } } & \multicolumn{1}{r}{ 10.45 } & \multicolumn{1}{r}{ 16.08 } & \multicolumn{1}{r}{ \textbf{ 10.37 } } \\ 
  0.5 & \multicolumn{1}{r}{ \textbf{ 10.48 } } & \multicolumn{1}{r}{ 16.18 } & \multicolumn{1}{r}{ 11.25 } & \multicolumn{1}{r}{ \textbf{ 10.48 } } & \multicolumn{1}{r}{ 16.18 } & \multicolumn{1}{r}{ 11.25 } \\ 
  0.8 & \multicolumn{1}{r}{ \textbf{ 10.48 } } & \multicolumn{1}{r}{ 16.19 } & \multicolumn{1}{r}{ 11.54 } & \multicolumn{1}{r}{ \textbf{ 10.48 } } & \multicolumn{1}{r}{ 16.19 } & \multicolumn{1}{r}{ 11.54 } \\ 
  \bottomrule
\end{tabular}
\end{center}
\caption{Latency for online oasst dataset, fastchat-t5 vs vicuna}
\label{tab:online_oasst_fastchat-t5_vicuna_latency}
\end{table}

\begin{table}[ht]
\begin{center}
\begin{tabular}{ cp{3.5em}p{3.2em}p{3.2em}p{3.2em}p{3.2em}p{3.2em}p{3.2em} }
  \toprule
  $\alpha$ & selector accuracy & LFU+ large & LFU+ cascade & LFU+ selector & LEC+ large & LEC+ cascade & LEC+ selector \\ 
  \midrule
  0.2 & \multicolumn{1}{r}{ 0.8 } & \multicolumn{1}{r}{ 2.50 } & \multicolumn{1}{r}{ 2.66 } & \multicolumn{1}{r}{ \textbf{ 1.84 } } & \multicolumn{1}{r}{ 2.50 } & \multicolumn{1}{r}{ 2.66 } & \multicolumn{1}{r}{ \textbf{ 1.84 } } \\ 
  0.5 & \multicolumn{1}{r}{ 0.8 } & \multicolumn{1}{r}{ 2.36 } & \multicolumn{1}{r}{ 2.63 } & \multicolumn{1}{r}{ \textbf{ 1.79 } } & \multicolumn{1}{r}{ 2.36 } & \multicolumn{1}{r}{ 2.63 } & \multicolumn{1}{r}{ \textbf{ 1.79 } } \\ 
  0.8 & \multicolumn{1}{r}{ 0.8 } & \multicolumn{1}{r}{ 2.34 } & \multicolumn{1}{r}{ 2.64 } & \multicolumn{1}{r}{ \textbf{ 1.79 } } & \multicolumn{1}{r}{ 2.34 } & \multicolumn{1}{r}{ 2.64 } & \multicolumn{1}{r}{ \textbf{ 1.79 } } \\ 
  \midrule
  0.2 & \multicolumn{1}{r}{ 1 } & \multicolumn{1}{r}{ 2.50 } & \multicolumn{1}{r}{ 2.66 } & \multicolumn{1}{r}{ \textbf{ 1.35 } } & \multicolumn{1}{r}{ 2.50 } & \multicolumn{1}{r}{ 2.66 } & \multicolumn{1}{r}{ \textbf{ 1.35 } } \\ 
  0.5 & \multicolumn{1}{r}{ 1 } & \multicolumn{1}{r}{ 2.36 } & \multicolumn{1}{r}{ 2.63 } & \multicolumn{1}{r}{ \textbf{ 1.32 } } & \multicolumn{1}{r}{ 2.36 } & \multicolumn{1}{r}{ 2.63 } & \multicolumn{1}{r}{ \textbf{ 1.32 } } \\ 
  0.8 & \multicolumn{1}{r}{ 1 } & \multicolumn{1}{r}{ 2.34 } & \multicolumn{1}{r}{ 2.64 } & \multicolumn{1}{r}{ \textbf{ 1.32 } } & \multicolumn{1}{r}{ 2.34 } & \multicolumn{1}{r}{ 2.64 } & \multicolumn{1}{r}{ \textbf{ 1.32 } } \\ 
  \bottomrule
\end{tabular}
\end{center}
\caption{FLOPs for offline oasst dataset, fastchat-t5 vs vicuna}
\label{tab:offline_oasst_fastchat-t5_vicuna_flops}
\end{table}

\begin{table}[ht]
\begin{center}
\begin{tabular}{ cp{3.5em}p{3.2em}p{3.2em}p{3.2em}p{3.2em}p{3.2em}p{3.2em} }
  \toprule
  $\alpha$ & selector accuracy & LFU+ large & LFU+ cascade & LFU+ selector & LEC+ large & LEC+ cascade & LEC+ selector \\ 
  \midrule
  0.2 & \multicolumn{1}{r}{ 0.8 } & \multicolumn{1}{r}{ 10.45 } & \multicolumn{1}{r}{ 16.08 } & \multicolumn{1}{r}{ \textbf{ 10.05 } } & \multicolumn{1}{r}{ 10.45 } & \multicolumn{1}{r}{ 16.08 } & \multicolumn{1}{r}{ \textbf{ 10.05 } } \\ 
  0.5 & \multicolumn{1}{r}{ 0.8 } & \multicolumn{1}{r}{ 10.48 } & \multicolumn{1}{r}{ 16.18 } & \multicolumn{1}{r}{ \textbf{ 10.09 } } & \multicolumn{1}{r}{ 10.48 } & \multicolumn{1}{r}{ 16.18 } & \multicolumn{1}{r}{ \textbf{ 10.09 } } \\ 
  0.8 & \multicolumn{1}{r}{ 0.8 } & \multicolumn{1}{r}{ 10.48 } & \multicolumn{1}{r}{ 16.19 } & \multicolumn{1}{r}{ \textbf{ 10.10 } } & \multicolumn{1}{r}{ 10.48 } & \multicolumn{1}{r}{ 16.19 } & \multicolumn{1}{r}{ \textbf{ 10.10 } } \\ 
  \midrule
  0.2 & \multicolumn{1}{r}{ 1 } & \multicolumn{1}{r}{ 10.45 } & \multicolumn{1}{r}{ 16.08 } & \multicolumn{1}{r}{ \textbf{ 7.91 } } & \multicolumn{1}{r}{ 10.45 } & \multicolumn{1}{r}{ 16.08 } & \multicolumn{1}{r}{ \textbf{ 7.91 } } \\ 
  0.5 & \multicolumn{1}{r}{ 1 } & \multicolumn{1}{r}{ 10.48 } & \multicolumn{1}{r}{ 16.18 } & \multicolumn{1}{r}{ \textbf{ 7.94 } } & \multicolumn{1}{r}{ 10.48 } & \multicolumn{1}{r}{ 16.18 } & \multicolumn{1}{r}{ \textbf{ 7.94 } } \\ 
  0.8 & \multicolumn{1}{r}{ 1 } & \multicolumn{1}{r}{ 10.48 } & \multicolumn{1}{r}{ 16.19 } & \multicolumn{1}{r}{ \textbf{ 7.94 } } & \multicolumn{1}{r}{ 10.48 } & \multicolumn{1}{r}{ 16.19 } & \multicolumn{1}{r}{ \textbf{ 7.94 } } \\ 
  \bottomrule
\end{tabular}
\end{center}
\caption{Latency for offline oasst dataset, fastchat-t5 vs vicuna}
\label{tab:offline_oasst_fastchat-t5_vicuna_latency}
\end{table}

\end{document}